\definecolor{dark_blue}{HTML}{1f77b4}
\definecolor{light_blue}{HTML}{aec7e8}
\definecolor{dark_orange}{HTML}{ff7f0e}
\definecolor{light_orange}{HTML}{ffbb78}
\definecolor{dark_green}{HTML}{2ca02c}
\definecolor{light_green}{HTML}{98df8a}
\definecolor{dark_red}{HTML}{d62728}
\definecolor{light_red}{HTML}{ff9896}
\definecolor{dark_purple}{HTML}{9467bd}
\definecolor{light_purple}{HTML}{c5b0d5}
\definecolor{dark_brown}{HTML}{8c564b}
\definecolor{light_brown}{HTML}{c49c94}
\definecolor{dark_pink}{HTML}{e377c2}
\definecolor{light_pink}{HTML}{f7b6d2}
\definecolor{dark_gray}{HTML}{7f7f7f}
\definecolor{light_gray}{HTML}{c7c7c7}
\definecolor{dark_olive}{HTML}{bcbd22}
\definecolor{light_olive}{HTML}{dbdb8d}
\definecolor{dark_cyan}{HTML}{17becf}
\definecolor{light_cyan}{HTML}{9edae5}
\definecolor{white}{HTML}{FFFFFF}
\definecolor{black}{HTML}{000000}
 \tikzset{
     mynode/.style={circle, draw=black, fill=light_gray, minimum size=5mm, inner sep=0pt, font=\scriptsize},
     rednode/.style={mynode, fill=light_red},
     bluenode/.style={mynode, fill=light_blue},
     greennode/.style={mynode, fill=light_green},
     orangenode/.style={mynode, fill=light_orange},
 }
\newcommand{\PreserveBackslash}[1]{\let\temp=\\#1\let\\=\temp}
\newcolumntype{C}[1]{>{\PreserveBackslash\centering}p{#1}}
\newcolumntype{R}[1]{>{\PreserveBackslash\raggedleft}p{#1}}
\newcolumntype{L}[1]{>{\PreserveBackslash\raggedright}p{#1}}
\newtheorem{theorem}{Theorem}
\newtheorem{definition}{Definition}
\newtheorem{proposition}{Proposition}
\newtheorem{assumption}{Assumption}
\newcommand{\review}[1]{{\color{red} R: #1}}
\newcommand{\answer}[1]{{\color{orange} ANS: #1}}
\newcommand{\llf}[1]{{\color{teal} LLF: #1}}
\newcommand{\resolution}{\gamma} 
\newcommand{\difficulty}{\lambda}
\newcommand{\noise}{\eta}
\newcommand{\numer}{b}
\newcommand{\denom}{c}
\newcommand{\graph}{G}
\newcommand{\vertices}{V}
\newcommand{\edges}{E}
\newcommand{\nvertices}{n}
\newcommand{\medges}{m}
\newcommand{\anode}{i} 
\newcommand{\bnode}{j} 
\newcommand{\partition}{\pi}
\newcommand{\community}{\mathcal{C}}
\newcommand{\communityAlt}{\mathcal{S}}
\newcommand{\nodecomm}{\sigma}
\newcommand{\potentialNode}{\varphi}
\newcommand{\potentialCoalition}{\phi}
\newcommand{\potentialPartition}{\Phi}
\newcommand{\clusterA}{\mathcal{A}}
\newcommand{\clusterB}{\mathcal{B}}
\newcommand{\degriA}{d_{\anode}^{\clusterA}}
\newcommand{\degriB}{d_{\anode}^{\clusterB}}
\newcommand{\cdegriA}{\hat{d}_{\anode}^{\clusterA}}
\newcommand{\cdegriB}{\hat{d}_{\anode}^{\clusterB}}
\newcommand{\degrik}{d_{\anode}^k}
\newcommand{\cdegrik}{\hat{d}_{\anode}^k}
\newcommand{\accuracy}{\mathbb{A}}
\newcommand{\robustness}{R}
\journal{Physica A: Statistical Mechanics and its Applications}
\begin{document}

\begin{frontmatter}

\author[label1]{Lucas Lopes Felipe}
\author[label2]{Konstantin Avrachenkov}
\author[label1]{Daniel Sadoc Menasché}
\affiliation[label1]{organization={Federal University of Rio de Janeiro (UFRJ)},
            city={Rio de Janeiro},
            country={Brazil}}
\affiliation[label2]{organization={National Institute for Research in Digital Science and Technology (Inria)},
            city={Sophia Antipolis},
            country={France}}

\title{From Leiden to Pleasure Island: The Constant Potts Model for\\Community Detection as a Hedonic Game}

\begin{abstract}
Community detection is   one of the fundamental problems in data science which consists of partitioning nodes into disjoint communities.  We present a game-theoretic perspective on the Constant Potts Model (CPM) for partitioning networks into disjoint communities, emphasizing its efficiency, robustness, and accuracy.
Efficiency: We reinterpret CPM as a potential hedonic game by decomposing its global Hamiltonian into local utility functions, where  the  local utility gain of each agent  matches the corresponding increase in   global utility. Leveraging this equivalence, we prove that local optimization of the CPM objective via better-response dynamics converges in pseudo-polynomial time to an equilibrium partition.  
Robustness: We introduce and relate two stability criteria: a strict criterion based on a novel notion of robustness, requiring nodes to simultaneously maximize neighbors and minimize non-neighbors within communities, and a relaxed utility function based on a weighted sum of these objectives, controlled by a resolution parameter. 
Accuracy: In community tracking scenarios, where initial partitions are used to bootstrap the Leiden algorithm with partial ground-truth information, our experiments reveal that robust partitions yield higher accuracy in recovering ground-truth communities.
\end{abstract}

\begin{keyword}
Network Partitioning \sep
Community Detection \sep
Constant Potts Model \sep
Leiden Algorithm \sep
Computational Social Choice \sep
Hedonic Games
\end{keyword}

\end{frontmatter}

\section{Introduction}

Clustering is a fundamental unsupervised machine learning technique for grouping similar data points without relying on pre-existing labels. Its utility spans a wide array of applications~\cite{shen2013community}. 
However, the power of clustering is matched by its inherent complexity~\cite{boccaletti2006complex}. The notion of ``similarity'' is often context-dependent, and different algorithms can produce multiple, distinct, yet equally valid groupings. 

Community detection extends the clustering paradigm to network data, where the goal is to partition the nodes of a graph into disjoint communities based on their connectivity patterns. This approach is useful for any problem that can be modeled as a graph where the objective is to split nodes into non-overlapping groups. The spectrum of possible partitions is vast, ranging from the \textit{grand coalition}, where all nodes form a single community, to the \textit{singleton partition}, where each vertex constitutes its own community. 

To visualize this enormous search space, consider the graph in Figure~\ref{fig:metagraph}(a)  and its corresponding \textit{metagraph} in Figure~\ref{fig:metagraph}(b). In the metagraph, each metavertex represents a unique partition of the original network, and an edge connects two partitions
if one can be reached from the other by a unilateral move wherein a single vertex of the original graph changes its community.%
\footnote{
We consider only sequential, unilateral moves, excluding simultaneous-move scenarios to simplify the model and focus on convergence.
}
Each such edge represents a decision for that moving vertex, whose choice is governed by a trade-off between two competing objectives:
\begin{itemize}
  \item {Maximize ``\emph{friends}'' within the community}: 
  choose the community with the highest number of neighbors, i.e., direct edges to the agent.
  \item {Minimize ``\emph{strangers}'' within the community}: 
  choose the community with the fewest number of nodes to which the agent is not directly connected.
\end{itemize}
These goals align with the widely accepted notion of internal cohesion and external sparsity in community detection. Increasing the number of intra-community edges and decreasing the presence of non-neighbors strengthens the community’s internal density.

\begin{figure}[tbp]
    \centering
    \begin{tabular}{@{}p{0.2\linewidth}@{}p{0.8\linewidth}@{}}
         \raisebox{\height}{
        \begin{minipage}[t]{\linewidth}
            \centering
            \begin{tikzpicture}[every node/.style={circle,draw,fill=light_cyan,minimum size=3mm}, node distance=1cm]
                \node (0) at (90:1cm) {0};
                \node (1) at (0:1cm) {1};
                \node (2) at (270:1cm) {2};
                \node (3) at (180:1cm) {3};

                \draw (0) -- (1);
                \draw (0) -- (2);
                \draw (0) -- (3);
                \draw (1) -- (2);
            \end{tikzpicture}
        \end{minipage} }
        &
        \begin{minipage}[t]{\linewidth}
            \centering
            \includegraphics[width=\linewidth]{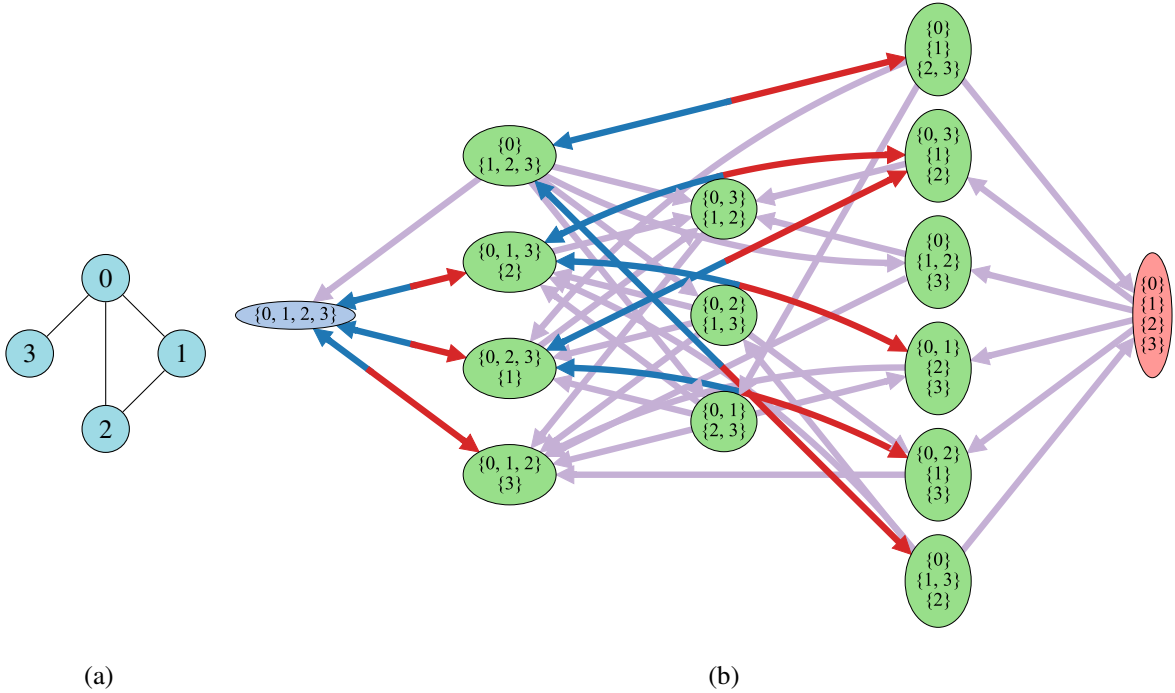}
        \end{minipage}
        \\
        \centering (a) & \centering (b) \\
    \end{tabular}
    \caption{(a) Illustrative graph and (b) corresponding  metagraph of partitions  (see~\ref{sec:metagraph} for details). The metagraph represents each unique partition of the original network as a vertex. For instance, the blue node represents the grand coalition, while the red node represents the singleton partition. Edges connect partitions that are reachable from each other by a single vertex move.
    \textbf{Unidirectional (purple) edges} denote clear preferences consistent with the decision tree in Figure~\ref{fig:tree}.
    \textbf{Bidirectional (red/blue) edges} highlight frustrated choices, where the \textit{Familiarity Index} (Eq.~\eqref{eq:familiarity_index}) quantifies the trade-off between gaining friends (blue path) and avoiding strangers (red path).}
    \label{fig:metagraph}
\end{figure}

Figure~\ref{fig:tree} illustrates a decision tree for a vertex making this comparison. Cases leading to purple boxes indicate a clear preference, translating into a unidirectional edge in the metagraph (Figure~\ref{fig:metagraph}(b)).
Inspired by the concept of ``frustration'' in the Ising model~\cite{brush1967ising}, we describe a choice as ``frustrated'' when the preferences of an agent for different communities are in conflict. This occurs, for instance, when one candidate community offers more friends while another offers fewer strangers. In such scenarios, a node cannot simultaneously optimize both objectives, and the move represents a \textbf{frustrated choice}, depicted by bidirectional edges in the metagraph.

Most community detection algorithms agree on outcomes where a clear preference exists (the purple boxes in Figure~\ref{fig:tree}). They diverge, however, in ``frustrated'' cases. Some algorithms may prioritize a community with more friends, while others might favor one with fewer strangers. This distinction gives rise to two perspectives on the problem: the \textit{robustness} perspective and the \textit{resolution} perspective.

\begin{figure}
    \centering
    \includegraphics[width=\textwidth]{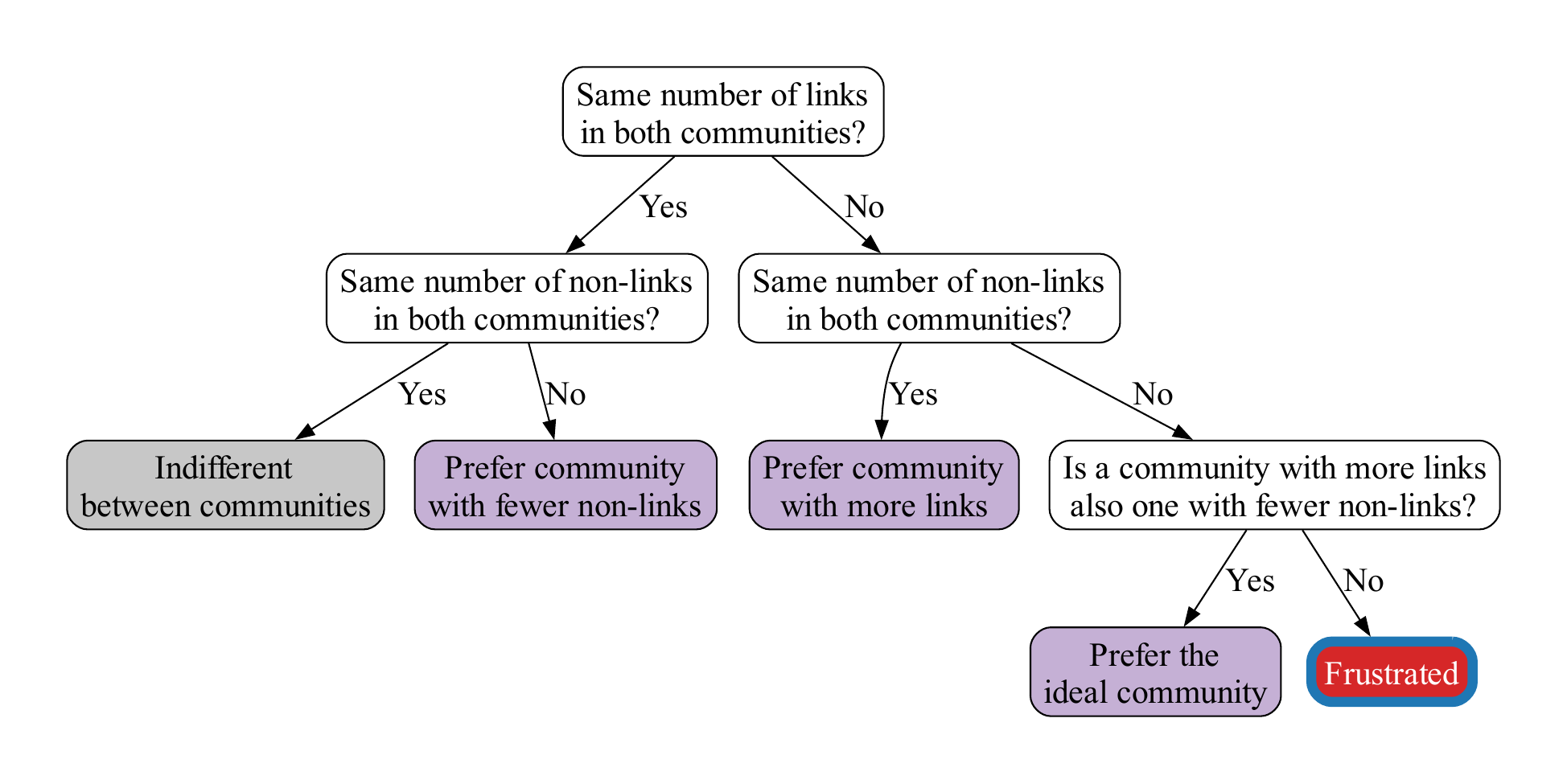}
    \captionof{figure}{Decision tree from a node's perspective when comparing two communities.
    Purple boxes indicate unambiguous choices, where one partition is clearly preferred over another because the move improves at least one metric (friends/strangers) without worsening the other;
    The red box with blue border highlights frustrated choices, where a move to a different community involves a trade-off: one option offers more friends but also more strangers, while the other offers the reverse.}
    \label{fig:tree}
\end{figure}

\paragraph{\textbf{Robustness perspective}} 
It focuses on a metric that quantifies scenarios where a clear preference exists. In this view, a node's stability determines its classification. A node is considered \textbf{robust} if its current community is unambiguously preferred over all other communities reachable through a unilateral move. This means its community simultaneously offers at least as many ``friends'' (neighbors) and no more ``strangers'' (non-neighbors) compared to any other available option. Conversely, any node that does not meet this strict criterion is considered non-robust, indicating it may have an incentive to move.

A node's classification is therefore based on pairwise comparisons between its current community and every other available option, following the logic depicted in Figure 2. Another way to conceptualize this choice is to imagine that each node holds two ranked lists of communities: one ordered by the number of friends gained and the other by the number of strangers avoided. A node is robust if the top-ranked community is the same on both lists. While this analogy is intuitive, a more rigorous definition is needed to handle potential ties for the top rank.

Formally, for a graph $\graph = (\vertices, \edges)$ with $\nvertices = |\vertices|$ nodes and $\medges = |\edges|$ edges, a community (or coalition) is a set $\community$ of nodes, $\community \subseteq \vertices$ (see  notation in~\ref{sec:notation}, Table~\ref{tab:notation}). A community detection algorithm partitions the network into subsets $\partition = \{ \community_1,\community_{2},\dots,\community_K \}$ such that $\community_{k} \cap \community_{\ell} = \emptyset$, for all $k,\ell \in \{1, \ldots, K\}$ where ${k} \neq {\ell}$, and $\cup_{k=1}^K \community_k = \vertices$. 

In this work, unless otherwise noted, we work under the following assumption on the number of communities:
\begin{assumption}[Number of  Communities]
The number of communities $K$ is fixed and given. 
\label{ass:mainassumption}
\end{assumption}
 Noting that some communities may be empty, the maximum number of empty communities is $K-1$. 
 For a given partition $\partition$ of $K$ communities, the \textbf{Robustness} of $\partition$ is the fraction of robust nodes:
\begin{equation}
    \robustness(\graph, \partition) = \frac{1}{\nvertices} \sum_{\anode \in \vertices} R(\anode, \partition) \quad \text{where} \quad R(i,\partition)= \begin{cases} 
    1 & \text{if } d^{\nodecomm_{\anode}}_{\anode} \geq d^{k}_{\anode} \text{ and } \hat{d}^{\nodecomm_{\anode}}_{\anode} \leq \hat{d}^{k}_{\anode} \ \forall k \\
    0 & \text{otherwise}
    \end{cases}.
    \label{eq:robustness}
\end{equation}
Here, $d^{k}_\anode$ and $\hat{d}^{k}_\anode$ denote the number of neighbors and non-neighbors of node $i$ in community $\community_k$, and $\sigma_i$ is the index of node $i$ current community. 

\paragraph{\textbf{Familiarity Index}} We derive a social metric to quantify the possible trade-off when a node $\anode$ compares community $\community_A$ to community $\community_B$.
The intuition is to measure what proportion of the total change in community composition is attributable to the change in friends.
\begin{enumerate}
    \item \textbf{Friend-Gain:} The net change in friends is $\Delta d = d^{\community_B}_{\anode} - d^{\community_A}_{\anode}$;
    \item \textbf{Total Change:} The total change in perceived community size is the sum of the change in friends and strangers: $\Delta d + \Delta \hat{d}$, where $\Delta \hat{d} = \hat{d}^{\community_B}_{\anode} - \hat{d}^{\community_A}_{\anode}$.
\end{enumerate}
The Familiarity Index, $F_\anode(\community_A, \community_B)$, is the ratio of the friend-gain to this total change:
\begin{equation}
    F_\anode(\community_A, \community_B) = \frac{\Delta d}{\Delta d + \Delta \hat{d}}.
    \label{eq:familiarity_index}
\end{equation}
This index reveals the nature of a node's decision.  
Given a node $i$ in $\community_A$, we  consider two cases:
\begin{itemize}
    \item If $F_\anode(\community_A, \community_B) \geq 1$ or $F_\anode(\community_A, \community_B) \leq 0$, the choice is clear, corresponding to unidirectional edges in Figure~\ref{fig:metagraph}(b) and  purple boxes in Figure~\ref{fig:tree}.
    \item If $0<F_{i} (\community_{A},\community_{B})<1$, the decision to move to $\community_{B}$ is a  frustrated choice: the node must balance an increase in friends $(\Delta d>0)$ with an increase in strangers $(\Delta\hat{d}>0)$, corresponding to bidirectional edges in Figure~\ref{fig:metagraph}(b), where the Familiarity Index acts as the decision's splitting point (see  red  box in Figure~\ref{fig:tree}). 
\end{itemize}  
In what follows, we relate the proposed Familiarity Index to the well-known concept of resolution, highlighting its role in addressing the fundamental trade-off between friends and strangers.

\paragraph{\textbf{Resolution perspective}}
It aims to resolve these trade-offs by combining the two \emph{max-min} objectives into a single quality function, where a move is beneficial if the quality gain is positive.
Following the literature on the Constant Potts Model (CPM)~\cite{traag2011cpm}, the quality, also referred to in the literature as weight, or value, of a node pair $(\anode,\bnode)$ is defined by its resolution parameter, $\resolution$, as
\begin{equation}
    v_{\anode\bnode} = (A_{\anode\bnode} - \resolution)\,\delta(\nodecomm_{\anode}= \nodecomm_{\bnode}),
    \label{eq:pair_value}
\end{equation}
where $A_{\anode\bnode}$ is the entry in line $\anode$ and column $\bnode$ of the adjacency matrix, the Kronecker delta $\delta(\nodecomm_{\anode}= \nodecomm_{\bnode})=1$ if $\anode$ and $\bnode$ are assigned to the same community (and $0$ otherwise), and $0 \leq \resolution \leq 1$.
To simplify notation, we set $A_{\anode\anode}=\resolution$. 
Then, 
\begin{equation}
    v_{\anode\bnode}= 
    \begin{cases}
    1 - \resolution, & \text{if } A_{\anode\bnode}=1 \textrm{ and } \sigma_{\anode} = \sigma_{\bnode} \;(\text{neighbors or ``\emph{friends}'' within community}),\\
    - \resolution, & \text{if } A_{\anode\bnode}=0 \textrm{ and } \sigma_{\anode} = \sigma_{\bnode}  \;(\text{non-neighbors  or ``\emph{strangers}'' within community}),\\
    0, & \text{otherwise.}
    \end{cases}
    \label{eq:pair_value_detailed}
\end{equation}
When $\resolution = 0$, one ignores non-neighbor penalties, favoring one large ``grand coalition.'' In contrast, when $\resolution = 1$, non-neighbor penalties dominate, favoring singleton partitions.
This way, the resolution parameter controls cluster granularity, e.g., enabling the detection of small communities within large networks.


Robust nodes have no incentive to deviate for any $\resolution \in [0,1]$. However, for a non-robust node, a move to a different community may represent a  frustrated choice. The decision for such a move is governed by comparing the Familiarity Index to $\resolution$,
where a move is preferred if:
\begin{itemize}
    \item $\resolution < F_\anode$, the node prioritizes the community with \textit{more friends}, following the blue arrows in Figure~\ref{fig:metagraph};
    \item $\resolution > F_\anode$, the node prioritizes the community with \textit{fewer strangers}, following  the red arrows in Figure~\ref{fig:metagraph}.
\end{itemize}
The parameter $\resolution$ thus directly controls how to deal with  frustrated  choices.

The quality (or potential) of a node $i$ in community $\community_k$ is the sum of the qualities of the node pairs $v_{ij}$ for all $j \in \community_k$, and  the quality of a community and of  a partition  can be similarly defined.  
The introduction of a consistent quality to each node, such as the one derived from CPM, imposes a clear direction on all possible   moves in the metagraph, turning it into a directed acyclic graph (DAG) whose sinks represent local optima on the quality function.

This formulation naturally suggests a simple \textit{better-response dynamic}: starting from an arbitrary partition, an algorithm iteratively allows a self-interested node to make a selfish move to a neighboring partition if doing so improves its individual utility, continuing until a sink in the metagraph is reached where no node has an incentive to move.%
\footnote{Cycles of partitions are impossible in the setup considered in this paper, which corresponds to an additive potential game. In such game,   each selfish move strictly increases the global  potential, making a return to a previous partition logically contradictory,  unlike in other social choice models where the Condorcet paradox can occur~\cite{gehrlein2006condorcet}.} 



Among our contributions, we will show that under the formulation introduced in Sections~\ref{sec:literature_review} and~\ref{sec:analytical_contributions} the considered better-response dynamic always converges in pseudo-polynomial time to a local optimum that is  a candidate solution to the community detection problem.  

\paragraph{\textbf{Bridging robustness and resolution}}
The robustness and resolution perspectives instantiate  two ways of handling the \emph{max-min} nature of the community detection problem:
\begin{itemize}
    \item A \emph{robustness-centric} approach requires that only a fraction of the nodes satisfy the strict dual criterion (Eq.~\eqref{eq:robustness}), leaving other nodes unconstrained;
    \item A \emph{resolution-centric} approach requires \emph{all} nodes to be at a local optimum for some $\resolution \in [0,1]$ (Eq.~\eqref{eq:pair_value}), but relaxes the emphasis on strict max-min satisfaction for each individual node.
\end{itemize}
In this work, we analytically relate these approaches and address the convergence concerns for the CPM~\cite{traag2011cpm}, a well-established quality function from statistical physics that serves as the default objective for state-of-the-art optimizers like the Leiden algorithm~\cite{traag2019leiden}.
We employ a \emph{resolution-centric approach} to find a local optimal partition. To do so, we set the resolution parameter $\resolution$ heuristically to the \emph{edge density of the graph}, following a recommendation from~\cite{avrachenkov2017cooperative}. Once a local optimum partition is identified, we then evaluate its \emph{robustness}. This indicates the fraction of optimal nodes that would remain stable, even if we were to vary the resolution from the value at which the partition was found.

\paragraph{\textbf{Contributions}}  We summarize the key contributions of this paper as follows:
\begin{itemize}
    \item \textbf{Game-theoretic reinterpretation of CPM.} 
    We reinterpret the  CPM as a potential hedonic game, where each vertex is modeled as a self-interested agent maximizing its utility~\cite{avrachenkov2017cooperative, brandt2016handbook, dreze1980hedonic}.%
    \footnote{The term ``hedonic'' originates from the Ancient Greek \textit{hēdonikos} (pleasurable) and \textit{hēdonē} (pleasure). In hedonic games, agents greedily maximize satisfaction based on a utility function. The title alludes to both Pinocchio’s Pleasure Island and Leiden, the city where the algorithm was developed~\cite{traag2011cpm, traag2019leiden}.} 
    This framing allows the CPM quality function to be viewed as a sum of individual utilities, providing a game-theoretic lens for community detection that computationally   scales under  distributed computation.

    \item \textbf{Convergence guarantees for better-response dynamics.} 
    We prove that better-response dynamics over the CPM objective (Eq.~\eqref{eq:cpm}) converge in pseudo-polynomial time to an equilibrium (Definition~\ref{def:solution}), where no agent has an incentive to deviate (Theorem~\ref{THEO:EQUILIBRIUM}). The convergence is guaranteed by bounding the quality improvement at each step and leveraging the structure of the CPM Hamiltonian.

    \item \textbf{Theoretical characterization of robustness.} 
    We introduce a robustness measure (Eq.~\eqref{eq:robustness}) and establish formal connections between equilibrium partitions and robustness properties. These results (Theorems~\ref{THEO:RANGE_EQ} and~\ref{THEO:ROBUST_EQ}) link robustness (Eq.~\eqref{eq:robustness}) to the resolution parameter $\resolution$. 

    \item \textbf{Empirical evaluation of efficiency, robustness, and accuracy.} 
    We conduct numerical experiments on synthetic networks with ground-truth communities to evaluate three fundamental aspects of community detection: \textit{efficiency} (answering ``how long does it take, in the worst case, to reach an equilibrium partition?"); \textit{robustness} (evaluating ``how resilient is a partition to parameter changes?" using Eq.~\eqref{eq:robustness}); and \textit{accuracy} (determining ``how close is a partition to the ground truth?" via the \emph{Adjusted Rand index (ARI)}~\cite{hubert1985comparing}). We analyze these for both the ground truth and partitions found by algorithms such as the Leiden algorithm~\cite{traag2019leiden}.
\end{itemize}

\paragraph{\textbf{Outline}} 
The remainder of this paper is organized as follows.  {Section~\ref{sec:literature_review}} introduces the CPM and its reinterpretation as a hedonic game, situating our work within the broader literature on coalitional game theory and community detection. Then,  Section~\ref{sec:analytical_contributions} presents our analytical results, including convergence proofs for better-response dynamics and the theoretical connection between robustness and resolution.
 Section~\ref{sec:empirical_analysis}  reports our experimental methodology and results, addressing the efficiency, robustness, and accuracy of detected partitions.
Finally,  Section~\ref{sec:conclusion} concludes  with  broader implications and directions for future work.

\section{Background and Literature Review}
\label{sec:literature_review}

Complex systems, ranging from biological ecosystems to social and technological networks, are composed of interacting components whose collective behavior often exhibits emergent properties not evident at the individual level. Understanding how such interactions give rise to functional organization is a central challenge, and network science provides a powerful analytical framework for this purpose~\cite{dorogovtsev2003evolution}. Within this framework, the detection and analysis of \textit{community structure} plays a pivotal role, as it reveals modular organization, supports efficient information flow, and helps mitigate systemic risks such as cascading failures~\cite{shen2013community}.

The foundational work of Girvan and Newman~\cite{girvan2002community} marked a turning point in the algorithmic study of community detection, inspiring a surge of methods that bridge theoretical insights with real-world applications. Despite this progress, community detection remains a computationally challenging problem. Impossibility results in synthetic benchmarks~\cite{papadimitriou2007complexity, massoulie2014community, young2018sbm, xu2020optimal} have demonstrated inherent limits to accurate recovery, especially in sparse or noisy networks.

To navigate these challenges, diverse algorithmic strategies have been proposed. Some prioritize efficiency and scalability, such as the widely-used Louvain~\cite{blondel2024louvain} and Leiden~\cite{traag2019leiden} algorithms, which optimize modularity-based objectives. Others take a more theoretical stance, employing asymptotic analysis under generative models like the Stochastic Block Model (SBM)~\cite{abbe2015exact, xu2020optimal, lancichinetti2012consensus}. In this context, polynomial-time methods developed by Hajek and colleagues~\cite{hajek2019community, hajek2018recovering} provide valuable insights into the limits of tractability.

In parallel, bioinspired heuristics have emerged as a compelling alternative, offering adaptive and distributed strategies for uncovering community structures~\cite{saoud2019networks, saoud2023nature, saoud2024community}. Another fruitful direction is the application of maximum likelihood principles~\cite{abbe2015exact, mazalov2018comparing, prokhorenkova2019community}, which frame community detection as a probabilistic inference problem.

More recently, game-theoretic approaches have gained momentum, particularly those grounded in cooperative game theory. Hedonic games~\cite{avrachenkov2017cooperative} model community formation as a process driven by individual preferences over groupings, offering a principled lens through which to analyze equilibrium formation, stability, and agent incentives.

To establish the theoretical foundations for our contributions, this section is organized into two conceptual stages. Section~\ref{sec:from_ising_to_cpm} revisits the development of quality functions for community detection, tracing their origins in statistical physics, from the Ising model~\cite{brush1967ising} to the CPM. The CPM stands out for addressing the resolution-limit problem and enabling locally consistent communities, making it a cornerstone of our approach.

Then, in Section~\ref{sec:from_cpm_to_hedonic}, we reformulate the CPM objective using the language of Hedonic Game Theory. By decomposing the global Hamiltonian into agent-level utility functions, we demonstrate that CPM-induced community formation constitutes a potential game, in which each agent’s local decision aligns with global optimization. This reinterpretation forms the basis of our convergence guarantees, discussed in the next section.

\subsection{From the Ising Model to the Constant Potts Model}
\label{sec:from_ising_to_cpm}

This subsection outlines the evolution of community detection algorithms, starting from the foundational Lenz–Ising model, analogous to two-community detection (Section~\ref{sec:ising}). We then explore the emergence and limitations of geometric frustration (Section~\ref{sec:frustration}), leading to the introduction of the more generalized Potts model for $K$ communities (Section~\ref{sec:potts}). We then discuss how Modularity (Section~\ref{sec:modularity}) derives from the Potts model, examine its resolution-limit problem (Section~\ref{sec:resolution_limit}), and finally present the  CPM  as a resolution-limit-free approach that forms the theoretical basis of our work (Section~\ref{sec:cpm}).

\subsubsection{The Lenz–Ising Model}
\label{sec:ising}

The challenge of partitioning networks into meaningful communities has deep roots in statistical physics, with early concepts providing the foundation for many modern algorithms. The Lenz–Ising model, introduced in 1925 by physicists Ernst Ising and Wilhelm Lenz, represents a seminal step in this direction. Originally formulated to explain ferromagnetism, it describes a system of interacting ``spins'' on a lattice or graph, where each spin (vertex) can assume one of two states (communities), e.g., $+1$ or $-1$.

The system's total energy is quantified by the Lenz–Ising Hamiltonian,
\begin{equation}
\label{eq:ising}
\mathcal{H}_{\text{Ising}}(\sigma) = - \sum_{i \neq j} J_{ij} \sigma_i \sigma_j.
\end{equation}
Here, $\sigma = (\sigma_1, \dots, \sigma_n)$ represents the partition, where each component $\sigma_i \in \{-1, +1\}$ represents the state of node $i$. $J_{ij}$ represents the interaction strength between nodes $i$ and $j$, with the convention that $J_{ij}=0$ whenever $i$ and $j$ are not neighbors.
The objective in the Ising model is to find a configuration of spins that minimizes the total energy of the system. This state of lowest energy is known as the  ``ground state'', and other configurations with higher energy are referred to as ``excited states''.
In the ferromagnetic case, where $J_{ij}>0$, the energy is minimized when neighboring spins align (both $+1$ or both $-1$). Conversely, in the antiferromagnetic case, where $J_{ij}<0$, the energy is minimized when neighboring spins are anti-aligned (one $+1$ and the other $-1$).

Note that if we compute $\mathcal{H}_{\text{Ising}}$ across all possible partitions represented as nodes in the metagraph (Figure~\ref{fig:metagraph}), the resulting directed metagraph (which points toward partitions of higher quality) will have the grand coalition as the ground state in the ferromagnetic case, since all edges connect nodes within the same community. In contrast, in the antiferromagnetic case, the singleton partition becomes the ground state, as all edges link nodes from different communities. However, the singleton partition in Figure~\ref{fig:metagraph} comprises four communities, one for each node in the original graph, whereas the Ising model only accommodates two spin states. Therefore, in the antiferromagnetic case, frustration inevitably arises.

\subsubsection{Geometric Frustration in Networks}
\label{sec:frustration}

Frustration arises in the Ising model when the network's topology prevents all local interaction preferences from being simultaneously satisfied, leading to multiple configurations sharing the same minimum energy. Consider, for instance, an antiferromagnetic Ising model on a triangle. 
In the triangle, at least one interaction will always be \textit{frustrated} or unsatisfied, regardless of the spin configuration.
As another example, a square lattice in an antiferromagnetic setup might avoid frustration by allowing alternating spins. However, adding diagonals to such a square, forming triangles within the structure, can reintroduce frustration.

Note that frustration in the Ising model is topology-dependent \textit{and} constrained by the binary limitation of $K=2$ communities. 
The binary nature of spin states in the Ising model 
is relaxed  through the Potts model, as discussed in the sequel.


\subsubsection{The Potts Model}
\label{sec:potts}

The Potts model, proposed by Renfrey Potts in 1951, offers a crucial generalization by allowing each spin to take one of $K$ possible states, where $K$ can be as large as the number of nodes ($\nvertices$) in the network. This extension provides a more direct and flexible framework for community detection.
Reichardt and Bornholdt~\cite{RBER2006} formalized a Potts model specifically for this task. They established a set of first principles for a quality function, or Hamiltonian, designed to identify cohesive network communities. This Hamiltonian considers four fundamental types of pairwise relationships:

\begin{enumerate}
    \item \textbf{Rewarding internal links:} Edges connecting nodes within the same community should lower the system's energy. 
    \item \textbf{Penalizing internal non-links:} The absence of an edge between two nodes in the same community should increase the system's energy. 
    \item \textbf{Penalizing external links:} Edges connecting nodes in different communities should increase the system's energy. 
    \item \textbf{Rewarding external non-links:} The absence of an edge between nodes in different communities should lower the system's energy.
\end{enumerate}

This formulation represented a major conceptual advancement by explicitly accounting for both existing connections (links) and missing connections (non-links) within the objective function. Unlike the Ising model, which separates ferromagnetic and antiferromagnetic interactions into distinct regimes, the Potts model unifies them into a single quality function: rewarding link alignment (ferromagnetic) and penalizing non-link alignment (antiferromagnetic), thereby capturing both types of structural information simultaneously.

The simplified version of this model considers symmetric weights for links and non-links (i.e., the reward for an internal link equals the penalty for an external link, and the penalty for an internal non-link equals the reward for an external non-link). This simplification allows the Hamiltonian to be expressed purely in terms of internal interactions (rewarding internal links and penalizing internal non-links), thereby ignoring external relationships and yielding significant computational benefits by focusing the analysis on local structure. The resulting Hamiltonian takes the general form:
\begin{equation}\mathcal{H}_{\text{Potts}}(\partition) = - \sum_{i \neq j} (a_{ij}A_{ij} - b_{ij}(1 - A_{ij}))\delta(\sigma_i=\sigma_j)
    \label{eq:hpotts}
\end{equation}
where $A_{ij}$ is the adjacency matrix and the parameters $a_{ij}$ and $b_{ij}$ represent the weights for internal links and non-links, respectively.

\subsubsection{Modularity}
\label{sec:modularity}

From this simplified Potts framework, numerous community detection methods can be cast as special cases by defining the weights $a_{ij}$ and $b_{ij}$ in different ways. A prominent example is the widely used Modularity metric, introduced by Newman and Girvan in 2004~\cite{newman2004modularity}. 
Modularity can be derived from the simplified Hamiltonian by setting the weight $a_{ij} = 1 - b_{ij}$ and $b_{ij}$ (the penalty for internal non-links) to a specific null model, $p_{ij} = \frac{d_i d_j}{2m}$, where $d_i$ is the degree of node $i$ and $m$ the total number of edges in the network. Then, replacing the above values into~Eq.~\eqref{eq:hpotts}, we obtain:
\begin{equation}
    \mathcal{H}_{\text{Modularity}}(\partition) = - \sum_{i \neq j} \left ( A_{ij} - \frac{d_i d_j}{2m} \right ) \delta(\sigma_i=\sigma_j).
\end{equation}
However, this reliance on a global null model makes Modularity~\cite{newman2004modularity} susceptible to a significant drawback: the well-documented \textit{resolution limit problem}~\cite{traag2011cpm}.

\subsubsection{Resolution-limit problem}
\label{sec:resolution_limit}

This problem arises precisely because the null model term in $\mathcal{H}_{\text{Modularity}}$ depends on the degrees of the nodes ($d_i, d_j$) and the total number of edges in the entire graph ($m$), making the optimal partition dependent on the global scale of the network. This leads to inconsistent behavior. For instance, consider a classic example used to illustrate this problem: a ``ring of cliques'' network (as depicted in Figure 1 of~\cite{traag2011cpm}). In such a network, adjacent cliques in the ring are connected by a single link between specific nodes.

Modularity might find it optimal to merge pairs of cliques into single communities. Yet, if one of these merged pairs is extracted and analyzed as a separate subgraph, Modularity would then find it optimal to split them into two distinct communities (one for each clique). This inconsistency arises because the local decision is influenced by the global context.

A method is considered resolution-limit-free if, when analyzing any induced subgraph of the original graph, the partitioning results remain unchanged. Specifically, if $\partition=\{\community_{1},\community_{2},\dots,\community_{K}\}$ is an $\mathcal{H}$-optimal partition of a graph $\graph$, then the objective function $\mathcal{H}$ is called resolution-limit-free if for each subgraph $\graph'$ induced by $\mathcal{D}\subset \partition$, the partition $\mathcal{D}$ is also $\mathcal{H}$-optimal. This implies that a resolution-limit-free method will never depend on the size of the network to merge cliques in the ring of cliques network. If such a method merges cliques in a large graph, it must also merge them in any smaller subgraph containing those cliques, due to the resolution-limit-free property.

\subsubsection{Constant Potts Model}
\label{sec:cpm}

To address the resolution-limit problem, Traag et al. introduced the \textit{Constant Potts Model (CPM)}~\cite{traag2011cpm}. The ``constant" in CPM refers to replacing the traditional global, size-dependent null model with a multiplicative resolution term $\resolution$ that remains conditionally constant. Specifically, for unweighted graphs, the parameters $a_{ij}$ and $b_{ij}$ within the Potts Hamiltonian, $\mathcal{H}_{\text{Potts}}$ (Eq.~\ref{eq:hpotts}, where $a_{ij}$ weighs internal links and $b_{ij}$ weighs internal non-links), simplify to $a_{ij} = 1-\resolution$ and $b_{ij} = \resolution$. These values directly align with the internal-community pair values $v_{\anode\bnode}$ defined in Eq.~\eqref{eq:pair_value_detailed}, where $v_{\anode\bnode} = 1-\resolution$ for connected pairs and $v_{\anode\bnode} = -\resolution$ for non-connected pairs. 
The resulting Hamiltonian is:
\begin{equation}
    \mathcal{H}_{\text{CPM}}(\sigma) =
    - \sum_{i \neq j} ((1-\resolution)A_{ij} - \resolution(1 - A_{ij}))\delta(\sigma_i=\sigma_j) =
    - \sum_{i,j} (A_{ij} - \resolution)\delta(\sigma_i=\sigma_j) = -\sum_{i,j} v_{ij}.
    \label{eq:cpm}
\end{equation}
This formulation makes CPM a \textit{resolution-limit-free} method. The decision to merge or split communities depends only on the local density of connections relative to the resolution $\resolution$, ensuring consistent results regardless of the overall network size. A partition identified as a local minimum in a large network will remain a local minimum if analyzed as a subgraph.

Our work directly builds upon the  CPM. We do not propose a new model but rather introduce a novel analytical perspective by interpreting CPM through the lens of \textit{Hedonic Game Theory}~\cite{avrachenkov2017cooperative,brandt2016handbook}.

From this perspective, we contend that the term ``constant'' may not be the most appropriate, as it typically refers to a Potts Model with uniform multiplicative terms. However, the formulation considered here involves two distinct types of weights: a positive weight (ferromagnetic) and a negative weight (antiferromagnetic). These opposing forces can lead to what is known as \textit{frustration} in the Potts Model, resulting in a \textit{frustrated Potts Model} (see Section~\ref{sec:frustration}). This frustration introduces potential conflicts, which we interpret as a loss of robustness in game-theoretic terminology. We will further explore the connection between \textit{frustration} and \textit{robustness} in Section~\ref{sec:robustness}.

\paragraph{\textbf{Competitive Potts Model}}
While retaining the abbreviation CPM, we propose a more fitting name for the model under consideration: the \textit{Competitive Potts Model}. In this reinterpretation, nodes are rational agents that seek to maximize their individual utility by joining or forming communities. Therefore, this framework can either be considered  as a \emph{non-cooperative game} that models the emergence of cooperative structures~\cite{gonzalez2010introductory}, or as a \emph{cooperative game} to account for the fact that the ultimate interest relies on the derived coalitions~\cite{avrachenkov2017cooperative}.

\paragraph{\textbf{Non-transferable Utility}}
Non-cooperative game theory deals with individual players, their available strategies, and their resulting payoffs~\cite{gonzalez2010introductory}. Our model aligns   with this paradigm: nodes are the players, the chosen community is their strategy, and the local potential $\varphi_i^\resolution$ serves as their individual payoff function. Furthermore, this utility is \emph{non-transferable}, as there is no mechanism for agents to exchange or reallocate their utility gains; each agent's payoff is determined solely by the composition of its chosen coalition. In contrast, cooperative game theory abstracts away from such strategic details to focus on what coalitions can achieve collectively and how the joint benefits should be allocated~\cite{avrachenkov2017cooperative}.

\paragraph{\textbf{Potential Game}}
The approach considered in this paper follows a line of work in the literature that uses non-cooperative games as a foundation for cooperative outcomes~\cite{perez1994cooperative}.  
Agents do not make prior binding agreements; they act selfishly, and cooperation (the formation of stable communities) \emph{emerges as a Nash Equilibrium} of the game. Crucially, the local utility gains of individual agents are precisely aligned with the global improvement in a potential function (see Section~\ref{sec:potential_equivalence}). This structure defines the system as an \emph{exact potential game}, where self-interested actions converge toward equilibria that are beneficial for the collective.

\paragraph{\textbf{Competitive at Coalition and Intranode Levels}}
We use the term ``competitive'' because competition arises at two levels. First, at the coalition level, communities compete for the inclusion of nodes. Second, a conflict emerges at the intra-agent level, accounted for by the factor $A_{ij} - \resolution$, where each agent must balance two competing preferences: maximizing internal links (ferromagnetic cohesion) and minimizing internal non-links (antiferromagnetic exclusivity). This dual nature of emergent cooperation and underlying competition frames CPM not only as a potential game but also as a meaningful abstraction of real-world community dynamics. This new concept allows us to trace a direct theoretical lineage from the Ising model in physics to a modern, game-theoretic interpretation of community detection, a connection further summarized in Table~\ref{tab:game-ising-community}.

\begin{table}[t]
\centering
\caption{Ising or Potts Model (Physics), Constant Potts Model (Community Detection), and Hedonic Games (Game Theory)}
\scriptsize
\renewcommand{\arraystretch}{1.3}
\begin{tabular}{|p{5cm}|p{5cm}|p{5cm}|}
\hline
\textbf{Ising or Potts Model (Physics)} & \textbf{Community Detection} & \textbf{Hedonic Game Theory} \\
\hline \hline
\textbf{Spins} (magnetic particles) & \textbf{Nodes}  in a network &
\textbf{Players}  (agents) \\
\hline
{Aligned spins} (same state) & {Community}  (cluster of nodes) &
{Coalition}  (group of agents) \\
\hline
{Local energy} & {Tradeoff}:\newline internal links vs. internal non-links &
{Individual utility},\newline $\varphi_i^\resolution = (1 - \resolution)d_i^k - \resolution \hat{d}_i^k$ \\
\hline
{Ground state}\newline (no local change reduces energy) & {Locally optimal partition} &
{Nash Equilibrium}\newline (no agent can improve utility by moving) \\
\hline
\textbf{Frustration}:\newline energy cannot be minimized for all edges & \textbf{Conflicting objectives}:\newline best community differs by criterion &
\textbf{Frustration}:\newline No coalition satisfies all preferences \\
\hline
\textbf{Hamiltonian}:\newline global energy decreases with better alignment & \textbf{Constant Potts Model:}\newline minimizes internal non-links and maximizes links &
\textbf{Potential Game}:\newline global potential increases with utility \\
\hline
\textbf{Spin with satisfied interactions}:\newline aligned with all neighbors & \textbf{Robust node}:\newline community is optimal across $\resolution \in [0,1]$ & \textbf{Totally robust agent}:\newline best group for all $\resolution$ \\
\hline
{Spin flips}  (e.g., Glauber dynamics) & {Node relocation}  (e.g., Leiden algorithm) &
{Best-response dynamics} \\
\hline
{Low-energy configuration} & {Robust and stable}  community structure &
{Stable partition} means strategic equilibrium \\
\hline
\end{tabular}
\label{tab:game-ising-community}
\end{table}

\subsection{From the Constant Potts Model to the Competitive Potts Model through Hedonic Games}
\label{sec:from_cpm_to_hedonic}


Despite extensive research in both hedonic games and community detection, the connection between community detection algorithms and potential hedonic games remains underexplored. In this work, we relate the CPM objective function~\cite{traag2011cpm} with a hedonic potential~\cite{avrachenkov2017cooperative, brandt2016handbook}.
The local improvement Algorithm~\ref{alg:simple-better-response}, which solves a hedonic game when considering CPM as its quality function, shares conceptual foundations with earlier community detection methods that iteratively refine a quality function.

Hedonic games, extensively studied in the literature~\cite{dreze1980hedonic, banerjee2001core, cechlarova2001stability, bogomolnaia2002stability, brandt2016handbook}, define a setting where each node has a utility for belonging to a given community, known as its \emph{hedonic potential}, which quantifies its fit within the community.
From a computational standpoint, potential hedonic games have utility functions that can be computed in polynomial time~\cite{gairing2010computing}, relying only on the number of neighbors of a node and the size of each community. From a social network perspective, hedonic games provide a natural model for representing the constraints on forming and maintaining social connections~\cite{sheikholeslami2016egonet}.

To determine whether a node has an incentive to deviate from its current community, we employ a multi-scale game-theoretic framework that integrates local (node-level), meso (community-level), and global (partition-level) perspectives. 
Derived from Eq.~\eqref{eq:pair_value} to Eq.~\eqref{eq:cpm}, our potential functions formalize the core objectives of maximizing ``\textit{friends}'' and minimizing ``\textit{strangers}'' across scales, ranging from individual to societal levels, as further detailed in the sequel.

\subsubsection{Hedonic Potential}
\label{sec:hedonic_cpm}

The objective function is used to assign scores to each node, community, and partition, reflecting their structural quality. The objective function is referred to as a \textit{quality function} in community detection, a \textit{potential function} in game theory, and a \textit{Hamiltonian} in physics that measures the total energy of a system to be minimized.
In what follows, we demonstrate that optimizing the  CPM is equivalent to computing a stable partition of a  potential hedonic game with Additively Separable Preferences (ASP)~\cite{avrachenkov2017cooperative}. 
The node potential $\potentialNode_{\anode}^{\resolution}$ evaluates the contribution of a single node to a community, the community potential $\potentialCoalition^{\resolution}$  aggregates the contributions of all nodes within a community, and the partition potential $\potentialPartition^{\resolution}$  sums the contributions across all communities in a partition. While they share a common mathematical structure, their distinct scopes reflect the hierarchical nature of community detection, from individual nodes to global network partitions,
noting that there is an equivalence between node and partition gains (see~Section~\ref{sec:potential_equivalence}).


\paragraph{\textbf{Node Potential}} \label{sec:node_potential}

The \textit{potential of node} $\anode$ with respect to  community $\community_k$ is the sum of $v_{\anode\bnode}$ for all $\bnode \in \community_k$:

\begin{equation}
    \potentialNode_{\anode}^{\resolution}(\community_k)=
    \sum_{\bnode \in \community_k} v_{\anode\bnode}.
\end{equation}
The above potential can be written in terms of the node degrees as follows,
\begin{equation}
      \potentialNode_{\anode}^{\resolution}(\community_k)=  (1 - \resolution) \degrik - \resolution \cdegrik
    \label{eq:potential_node}
\end{equation}
where $\degrik$ is the degree of node $\anode$ in community $\community_k$ and  $\cdegrik$ is the number of non-neighbors of node $\anode$ in community~$\community_k$. As discussed in the introduction, we consider a \textit{max-min} problem of maximizing the number of neighbors and minimizing non-neighbors within communities.   The above expression for the node potential, to be maximized by each node, balances between these two objectives, of   maximizing neighbors within community ($\degrik$) and minimizing non-neighbors within community ($\cdegrik$), using the resolution parameter $\resolution$ to trade between the two.

Let $n_k$ be the number of nodes in community $\community_k$. Recall that $\delta(\sigma_{\anode}= k)$  is  an indicator function that equals 1 if $\sigma_i=k$, and 0 otherwise.
Then, \begin{equation}\hat{d}^k_{\anode}=n_k-d_{\anode}^k-\delta(\sigma_{\anode}= k).\label{eq:dik} \end{equation} Therefore, replacing~\eqref{eq:dik} into~\eqref{eq:potential_node} we obtain:
\begin{equation}   \potentialNode_{\anode}^{\resolution}(\community_k)= 
    \degrik - \resolution (\degrik + \cdegrik) =
    \degrik - \resolution (n_k - \delta(\sigma_{\anode}=k)). \label{eq:potential_node1}
\end{equation}
In what follows, we relate the node potential to the community potential.

\paragraph{\textbf{Community Potential}} \label{sec:community_potential}

The \textit{potential of community} $\community_k$ is half the sum of the potential of all nodes $\anode \in \community_k$, given by:
\begin{equation}    \potentialCoalition^{\resolution}(\community_k)=
    \frac{1}{2} \sum_{\anode\in\community_k} \potentialNode_{\anode}^{\resolution}(\community_k)=
    \frac{1}{2} \sum_{\anode\in\community_k} \sum_{\bnode\in\community_k} v_{\anode\bnode}.
    \label{eq:potential_community}
\end{equation}
Let   $m_k$ be the number of edges in  community $\community_k$. 
Note that 
\begin{equation}
    \sum_{\anode \in \mathcal{S}_k} d_{\anode}^k = 2 m_k. \label{eq:standardgraph}
\end{equation}
Then, replacing~\eqref{eq:standardgraph} and~\eqref{eq:potential_node1}  into~\eqref{eq:potential_community} we obtain
\begin{equation}  \potentialCoalition^{\resolution}(\community_k)=  m_k - \resolution \frac{n_k (n_k-1)}{2}  =
   m_k - \resolution \binom{n_k}{2} = (1-\resolution) m_k - \resolution \left(\binom{n_k}{2}- m_k\right).   \label{eq:neighnonnei}
\end{equation}
As previously discussed, the above potential is motivated by a \textit{max-min} problem consisting of maximizing  the number of neighbors, $m_k$, and minimizing the number of  non-neighbors,  $\binom{n_k}{2}- m_k$, within communities.  The resolution parameter $\resolution$ balances the two goals.

\paragraph{\textbf{Partition Potential}} \label{sec:partition_potential} 
The \emph{potential of  partition} $\pi$ is given by:
%
\begin{equation}
    \potentialPartition^{\resolution}(\partition)=
    \sum_{k=1}^K \potentialCoalition^{\resolution}(\community_k)=
    \sum_{k=1}^K \sum_{\anode\in\community_k}  \sum_{\bnode\in\community_k} v_{\anode \bnode}= \sum_{k=1}^K 
   \sum_{\anode\in\community_k} \sum_{\bnode\in\community_k} \left(A_{\anode\bnode} - \resolution \right)\delta(\sigma_{\anode}= \sigma_{\bnode}).
     \label{eq:potential_partition}
\end{equation}
Contrasting the above equation against Eq.~\eqref{eq:pair_value}, we note that the potential of a partition equals the sum of the values of all its node pairs. Then, it follows from Eqs.~\eqref{eq:neighnonnei} and~\eqref{eq:potential_partition} that \begin{equation}  \potentialPartition^{\resolution}(\partition)= (1-\resolution) \left( \sum_{k=1}^K m_k \right) - \resolution \left( \sum_{k=1}^K\binom{n_k}{2}- m_k\right),   \label{eq:neighnonnei1}
\end{equation}
where the first term corresponds to the number of neighbors, and the second to the number of non-neighbors within communities.

\subsubsection{Node Gain Equals Partition Gain}
\label{sec:potential_equivalence}


We demonstrate a fundamental property of the considered hedonic game: gains from a single node moving between communities are identical whether measured by the node's potential change ($\Delta\potentialNode$) or the overall partition's potential change ($\Delta\potentialPartition$). This property
ensures that selfish moves align with global improvement.

%
%

Consider node $\anode$ moving from its origin community $\clusterB$ to destination community $\clusterA$.  In this section, to simplify terminology we consider two communities, i.e.,  $\partition = \{ \clusterA, \clusterB  \}$, noting that our results hold for any $K \ge 1$   as the two communities of interest can be arbitrarily chosen.   Let  $
\partition' = \{\clusterA \cup \{\anode\},\, \clusterB \setminus \{\anode\}\}
$
denote the updated partition after this move.

We denote by   $\Delta \potentialNode_{\anode}^{\clusterB \clusterA}$
the node potential gain  of  $\anode$ due to its move, and by  $ \Delta \potentialPartition_{\anode}^{\clusterB \clusterA}$ the corresponding   partition potential gain,
\begin{align}
    \Delta \potentialNode_{\anode}^{\clusterB \clusterA} =    \potentialNode_{\anode}^{\resolution}(\clusterA \cup \{\anode\}) - \potentialNode_{\anode}^{\resolution}(\clusterB),  \quad  \Delta \potentialPartition_{\anode}^{\clusterB \clusterA}
    =\potentialPartition^{\resolution}(\partition') - \potentialPartition^{\resolution}(\partition).
\end{align}
Next, we show that 
\begin{equation}
     \Delta \potentialNode_{\anode}^{\clusterB \clusterA} = \Delta \potentialPartition_{\anode}^{\clusterB \clusterA}. \label{eq:nodeeqpartition}
\end{equation}
Indeed, it follows from Eq.~\eqref{eq:potential_node1} that
\begin{align}
    \Delta \potentialNode_{\anode}^{\clusterB \clusterA}
    = \potentialNode_{\anode}^{\resolution}(\clusterA \cup \{\anode\}) - \potentialNode_{\anode}^{\resolution}(\clusterB)
    = \left(d_{\anode}^{\clusterA} - \resolution n_{\clusterA}\right) - \left(d_{\anode}^{\clusterB} - \resolution (n_{\clusterB} - 1)\right)
    = \left(d_{\anode}^{\clusterA} - d_{\anode}^{\clusterB}\right) - \resolution\left(n_{\clusterA} - n_{\clusterB} + 1\right).
    \label{eq:delta_node}
\end{align}
Similarly, it follows from Eq.~\eqref{eq:potential_partition} that 
\begin{equation}
    \Delta \potentialPartition_{\anode}^{\clusterB \clusterA}
    =\potentialPartition^{\resolution}(\partition') - \potentialPartition^{\resolution}(\partition) 
    =\potentialCoalition^{\resolution}(\clusterA \cup \{\anode\}) + \potentialCoalition^{\resolution}(\clusterB \setminus \{\anode\}) 
    -\left(\potentialCoalition^{\resolution}(\clusterA) + \potentialCoalition^{\resolution}(\clusterB)\right).
    \label{eq:partition_gain_initial}
\end{equation}
Now, substituting  Eq.~\eqref{eq:neighnonnei} into Eq.~\eqref{eq:partition_gain_initial},
\begin{align}
 \Delta \potentialPartition_{\anode}^{\clusterB \clusterA}
    &= (m_{\clusterA} + d_{\anode}^{\clusterA}) - \resolution \binom{n_{\clusterA}+1}{2}  + (m_{\clusterB} - d_{\anode}^{\clusterB}) - \resolution \binom{n_{\clusterB}-1}{2}-\left(m_{\clusterA} - \resolution \binom{n_{\clusterA}}{2} + m_{\clusterB} - \resolution \binom{n_{\clusterB}}{2}\right) \\
    &=
    \left(d_{\anode}^{\clusterA} - d_{\anode}^{\clusterB}\right) - \resolution\left(n_{\clusterA} - n_{\clusterB} + 1\right) =  \Delta \potentialNode_{\anode}^{\clusterB \clusterA}.
    \label{eq:delta_partition}
\end{align}

The above property plays a key role in establishing the convergence of the considered community detection algorithm (see Theorem~\ref{THEO:EQUILIBRIUM}). Indeed, by deriving a lower bound on $ \Delta \potentialNode_{\anode}^{\clusterB \clusterA}$, we show that  greedy 
local moves lead to a stable, locally optimal partition, in pseudo-polynomial time.

\subsubsection{Hedonic Game Equilibrium}
\label{sec:hedonic-leiden}

Having established the potential functions that guide individual node decisions, we now define the collective outcome of these choices. The central question is: when does the community formation process reach a stable state? In our hedonic game framework, this state is known as an \textbf{equilibrium}, i.e., a partition where no self-interested node has an incentive to move to another community. This section formally defines the concepts of stability and equilibrium, which serve as candidate solutions to the community detection problem.

\begin{definition}[Stability Concepts]
A community is:
\begin{itemize}
    \item \textbf{Internally stable} if no node within the community has an incentive to leave it for any other existing community.
    \item \textbf{Externally stable} if no node outside the community has an incentive to join it.
\end{itemize}
A partition is \textbf{stable} if all its communities are both internally and externally stable. Such a stable partition is referred to as an \textbf{equilibrium}.
\end{definition}

From the definitions above, we formally establish that a hedonic game equilibrium serves as a candidate solution to the community detection problem, as no node has an incentive to change its assigned community.

\begin{definition}[Local Stability~\cite{traag2019leiden}] 
A node $\anode$ is \textbf{locally stable} in partition $\partition$ if its utility in its current community $\community_{\sigma_i}$ is greater than or equal to its utility if it deviates to another community:
\[ \potentialNode_{\anode}^{\resolution}(\community_{\sigma_i}) \geq \potentialNode_{\anode}^{\resolution}(\community_k) \quad \forall k \neq \sigma_i. \]
\end{definition}

\begin{definition}[Solution~\cite{bogomolnaia2002stability}] \label{def:solution}
A \textbf{solution} to the community detection problem is an equilibrium partition, i.e., a partition wherein all nodes are locally stable.
\end{definition}

In an equilibrium partition $\partition^\resolution$, for each node $\anode$, its assigned community $\sigma_{\anode}$ must be in the set of communities that maximize its utility:
\[ \sigma_{\anode} \in \arg\max_k \potentialNode_{\anode}^{\resolution}(\community_k). \]
A node only moves if a new community offers strictly higher utility. In the case of ties for the maximum utility, a node has no incentive to move from its current community if it is one of the optimal choices. Note that the definitions of local stability and equilibrium subsume a fixed $\resolution$. In the remainder of this work, we refer to an equilibrium as a $\resolution$-equilibrium whenever we aim to stress the dependence of the equilibrium on $\resolution$.
Whenever we aim to stress that the equilibrium holds for a range $[\resolution_0, \resolution_1]$ of values of $\resolution$, we refer to it as a  $[\resolution_0, \resolution_1]$-equilibrium.  

As a direct consequence of the equivalence between node and partition potential gains (see Section~\ref{sec:potential_equivalence}), a solution can also be characterized via the global partition potential~\cite{monderer1996potential}.

\begin{proposition}[Potential Maximized at Equilibrium~\cite{monderer1996potential}]\label{coro:pot}
A solution to the community detection problem is a partition wherein no node can unilaterally move to another community and increase the global partition potential.
\end{proposition}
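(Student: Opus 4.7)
The proof strategy is a direct application of the node-equals-partition gain equivalence established in Section~\ref{sec:potential_equivalence}, specifically Eq.~\eqref{eq:nodeeqpartition}. The plan is to argue both implications of the ``iff'' (treating the statement as a characterization): a partition $\partition$ is a solution (every node is locally stable) if and only if no unilateral single-node move can increase $\potentialPartition^{\resolution}$.

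First, I would unfold the definitions. Suppose $\partition$ is a solution per Definition~\ref{def:solution}. Fix any node $\anode$ currently assigned to community $\clusterB = \community_{\sigma_\anode}$, and consider an arbitrary alternative community $\clusterA$ with $\clusterA \neq \clusterB$. By local stability, $\potentialNode_{\anode}^{\resolution}(\clusterB) \ge \potentialNode_{\anode}^{\resolution}(\clusterA \cup \{\anode\})$, so $\Delta \potentialNode_{\anode}^{\clusterB\clusterA} \le 0$. Invoking Eq.~\eqref{eq:nodeeqpartition}, we get $\Delta \potentialPartition_{\anode}^{\clusterB\clusterA} = \Delta \potentialNode_{\anode}^{\clusterB\clusterA} \le 0$, so moving $\anode$ from $\clusterB$ to $\clusterA$ does not increase the global partition potential. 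Since $\anode$, $\clusterA$ were arbitrary, no unilateral move strictly increases $\potentialPartition^{\resolution}$.

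Conversely, suppose no unilateral move increases $\potentialPartition^{\resolution}$. For any node $\anode$ and any candidate community $\clusterA \neq \community_{\sigma_\anode}$, we have $\Delta \potentialPartition_{\anode}^{\clusterB\clusterA} \le 0$, and applying Eq.~\eqref{eq:nodeeqpartition} in reverse yields $\Delta \potentialNode_{\anode}^{\clusterB\clusterA} \le 0$, i.e., $\potentialNode_{\anode}^{\resolution}(\community_{\sigma_\anode}) \ge \potentialNode_{\anode}^{\resolution}(\clusterA \cup \{\anode\})$. Hence every node is locally stable and $\partition$ is a solution.

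Since this is essentially a one-line corollary of Eq.~\eqref{eq:nodeeqpartition}, I do not anticipate any genuine obstacle; the only subtle point is notational bookkeeping, namely making sure that the ``other community'' $\clusterA$ in the node-potential comparison is interpreted consistently as $\clusterA \cup \{\anode\}$ after $\anode$ joins it, which is already the convention implicit in the derivation of Eq.~\eqref{eq:delta_node}. The result is then a restatement of the classical fact from Monderer and Shapley~\cite{monderer1996potential} that pure Nash equilibria of an exact potential game coincide with the local maximizers of the potential along unilateral deviations, specialized here to the CPM hedonic game.
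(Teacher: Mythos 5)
Your proposal is correct and follows essentially the same route as the paper: both reduce the claim to the node-gain-equals-partition-gain identity of Section~\ref{sec:potential_equivalence} and conclude that local stability of every node forbids any potential-increasing unilateral move. The only difference is that you also spell out the (equally immediate) converse direction, which the paper leaves implicit.
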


\begin{proof}
The proof follows directly from the property that $\Delta \potentialNode_{\anode} = \Delta \potentialPartition_{\anode}$ (see Eq.~\eqref{eq:delta_partition}). If all nodes are locally stable, no node can unilaterally move to increase its own potential, i.e., we have $\Delta \potentialNode_{\anode} \le 0$  for every unilateral move of any node $i \in V$. Therefore, no unilateral move can increase the partition's potential given by Eq.~\eqref{eq:potential_partition}.
\end{proof}


Next, we discuss the role of $\resolution$ in the convergence time towards an equilibrium, which Algorithm~\ref{alg:simple-better-response} provides as a simple better-response dynamics to find.

\section{Analytical Contributions}
\label{sec:analytical_contributions}

In Section~\ref{sec:equilibrium_convergence}, we demonstrate that if the resolution parameter $\resolution$ is a rational number, an iterative better-response dynamics (e.g., Algorithm~\ref{alg:simple-better-response}) is guaranteed to converge in pseudo-polynomial time to an equilibrium partition according to Definition~\ref{def:solution}. This convergence is ensured because the partition potential, $\potentialPartition$, is bounded within $[-\nvertices^2, \nvertices^2]$ and strictly increases with each step. Specifically, for a rational $\resolution = \numer/\denom$, each beneficial node move increases the potential by at least $1/\denom$, guaranteeing convergence in $O(\denom \nvertices^2)$ steps (Theorem~\ref{THEO:EQUILIBRIUM}).
Then, in Section~\ref{sec:robustness}, we shift our focus from convergence guarantees to the structural stability of the resulting equilibria, introducing formal criteria to evaluate the robustness of a partition against variations in the resolution parameter $\resolution$.

\subsection{Equilibrium Convergence}
\label{sec:equilibrium_convergence}

Our analysis of equilibrium convergence explores key aspects:

\begin{itemize}
    \item 
\textbf{Finding an Equilibrium in Pseudo-Polynomial Time}: In  Section~\ref{sec:find_equilibrium} we prove that the better-response dynamics described in Algorithm~\ref{alg:simple-better-response} finds an equilibrium partition in $O(\denom \nvertices^2)$ time. The proof hinges on the fact that the potential gain from any single node move is lower-bounded, ensuring a finite number of steps before reaching a stable state where no node has an incentive to move.
In Section~\ref{sec:fixed_vs_variable_resolution}  we  consider extremal values for $\resolution$ and show that in those cases the algorithm converges in polynomial time.

\item 
\textbf{Best- vs. Better-Response Dynamics}: In Section~\ref{sec:best_vs_better}, we show that neither best-response dynamics (always choosing the move with the maximum potential gain) nor better-response dynamics (choosing any move with a positive gain) is universally faster. We provide examples demonstrating the existence of specific sample paths where a better-response dynamics converges to an equilibrium in fewer steps than a globally greedy best-response dynamics.
\end{itemize} 

 In Section~\ref{sec:summary_convergence} we summarize our convergence results, indicating that while pseudo-polynomial convergence is guaranteed, the practical performance depends on design choices such as the resolution parameter $\resolution$, the initial partition, and the node selection strategy. 

\subsubsection{Finding an Equilibrium Partition}
\label{sec:find_equilibrium}


\paragraph{\textbf{Response Dynamics: Which Node? Which Community?}}
Two key questions arise in the dynamics to find an equilibrium partition:  
(a)   Which node should move next? (b)  Given a node, to which community should it move?  
For (a), the algorithm may use a  best-response (select a  node   that maximizes the gain in potential) or a better-response (allow any node with a strictly positive gain to move).
For (b), a node may adopt a local  best-response  (move to the community with maximum potential, Eq.~\ref{eq:potential_node}) or a local better-response (move to any community with higher potential than its current one).  

\paragraph{\textbf{Reference Algorithm}}
Algorithm~\ref{alg:simple-better-response} captures a better-response dynamic when  determining which node moves next. For a selected node $\anode$ (line~\ref{for:node}), it identifies a set of communities $\mathcal{K}^{\star}$ that could improve its potential. This set may be restricted to the communities that maximize its local potential, defining a local best-response (line~\ref{best:communities}), or it may include any community with higher potential than the current one, corresponding to a local better-response (line~\ref{better:communities}). A target community $\community_{\kappa}$ is then selected (line~\ref{let:kappa}), and if it yields strictly greater potential (line~\ref{if:increases}), the node is moved (line~\ref{set:move}). The process continues until no node can further improve its potential, reaching the equilibrium of Definition~\ref{def:solution}.

Our theoretical results apply irrespective of whether nodes adopt best- or better-response strategies (see Section~\ref{sec:best_vs_better}). In simulations (Section~\ref{sec:community_tracking}), we implement Algorithm~\ref{alg:simple-better-response} using the local best-response at the node level (line~\ref{best:communities}). To improve efficiency, our implementation follows the Leiden algorithm~\cite{traag2019leiden}, which maintains a queue of candidate nodes $\mathcal{Q}$ (line~\ref{init:queue}). Whenever a node moves, its neighbors outside the target community are added to the queue (line~\ref{set:queue}). This prioritizes nodes whose neighborhoods have recently changed, focusing computation on active regions of the network~\cite{blondel2024louvain}. 

\begin{algorithm}[t]
\SetKwInOut{Input}{Input}
\SetKwInOut{Output}{Output}
\Input{
    $\graph=(\vertices,\edges)$;
    $\resolution$;
    Initial partition membership~$\sigma: \vertices \to \{1, \ldots, K\}$.
}
\Output{
    Equilibrium partition $\sigma$.
}
$\texttt{changed} \leftarrow \texttt{true}$ \label{let:flag} \tcp*{Ensures the main loop runs at least once}
\While{$\texttt{changed}$ \label{while:changed}}{
    $\texttt{changed} \leftarrow \texttt{false}$ \label{set:flag_false} 
    
    Initialize queue $\mathcal{Q} \leftarrow \vertices$ \label{init:queue} 
    
    \While{$\mathcal{Q}$ is not empty \label{while:queue}}{
        Remove a node $i$ from $\mathcal{Q}$ \label{for:node} \tcp*{Select a candidate node to evaluate}
        
        \uIf{Node-level best-response \label{if:best}}{
             $\mathcal{K}^{\star} \leftarrow \arg\max_{k \in \{1, \ldots, K\}} \varphi_i^\resolution(\community_k)$ \label{best:communities} \tcp*{Best communities for node $i$}
        }
        \Else{
             $\mathcal{K}^{\star} \leftarrow \{ {k \in \{1,\ldots,K\}} \text{ such that } \varphi_i^\resolution(\community_k) > \varphi_i^\resolution(\community_{\sigma_i})\}$ \label{better:communities} \tcp*{Better communities for node $i$}
        }

        Let $\kappa$ be an element from $\mathcal{K}^{\star}$ \label{let:kappa} \tcp*{Select one target community (handling ties)}
        
        \If{$\varphi_i^\resolution(\community_\kappa) > \varphi_i^\resolution(\community_{\sigma_i})$ \label{if:increases}}{
            $\sigma_i \leftarrow \kappa$ \label{set:move} 
            
            $\texttt{changed} \leftarrow \texttt{true}$ \label{set:flag}  

            $\mathcal{Q} \leftarrow  \mathcal{Q} \cup \{\bnode \in \vertices \mid A_{\anode \bnode}=1 \textrm{ and }  \bnode \notin \community_{\kappa} \}$ \label{set:queue} \tcp*{Leiden's local move queue management~\cite{traag2019leiden}}
        }
    }
}
\Return{$\sigma$} \tcp*{Return partition as solution according to Definition~\ref{def:solution}}
\caption{\textbf{Better-Response Dynamics for Finding a Hedonic Game Equilibrium.}}
\label{alg:simple-better-response}
\end{algorithm}

\paragraph{\textbf{Convergence}}
Next, we show that the resulting  equilibrium partition produced by Algorithm~\ref{alg:simple-better-response}
is identified in pseudo-polynomial time using better-response dynamics.
To this aim, we assume that $\resolution$ is rational.%
\footnote{The convergence properties of the dynamics for irrational values of $\resolution$ are not covered by this proof and remain an open question.}
i.e., $\resolution=\numer/\denom$, where $\numer, \denom \in \mathbb{N}$, and $b \leq c$.
Under this   condition, the following theorem applies: 
\begin{theorem} \label{THEO:EQUILIBRIUM}
Community detection based on hedonic games finds an equilibrium partition in $O(\denom \nvertices^2)$ where $\resolution=\numer/\denom$, and  $\numer,\denom \in \mathbb{N}$.
\end{theorem}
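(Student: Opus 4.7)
The plan is to combine the potential-game structure of the community formation dynamics with a minimum-progress-per-step argument that exploits the rationality of $\resolution$. Two ingredients already established in the excerpt are central: (i) the equivalence $\Delta\potentialNode_{\anode} = \Delta\potentialPartition_{\anode}$ from Section~\ref{sec:potential_equivalence}, which guarantees that every beneficial selfish move of Algorithm~\ref{alg:simple-better-response} strictly increases the global potential; and (ii) the explicit formula for the gain, which combined with an a priori bound on the range of $\potentialPartition^{\resolution}$ will control the total number of moves.

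First, I would establish a uniform lower bound on the gain produced by any move accepted by Algorithm~\ref{alg:simple-better-response}. The derivation leading to Eq.~\eqref{eq:delta_partition} gives
\[
\Delta\potentialNode_{\anode}^{\clusterB \clusterA} = \bigl(d_{\anode}^{\clusterA} - d_{\anode}^{\clusterB}\bigr) - \frac{\numer}{\denom}\bigl(n_{\clusterA} - n_{\clusterB} + 1\bigr).
\]
Multiplying by $\denom$ yields $\denom\,\Delta\potentialNode_{\anode}^{\clusterB \clusterA} = \denom(d_{\anode}^{\clusterA} - d_{\anode}^{\clusterB}) - \numer(n_{\clusterA} - n_{\clusterB} + 1)$, which is an integer because $\denom,\numer,d_{\anode}^{\clusterA},d_{\anode}^{\clusterB},n_{\clusterA},n_{\clusterB} \in \mathbb{Z}$. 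Since Algorithm~\ref{alg:simple-better-response} triggers the move only when the gain is strictly positive (line~\ref{if:increases}), $\denom\,\Delta\potentialNode_{\anode}^{\clusterB \clusterA}$ is a positive integer, hence at least $1$, so $\Delta\potentialNode_{\anode}^{\clusterB \clusterA} \geq 1/\denom$. By the equivalence in Section~\ref{sec:potential_equivalence}, the same floor applies to $\Delta\potentialPartition_{\anode}^{\clusterB \clusterA}$.

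Next, I would bound the range of $\potentialPartition^{\resolution}$. From Eq.~\eqref{eq:potential_partition}, $\potentialPartition^{\resolution}(\partition)$ is a sum of terms $(A_{\anode\bnode}-\resolution)$ over ordered intra-community pairs, each lying in $[-\resolution,1-\resolution] \subseteq [-1,1]$. Because $\sum_k n_k^2 \leq (\sum_k n_k)^2 = \nvertices^2$, we obtain $|\potentialPartition^{\resolution}(\partition)| \leq \nvertices^2$ for every partition, so the total achievable increase of $\potentialPartition^{\resolution}$ over any run is at most $2\nvertices^2$. Combining with the per-move floor of $1/\denom$, the number of accepted moves is at most $2\denom\nvertices^2 = O(\denom \nvertices^2)$. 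When the while loops of Algorithm~\ref{alg:simple-better-response} terminate, no node has an incentive to move, so every node is locally stable and the returned partition is a solution per Definition~\ref{def:solution}.

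The main obstacle is handling the rationality hypothesis correctly: integrality of $\denom\,\Delta\potentialNode_{\anode}^{\clusterB \clusterA}$ is precisely what converts a possibly vanishing gain into the uniform floor $1/\denom$, and this step collapses for irrational $\resolution$ (as noted in the footnote of the statement). A secondary subtlety is that the inner queue of Algorithm~\ref{alg:simple-better-response} can grow and shrink between accepted moves, but the progress argument depends only on the count of accepted moves rather than on queue bookkeeping, so the $O(\denom\nvertices^2)$ move bound translates directly into the pseudo-polynomial step bound claimed in the theorem.
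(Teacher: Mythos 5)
Your proposal is correct and follows essentially the same argument as the paper's proof: the integrality of $\denom\,\Delta\potentialNode_{\anode}^{\clusterB\clusterA}$ yields the per-move floor of $1/\denom$, which combined with the bound $|\potentialPartition^{\resolution}|\leq \nvertices^2$ gives at most $2\denom\nvertices^2$ accepted moves. Your explicit justification of the range bound via $\sum_k n_k^2 \leq \nvertices^2$ is a small addition the paper leaves implicit, but the route is the same.
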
 

The above theorem  establishes an asymptotic upper bound on the convergence time. A natural lower bound is determined by the time required to read the graph: \( \Omega(\nvertices^2) \) when using an adjacency matrix representation and \( \Omega(\nvertices+\medges) \) when using  an adjacency list.

Before examining the proof of Theorem~\ref{THEO:EQUILIBRIUM}, we briefly outline the basic insights involved. As pointed out above,   the potential $\potentialPartition$ satisfies $-n^2 \leq \potentialPartition \leq \nvertices^2$. Additionally, the potential can be written as:
\begin{align}
    \potentialPartition = \frac{U}{\denom}
    \label{eq:potential_partition_frac}
\end{align}
where $U$, ${\denom} \in \mathbb{N}$. Consider a node $\anode$ that can gain from moving between communities. The key step in the proof is showing that such a move yields an instantaneous gain lower bound by $1/\denom$. This lower bound, together with the maximum and minimum values attained by $\potentialPartition$, implies that the algorithm converges in at most $2{\denom} \nvertices^2$ steps. The worst-case scenario occurs if, at the first iteration of the algorithm, $U=-{\denom}\nvertices^2$ and at the last iteration, $U={\denom}\nvertices^2$.

Note that  $c$ is a natural number, that can be encoded with $\log c$ bits. As computational complexity measures difficulty with respect to the length of the encoded input, the complexity of the considered algorithm  is  pseudo-polynomial, as the algorithm takes $c$ steps for an input of  length $\log c$.  Alternatively, if the input is represented in a non standard fashion, e.g., in  unary base, the algorithmic complexity would be referred to as polynomial as opposed to pseudo-polynomial.

\begin{proof}

We consider, without loss of generality, a partition $\partition $  where a tagged node $\anode$ can gain from moving from community $\clusterB$ to community $\clusterA$. Our goal is to show that the instantaneous gain derived from such a move is at least $1/\denom$. This result, together with the fact that the potential is lower bounded by $-\nvertices^2$ and upper bounded by $\nvertices^2$, implies that the algorithm converges in at most $2 \denom \nvertices^2$ steps.

Recall that $n_\clusterB$ is the number of nodes in the community $\clusterB$ \textit{before} the move (including $\anode$), while $n_\clusterA$ is the number of nodes in $\clusterA$, also \textit{before} the move (hence, not including  $\anode$).
Also, $\degriB$ and $\degriA$ are the degrees of $\anode$ in $\clusterB$ and $\clusterA$ before and after the move, and $\cdegrik$ is the number of nodes in community $\community_k$ not connected to $\anode$, with $\cdegriA = n_\clusterA - \degriA$ and $\cdegriB = n_\clusterB - \degriB - 1$. Therefore, $n = \degriA + \cdegriA + \degriB + \cdegriB + 1 = n_\clusterA + n_\clusterB$.

It follows from~\eqref{eq:delta_node} that $\denom \Delta \potentialNode_{\anode}^{\clusterB\clusterA} \in \mathbb{Z}$,
\begin{align}
    \denom \Delta \potentialNode_{\anode}^{\clusterB\clusterA} &= \denom (\degriA - \degriB) - \numer (n_\clusterA - n_\clusterB + 1). \label{eq:dv}
\end{align}
Then,
\begin{align}
    \Delta \potentialNode_{\anode}^{\clusterB\clusterA} > 0
    \Rightarrow \denom \Delta \potentialNode_{\anode}^{\clusterB\clusterA} > 0
    \Rightarrow  \denom \Delta \potentialNode_{\anode}^{\clusterB\clusterA} \ge 1
    &\Rightarrow \Delta \potentialNode_{\anode}^{\clusterB\clusterA} \ge \frac{1}{\denom},
    \label{eq:pass2}
\end{align}
where the first strict inequality follows from the condition for a move in Algorithm~\ref{alg:simple-better-response} (line~\ref{if:increases}), which requires a strictly positive gain in potential.
The second implication follows from the fact that $\denom \Delta \potentialNode_{\anode}^{\clusterB\clusterA} \in \mathbb{Z}$  (see Eq.~\eqref{eq:dv} and recall that we assume   that $\resolution$ is rational).  As communities  $\clusterA$ and $\clusterB$ were chosen arbitrarily, the argument holds for any such pair of communities. Thus, the proof is complete.

\end{proof}

\subsubsection{Extremal Resolution Parameters}
\label{sec:fixed_vs_variable_resolution}

In this subsection we  consider extremal values for the resolution parameter, $\resolution=0$ or $\resolution=1$. Before doing so, we begin by distinguishing between two cases: one wherein the resolution parameter is fixed a priori, and another where it is variable and given as input to the problem. 
\begin{definition}[Fixed vs. Variable Resolution Parameter]
We say that the resolution parameter $\resolution$ is:
\begin{itemize}
    \item \textit{Fixed}, if its value is externally given and does not form part of the input;
    \item \textit{Variable}, if its value is provided as part of the input. In this case,   we assume that $\resolution = b/c$, where $b, c \in \mathbb{N}$ and $b \leq c$.
\end{itemize}
\end{definition}

\begin{theorem}[Convergence for Extremal  Resolution Parameters]
Let $\resolution \in \{0,1\}$ be the resolution parameter used in the hedonic game-based community detection algorithm. Then the algorithm converges to an equilibrium partition in at most $\nvertices^2$ steps, where $\nvertices$ is the number of nodes in the graph. \label{theo:extremal}
\end{theorem}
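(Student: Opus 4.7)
The plan is to specialize the argument of Theorem~\ref{THEO:EQUILIBRIUM} to the cases $\resolution = 0$ and $\resolution = 1$, both of which correspond to rational values with denominator $\denom = 1$. The general bound $2\denom\nvertices^2$ from that theorem already yields $O(\nvertices^2)$ for free; the task is merely to sharpen the multiplicative constant, which I would do by replacing the loose interval $[-\nvertices^2, \nvertices^2]$ used for $\potentialPartition^\resolution$ with tighter, closed-form bounds valid at the extremal resolutions.

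First, I would separately write down the specialized forms of $\potentialPartition^\resolution$ from Eq.~\eqref{eq:neighnonnei1}. For $\resolution = 0$, the potential collapses to $\potentialPartition^0(\partition) = \sum_{k} m_k$, a nonnegative integer bounded above by the total number of edges, hence by $\binom{\nvertices}{2}$. For $\resolution = 1$, the potential collapses to $\potentialPartition^1(\partition) = -\sum_{k} \big(\binom{n_k}{2} - m_k\big)$, a nonpositive integer bounded below by $-\binom{\nvertices}{2}$. In both cases the range of attainable values of $\potentialPartition^\resolution$ has length at most $\binom{\nvertices}{2}$.

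Second, I would reuse Eq.~\eqref{eq:dv}. Setting $\denom = 1$ (and $\numer \in \{0, 1\}$) in that identity shows that $\Delta \potentialNode_\anode^{\clusterB \clusterA}$ is itself an integer, without any rescaling. Since Algorithm~\ref{alg:simple-better-response} only performs a move when the potential change is strictly positive (line~\ref{if:increases}), every accepted move satisfies $\Delta \potentialNode_\anode^{\clusterB \clusterA} \geq 1$. Invoking the equivalence $\Delta \potentialNode_\anode^{\clusterB \clusterA} = \Delta \potentialPartition_\anode^{\clusterB \clusterA}$ established in Section~\ref{sec:potential_equivalence} (Eq.~\eqref{eq:nodeeqpartition}), each move strictly increases the partition potential by at least one unit.

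Finally, combining the two ingredients, the algorithm can make at most $\binom{\nvertices}{2} < \nvertices^2$ accepted moves before the potential saturates at its upper bound, at which point no beneficial deviation remains and the resulting partition is an equilibrium in the sense of Definition~\ref{def:solution}. I do not anticipate any genuine obstacle: the result is a direct, quantitative tightening of Theorem~\ref{THEO:EQUILIBRIUM} exploiting that at $\resolution \in \{0,1\}$ the Hamiltonian becomes purely combinatorial (pure edge count, or pure non-edge count within communities), eliminating the factor of two inherited from the generic potential range used in the general proof.
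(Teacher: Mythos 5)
Your proposal is correct and follows essentially the same route as the paper's proof: at $\resolution\in\{0,1\}$ the potential is integer-valued, each accepted move raises it by at least $1$, and the potential is confined to a one-sided range of length $O(\nvertices^2)$, giving the stated bound. Your use of the tighter range $\binom{\nvertices}{2}$ in place of the paper's $\nvertices^2$ is a minor quantitative sharpening but does not change the argument.
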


\begin{proof}
\textbf{Case $\resolution = 0$.} In this case, the node potential simplifies to
\[
\potentialNode_{\anode}^{0}(\community_k) = d_i^{k},
\]
which is an integer. Each improving move increases the global potential by at least 1, and since in this case  the  partition potential is bounded above by $\nvertices^2$ and below by 0, convergence occurs in at most $\nvertices^2$ steps.

\textbf{Case $\resolution = 1$.} The node potential becomes
\[
\potentialNode_{\anode}^{1}(\community_k) = -\hat{d}_i^{k}  = d_i^{k} - \left(n_{k} - \delta(\sigma_i = k)\right),
\]
which is also an integer. Again, each improving move strictly increases the global potential by at least 1.  In this case,  the partition potential is bounded below by $-\nvertices^2$ and above by 0, leading to convergence in at most $\nvertices^2$ steps.
\end{proof}

More generally, for any fixed rational resolution parameter $\resolution = b/c$ with constant $c$, the algorithm still converges in $O(\nvertices^2)$ steps.
 In contrast, if $\resolution = b/c$ is provided as part of the input (i.e., variable $\resolution$), then $c$ may grow with input size, and convergence time becomes $O(c \nvertices^2)$, which is pseudo-polynomial with respect to $\log c$.

\subsubsection{Best- vs. Better-Response Dynamics}
\label{sec:best_vs_better}

In this section, we compare best-response and better-response dynamics. 
Here, a best-response dynamic means always selecting the node that maximizes its potential gain (line~\ref{for:node} of Algorithm~\ref{alg:simple-better-response}) and then applying a node-level best response (line~\ref{best:communities}). 
At first glance, one might expect this globally greedy strategy to converge the fastest. 
However, our analysis shows that this is not always the case: there exist instances where the simpler better-response dynamic reaches equilibrium in fewer steps, demonstrating that the best-response approach is not necessarily the most efficient path to convergence.

\begin{theorem}[Comparison of Convergence: Best-Response Dynamics vs. Better-Response Dynamics]
In hedonic games with potential-based dynamics, the convergence time under
better-response dynamics
may be equal to, shorter than, or longer than the convergence time under best-response dynamics. Specifically:
\begin{itemize}
    \item[(i)] There exist instances where both better- and best-response dynamics converge in the same number of steps.
    \item[(ii)] There exist instances where best-response dynamics converge faster than better-response dynamics.
    \item[(iii)] There exist instances where better-response dynamics converge faster than best-response dynamics.
\end{itemize} \label{theo:bestbetter}
\end{theorem}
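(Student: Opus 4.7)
The plan is to prove the theorem by explicit construction: for each of the three cases, exhibit a small graph, an initial partition, and a rational resolution parameter $\resolution=\numer/\denom$, and then explicitly run both a best-response sequence (at line~\ref{for:node} of Algorithm~\ref{alg:simple-better-response}, always select the node with maximum individual gain, combined with the node-level best response at line~\ref{best:communities}) and a specific better-response sequence (any node with strictly positive gain moves, using line~\ref{better:communities}). One then counts the moves until equilibrium in each execution. Because the setting is an exact potential game (Section~\ref{sec:potential_equivalence}) with a bounded potential, both sequences terminate in finitely many steps by Theorem~\ref{THEO:EQUILIBRIUM}, so the comparison reduces to counting the length of each improving path from the same initial partition and checking that it matches the required inequality.

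Cases (i) and (ii) are the easier direction. For case (i), I would take an instance already at equilibrium, or a configuration in which exactly one node has a unique profitable destination: both dynamics execute the same sequence (zero or one move) and therefore the same number of steps. For case (ii), I would construct an instance containing a node $\anode$ whose best-response move yields the dominant gain and simultaneously stabilizes the remainder of the partition, so that best-response finishes in one step. A better-response execution that first selects a different node $\bnode$ with a strictly positive but smaller gain alters the local configuration so that $\anode$'s subsequent move is no longer the one that finalizes the partition; the system then evolves through a longer chain of small improving moves before reaching a (possibly different) local optimum. Verifying that each chosen move is admissible amounts to a direct calculation from Eq.~\eqref{eq:potential_node1}.

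For case (iii) I would reverse the structural asymmetry: design an instance in which the globally greedy best-response move destabilizes previously stable boundary nodes, forcing a cascade of corrective moves, while a seemingly smaller-gain better-response move leaves the rest of the partition stable and terminates in strictly fewer steps. A natural candidate is a node $\anode$ at the interface of two candidate communities $\clusterA$ and $\clusterB$, where the larger immediate gain corresponds to $\clusterA$, but inserting $\anode$ into $\clusterA$ raises the non-neighbor count $\cdegriA$ of some pre-existing member of $\clusterA$ past its robustness threshold, provoking a chain of deviations; meanwhile the move into $\clusterB$ leaves all other nodes locally stable. The main obstacle is calibrating the degree counts in $\clusterA$ and $\clusterB$ jointly with the rational resolution $\resolution=\numer/\denom$ so that simultaneously (a) the gain comparisons at $\anode$ order the candidate moves as intended, (b) only the greedy choice triggers the cascade of deviations, and (c) the better-response choice produces an equilibrium in one fewer step. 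Once a valid parameter tuple is fixed, the move counts follow from direct evaluation of Eq.~\eqref{eq:potential_node1} at each intermediate partition, and the three instances together establish parts (i), (ii), and (iii).
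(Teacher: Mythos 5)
Your overall strategy --- proving all three parts by exhibiting explicit instances and counting improving moves --- is exactly the paper's strategy, and your treatment of case (i) (an instance with a single profitable move, so both dynamics coincide) matches the paper's two-node example. The gap is that for cases (ii) and (iii) you never actually produce the witnesses: a proof of an existence claim \emph{is} the exhibition and verification of the instances, and your text repeatedly defers this (``I would construct\dots'', ``a natural candidate is\dots'', ``the main obstacle is calibrating\dots''). In particular, for case (iii) you explicitly leave open whether the degree counts and the rational $\resolution$ can be calibrated so that conditions (a)--(c) hold simultaneously; that calibration is precisely the content of the proof, so as written the argument for (iii) is incomplete rather than merely terse. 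The paper resolves it with a concrete four-node graph (Figure~\ref{fig:graphexamples}), initial partition $\{0,3\},\{1\},\{2\}$, and $\resolution\in\{0.1,0.25\}$, tabulating every gain (Table~\ref{tab:betterbest}, Figure~\ref{fig:examplegraph}) so that better-response reaches the grand coalition in two moves while best-response needs three.

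A secondary point: your proposed \emph{mechanism} for case (iii) --- the greedy move destabilizes previously stable boundary nodes and triggers a cascade of corrective moves --- is not the mechanism at work in the paper's example. There, both dynamics reach the same equilibrium with the same total potential gain $3-5\resolution$; the difference is purely that the greedy first move ($1-\resolution$) routes the trajectory through a three-edge path in the metagraph, whereas the smaller first move ($1-2\resolution$) reaches the sink in two edges. No node is ``destabilized'' by the greedy choice. Your cascade idea might also yield a valid instance, but it is harder to calibrate and you have not shown it can be made consistent; if you pursue it, you still owe the reader the explicit graph, partition, $\resolution$, and move-by-move gain computation from Eq.~\eqref{eq:potential_node1}. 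Similarly, for case (ii) the paper's clique-with-singletons example under $\resolution=0$ is a concrete family you could simply verify, rather than positing an unspecified instance where one best-response move finalizes the partition.
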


\begin{proof}
We demonstrate each case with a concrete example.

\textbf{(i) Same number of steps.} Consider a graph with two connected nodes and $\resolution = 0$. Let each node initially belong to its own community. Since the potential is given by the intra-community degree, only one node has an incentive to move, and doing so leads directly to the equilibrium. In this case, both better- and best-response dynamics perform the same single move and converge in one step.

\textbf{(ii) Best-response dynamics is faster}.
Consider a complete graph on $\nvertices$ nodes (a clique), and let $\resolution = 0$. Initially, each node is in a singleton community. Under best-response dynamics, at each step we select a node that maximizes the increase in potential,  joining the largest existing coalition. This causes the largest community to grow fastest, yielding convergence in $O(\nvertices)$ steps. However, under better-response dynamics, nodes may join suboptimal communities, leading to more fragmented growth and potentially $O(\nvertices \log \nvertices)$ convergence. This is because in each round, multiple small coalitions compete, and the number of rounds needed to merge them all increases logarithmically.


\textbf{(iii) Better-response dynamics is faster.}
Figure~\ref{fig:graphexamples} illustrates an instance where it is possible for better-response dynamics to converge faster than best-response dynamics, highlighting that a greedy approach is not always the most efficient path to equilibrium on a step-by-step basis.%
\footnote{Our analysis here focuses on demonstrating the existence of specific sample paths where a better-response dynamic converges in fewer steps than a globally greedy best-response one. A formal comparison of the average-case complexity of these dynamics is a non-trivial task that we leave as a subject for future work.}
Table~\ref{tab:betterbest} presents four instances, each with a different resolution parameter $\resolution$, showing initial configurations where the number of iterations until convergence is strictly smaller under better-response dynamics. This phenomenon arises because best-response dynamics is greedy: it selects the move with the highest immediate utility gain, potentially ignoring promising paths that would require intermediate, less rewarding steps. In contrast, better-response dynamics allow for smaller, yet improving steps that can ultimately lead to faster convergence.

In Cases 1 and 2, corresponding to $\resolution = 0.1$ and $\resolution = 0.25$, both best- and better-response dynamics converge to the same equilibrium: the grand coalition $\{0,1,2,3\}$. When $\resolution = 0.1$, the total utility gain from the initial partition to the grand coalition equals 2.5; when $\resolution = 0.25$, it equals 1.75.  Under better-response dynamics, the gains are divided as $0.8 + 1.7$ and $0.5 + 1.25$, respectively. In general, these correspond to $1-2\resolution$ and $2(1-\resolution)-\resolution=2-3\resolution$, respectively. Under best-response dynamics, they are divided as $0.9 + 0.9 + 0.7$ and $0.75 + 0.75 + 0.25$, respectively.   In general, these correspond to $1-\resolution$, $1-\resolution$ and $1-3\resolution$, respectively (see Figure~\ref{fig:examplegraph}). 
\end{proof}


 
\begin{figure}[h!]
\centering
\begin{minipage}{0.45\textwidth}
\centering
\begin{tikzpicture}[every node/.style={circle,draw,fill=light_gray,minimum size=6mm}, node distance=2cm]
\node (0) at (90:2cm) {0};
\node (1) at (0:2cm) {1};
\node (2) at (270:2cm) {2};
\node (3) at (180:2cm) {3};

\draw (0) -- (1);
\draw (0) -- (2);
\draw (0) -- (3);
\draw (1) -- (2);
\end{tikzpicture}
\caption{Graph used in examples showing better-response dynamics may converge faster than best-response dynamics.}
\label{fig:graphexamples}
\end{minipage}
\hfill
\begin{minipage}{0.5\textwidth}
\centering
\captionof{table}{Equilibrium partitions and corresponding $\resolution$ ranges.}
\label{tab:gamma_equilibria}
\begin{tabular}{ll}
\toprule
\textbf{Partition} & \textbf{ $\resolution$ Range} \\
\midrule
$\{0,1,2,3\}$                             & $[0.0, 1/3]$ \\
$\{0,1,2\}, \{3\}$                        & $[1/3, 1.0]$ \\
$\{0,3\}, \{1,2\}$                        & $1.0$ \\
$\{0,3\}, \{1\}, \{2\}$                   & $1.0$ \\
$\{0,1\}, \{2\}, \{3\}$                   & $1.0 $ \\
$\{0,2\}, \{1\}, \{3\}$                   & $1.0 $ \\
$\{0\}, \{1,2\}, \{3\}$                   & $1.0$ \\
$\{0\}, \{1\}, \{2\}, \{3\}$              & $1.0$ \\
\bottomrule
\end{tabular}
\end{minipage}
\end{figure}

\begin{table}[h!]
\centering
\caption{Comparison of better- vs best-response dynamics paths for two values of $\resolution$.}
\small
\begin{tabular}{@{}ccccp{6cm}p{6cm}@{}}
\toprule
\textbf{Case} & $\boldsymbol{\resolution}$ & \textbf{Strategy} & \textbf{Step} & \textbf{Move Description} & \textbf{Gain Details} \\
\midrule

1 & 0.1 & Better & 1 & 2: \{0,3\}, \{1\}, \{2\} $\rightarrow$ \{0,2,3\}, \{1\} 
& Gain: 0.80  Link: 0 $\rightarrow$ 1  Non-link: 0 $\rightarrow$ 1 \\
  &     &        & 2 & 1: \{0,2,3\}, \{1\} $\rightarrow$ \{0,1,2,3\} 
& Gain: 1.70  Link: 0 $\rightarrow$ 2  Non-link: 0 $\rightarrow$ 1 \\
  &     & Best   & 1 & 2: \{0,3\}, \{1\}, \{2\} $\rightarrow$ \{0,3\}, \{1,2\} 
& Gain: 0.90  Link: 0 $\rightarrow$ 1  Non-link: 0 $\rightarrow$ 0 \\
  &     &        & 2 & 0: \{0,3\}, \{1,2\} $\rightarrow$ \{0,1,2\}, \{3\} 
& Gain: 0.90  Link: 1 $\rightarrow$ 2  Non-link: 0 $\rightarrow$ 0 \\
  &     &        & 3 & 3: \{0,1,2\}, \{3\} $\rightarrow$ \{0,1,2,3\} 
& Gain: 0.70  Link: 0 $\rightarrow$ 1  Non-link: 0 $\rightarrow$ 2 \\

\midrule

2 & 0.25 & Better & 1 & 2: \{0,3\}, \{1\}, \{2\} $\rightarrow$ \{0,2,3\}, \{1\} 
& Gain: 0.50  Link: 0 $\rightarrow$ 1  Non-link: 0 $\rightarrow$ 1 \\
  &     &        & 2 & 1: \{0,2,3\}, \{1\} $\rightarrow$ \{0,1,2,3\} 
& Gain: 1.25  Link: 0 $\rightarrow$ 2  Non-link: 0 $\rightarrow$ 1 \\
  &     & Best   & 1 & 2: \{0,3\}, \{1\}, \{2\} $\rightarrow$ \{0,3\}, \{1,2\} 
& Gain: 0.75  Link: 0 $\rightarrow$ 1  Non-link: 0 $\rightarrow$ 0 \\
  &     &        & 2 & 0: \{0,3\}, \{1,2\} $\rightarrow$ \{0,1,2\}, \{3\} 
& Gain: 0.75  Link: 1 $\rightarrow$ 2  Non-link: 0 $\rightarrow$ 0 \\
  &     &        & 3 & 3: \{0,1,2\}, \{3\} $\rightarrow$ \{0,1,2,3\} 
& Gain: 0.25  Link: 0 $\rightarrow$ 1  Non-link: 0 $\rightarrow$ 2 \\

\bottomrule
\end{tabular} \label{tab:betterbest}
\end{table}

\begin{figure}[h!]
\centering
\begin{tikzpicture}[->, >=stealth, shorten >=1pt, auto, node distance=3.5cm,
                    semithick, every node/.style={rectangle, draw=black, fill=none, rounded corners}]

\node (a) at (0,0)      {$\{0,3\}, \{1\}, \{2\}$};
\node (b) at (4.5,2)    {$\{0,2,3\}, \{1\}$};
\node (c) at (9,0)      {$\{0,1,2,3\}$};
\node (d) at (2.5,-2)   {$\{0,3\}, \{1,2\}$};
\node (e) at (6.5,-2)   {$\{0,1,2\}, \{3\}$};

\path[->, thick, dark_blue]
    (a) edge[bend left=15] node[above left] {\small Gain = $1-2\resolution$} (b)
    (b) edge[bend left=15] node[above right] {\small Gain = $2-3\resolution$} (c);

\path[->, thick, dark_red, dashed]
    (a) edge[bend right=15] node[below left] {\small Gain = $1-\resolution$} (d)
    (d) edge node[below, yshift=-0.3cm] {\small Gain = $1-\resolution$} (e)
    (e) edge[bend right=10] node[below right] {\small Gain =$1-3\resolution$} (c);

\node[draw=none, fill=none] at (0,-3.3) {
\begin{tabular}{ll}
\textcolor{dark_blue}{\textbf{Better-response dynamics}} & \textcolor{dark_blue}{(solid blue)} \\
\textcolor{dark_red}{\textbf{Best-response dynamics}}    & \textcolor{dark_red}{(dashed red)}
\end{tabular}
};

\end{tikzpicture}

\caption{
Transition graph for $0 < \resolution < {1}/{3}$. Nodes represent partitions, and edges correspond to improving moves labeled with their utility gains. In both paths, the total potential gain is $3 - 5\resolution$. Under better-response dynamics (solid blue), the algorithm selects a smaller initial gain, followed by a larger subsequent gain, leading to faster convergence. In contrast, best-response dynamics (dashed red) selects the move with the highest immediate gain, but this choice results in smaller subsequent gains and more steps before reaching the grand coalition equilibrium.
} \label{fig:examplegraph}
\end{figure}
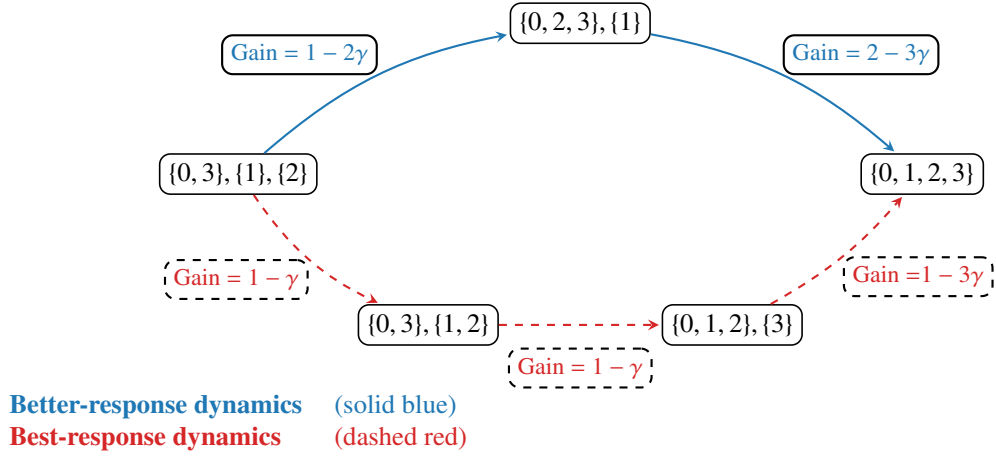

These findings reinforce the central message of our theoretical analysis. As shown in Theorem~\ref{THEO:EQUILIBRIUM}, the convergence time of the better-response dynamics depends on the inverse of the resolution parameter’s denominator (i.e., $\resolution = b/c \Rightarrow$ time $\propto c$), which leads to pseudo-polynomial complexity when $\resolution$ is part of the input. In contrast, Theorem~\ref{theo:extremal} shows that
for extremal values of $\resolution$ (specifically $\resolution \in \{0,1\}$), the convergence time is bounded by $O(\nvertices^2)$, independent of~$\resolution$.
It should be noted that a best-response dynamic is a specific instance of a better-response dynamic. Therefore, the worst-case pseudo-polynomial upper bound on convergence time established in Theorem~\ref{THEO:EQUILIBRIUM} also applies to best-response dynamics. Whether a tighter, more specific upper bound can be proven for the subset of best-response dynamics, which are more restrictive, remains an interesting open problem for future work.

Theorem~\ref{theo:bestbetter} adds another layer to this picture,
by showing that the number of steps to equilibrium also depends on the dynamics chosen (better- vs. best-response dynamics), and that best-response dynamics is not always superior in practice.
Furthermore, as shown in~\cite{traag2019leiden}, better-response dynamics can be crucial for finding certain optimal partitions that are unreachable by strictly greedy best-response dynamics.
These results suggest a nuanced view of convergence, where the structure of the graph, the resolution parameter, and the choice of dynamics all play a significant role. A natural direction for future work is to explore whether lookahead strategies or reinforcement learning can help agents select paths that balance short-term and long-term gains more effectively.

\subsubsection{Summary}
\label{sec:summary_convergence}

Theorem~\ref{THEO:EQUILIBRIUM} establishes that Algorithm~\ref{alg:simple-better-response} converges to an equilibrium partition in pseudo-polynomial time. This result has both theoretical and practical implications. Theoretically, it guarantees convergence under \textit{rational} resolution parameters, addressing a key concern in game-theoretic community detection. Practically, the algorithm's behavior is influenced by three \textit{design choices}:
\begin{enumerate}[i)]
    \item the selection of the resolution parameter $\resolution$, which governs the granularity of the detected communities;
    \item the initial partition of the graph; and
    \item the strategy for determining both the next candidate node to evaluate and the target community to which it may move when more than two communities are considered (lines~\ref{for:node} to~\ref{let:kappa} of Algorithm~\ref{alg:simple-better-response}).
\end{enumerate}

Our work focuses on the first two choices. For (i), Section~\ref{sec:robustness} demonstrates that partitions satisfying equilibrium conditions across a wide range of $\resolution$ values exhibit higher correlation with ground truth. For (ii), Section~\ref{sec:community_tracking} demonstrates that when the initial partition retains partial ground-truth information (such as in community tracking, where a once-accurate partition becomes outdated due to network evolution) the hedonic game can effectively use this prior knowledge to recover a partition much closer to the updated ground truth.

Design choice (iii) involves two interrelated aspects: node selection order, and target community determination during node movements.
Crucially, our convergence proof remains valid for any node selection order, as the minimum potential gain per move (established in Theorem~\ref{THEO:EQUILIBRIUM}) is the same regardless of the node processing sequence.
Regarding community targeting, the conventional greedy approach (always moving nodes to communities offering maximum potential gain) dominates current practice.
However, strategic alternatives permitting temporary regressions (negative potential changes) could theoretically enable escape from local optima.
This concept is central to metaheuristics like Simulated Annealing   that use   non-improving moves to escape local optima.

\subsection{The Robustness of Community Structures}
\label{sec:robustness}

We consider the problem of evaluating an equilibrium by its robustness, which we analyze from two perspectives: the stability of individual nodes and the nature of their potential moves. Section~\ref{sec:resolution_threshold} explores how the resolution parameter, $\resolution$, governs a node's decision-making, particularly for choices that involve a trade-off. In Section~\ref{sec:partial-robust}, we examine partitions that remain stable across a range of resolution values. Finally, Section~\ref{sec:fully-robust} investigates fully-robust partitions, where every node is stable for all possible resolution values. Section~\ref{sec:summary_robustness} then summarizes these findings to offer a unified perspective on partition stability.

\subsubsection{The Familiarity Index as a Decision Threshold for Moves}
\label{sec:resolution_threshold}

\begin{table}[t]
    \caption{Analysis of a potential move from community $\clusterB$ to community $\clusterA$ based on the resolution threshold $\resolution^\star$. For a frustrated choice, the move's incentive depends on the comparison between the global parameter $\resolution$ and the local threshold $\resolution^\star$ (which is precisely the \textbf{Familiarity Index}).}
    \centering \resizebox{\textwidth}{!}{
    \begin{tabular}{|c||c|c|l|}
        \hline
        \textbf{Subcase} & \textbf{Case 1: $\Delta \hat{d} + \Delta d > 0$} & \textbf{Case 2: $\Delta \hat{d} + \Delta d < 0$}  & \textbf{Nature of the Choice} \\
        & ($n_{\clusterA} > n_{\clusterB} - 1)$ & $(n_{\clusterA} < n_{\clusterB} - 1)$ & \\
        \hline \hline
        (a) $\Delta d \geq 0$, $\Delta \hat{d} \leq 0$ & $\resolution^\star \geq 1$ & $\resolution^\star \leq 0$ & The move to $\clusterA$ is unambiguously preferred, regardless of $\resolution$. \\
        \hline
        (b) $\Delta d \leq 0$, $\Delta \hat{d} \geq 0$ & $\resolution^\star \leq 0$  & $\resolution^\star \geq 1$ & The move to $\clusterA$ is unambiguously unpreferred, regardless of $\resolution$. \\
        \hline
        (c) $\Delta d \geq 0$, $\Delta \hat{d} \geq 0$ & $0 < \resolution^\star < 1$ & N/A & This is a frustrated choice. The move yields gain if $\resolution < \resolution^\star$. \\
        \hline
        (d) $\Delta d \leq 0$, $\Delta \hat{d} \leq 0$ & N/A & $0 < \resolution^\star < 1$ & This is a frustrated choice. The move yields gain if $\resolution > \resolution^\star$. \\
        \hline
    \end{tabular} }
    \label{tab:node-stability}
\end{table}

The resolution parameter, $\resolution$, is crucial for resolving the trade-offs inherent in community detection. For a node facing a frustrated choice (a conflict between joining a community with more friends versus one with fewer strangers) $\resolution$ acts as the decisive factor. In this section, we formalize this relationship. We demonstrate that a node's decision to move is determined by comparing $\resolution$ to a critical threshold, which is precisely the \textbf{Familiarity Index}, $F_\anode$, introduced in Eq.~\eqref{eq:familiarity_index}.

For a node $\anode$ currently in community $\clusterB$, a move to a different community $\clusterA$ is beneficial only if it increases its potential. The node is therefore stable if, for all alternative communities $\clusterA \neq \clusterB$, the following condition holds:
\begin{equation}
    \potentialNode_{\anode}^\resolution(\clusterB) \geq \potentialNode_{\anode}^\resolution(\clusterA \cup \{\anode\}), \quad \forall \clusterA \neq \clusterB.
\end{equation}
By substituting the definition of node potential from Eq.~\eqref{eq:potential_node1} and rearranging the terms, we can express the stability condition as:
\begin{equation}
(1 - \resolution)d_{\anode}^\clusterB - \resolution(n_\clusterB - d_{\anode}^\clusterB - 1) \geq (1 - \resolution)d_{\anode}^\clusterA - \resolution(n_\clusterA - d_{\anode}^\clusterA) \Rightarrow  d_{\anode}^\clusterA - d_{\anode}^\clusterB \leq \resolution (n_\clusterA - n_\clusterB + 1).
\end{equation}
Let the difference in neighbors, non-neighbors, and number of nodes in the communities be given by
\begin{equation}
  \Delta d =   d_{\anode}^\clusterA - d_{\anode}^\clusterB, \quad \Delta \hat{d} = n_\clusterA - d_{\anode}^\clusterA - ( n_\clusterB - d_{\anode}^\clusterB - 1), \quad \Delta n = n_\clusterA - n_\clusterB +1. 
\end{equation}
As shown in the introduction, this total change in community composition can be decomposed into the change in friends and the change in strangers, $\Delta n = \Delta d + \Delta \hat{d}$. The stability condition thus becomes:
\begin{equation}
\Delta d \leq \resolution (\Delta d + \Delta \hat{d}) \Rightarrow \frac{\Delta d }{ \Delta \hat{d} + \Delta d} \leq \resolution.
\end{equation}
Let  $\resolution^\star$ be the critical resolution value for which the potential gain corresponding to a move of node $i$ from 
$\clusterB$ to $\clusterA$ is zero. This threshold is given by:
%
\begin{equation}
    \resolution^{\star} = \frac{d_{\anode}^\clusterA - d_{\anode}^\clusterB}{n_\clusterA - n_\clusterB + 1} = \frac{d_{\anode}^\clusterA - d_{\anode}^\clusterB}{ d_{\anode}^\clusterA - d_{\anode}^\clusterB  + n_\clusterA - d_{\anode}^\clusterA - ( n_\clusterB - d_{\anode}^\clusterB - 1) } = \frac{\Delta d}{\Delta d + \Delta \hat{d}}=\frac{\Delta d}{\Delta n}.
    \label{eq:resolution_lower_bound}
\end{equation}
Comparing this result with Eq.~\eqref{eq:familiarity_index}, we see that this critical threshold is identical to the Familiarity Index for the potential move: $\resolution^\star = F_\anode(\clusterB, \clusterA)$.
To simplify presentation, we assume that $\Delta \hat{d} + \Delta d \neq 0$, and that either $\Delta \hat{d} \neq 0$ or $\Delta {d} \neq 0$. In addition, we assume $\resolution \neq \resolution^{\star}$.

Consequently, the stability of a node hinges on a direct comparison between the global   resolution parameter $\resolution$ and the locally computed Familiarity Index $\resolution^\star$. The outcome depends on the nature of the move, specifically whether the destination community is larger or smaller in size ($\Delta n > 0$ or $\Delta n < 0$):
\begin{itemize}
    \item \textbf{Case 1: Moving to a larger effective community ($\Delta n > 0$)}. A move is beneficial if $\resolution < \resolution^\star$. Here, a low $\resolution$ prioritizes friend-maximization. The node remains stable if $\resolution \geq \resolution^\star$, where the penalty for gaining strangers outweighs the benefit of new friends.
    \item \textbf{Case 2: Moving to a smaller effective community ($\Delta n < 0$)}. The inequality flips, and a move becomes beneficial if $\resolution > \resolution^\star$. Here, a high $\resolution$ rewards the reduction in strangers, making the move attractive. The node remains stable if $\resolution \leq \resolution^\star$.
\end{itemize}

Table~\ref{tab:node-stability} provides a comprehensive summary of these conditions.
The non-frustrated subcases (a) and (b) correspond to unambiguous choices where the outcome is independent of $\resolution \in [0,1]$, as the Familiarity Index $\resolution^\star$ falls outside the $(0,1)$ range. The frustrated subcases (c) and (d) are where $\resolution^\star \in (0,1)$, and the node's stability becomes critically dependent on the choice of $\resolution$. This analysis provides a clear bridge between the micro-level incentives of individual nodes and the macro-level behavior of the CPM, grounding it in the interpretable trade-off captured by the Familiarity Index.

\subsubsection{$[\resolution_0, \resolution_1]\--$Robust Equilibrium}
\label{sec:partial-robust}


What is the impact of $\resolution$ on  an equilibrium? There are  partitions that correspond to an equilibrium for a given value of $\resolution$ and that remain an equilibrium after $\resolution$   varies within a range of values. We refer to an equilibrium as $[\resolution_0,\resolution_1]\--$\emph{robust} if it remains an equilibrium $\forall {\resolution} \in  [\resolution_0,\resolution_1]$. We denote by  $\partition^{[\resolution_0, \resolution_1]}$ a partition that is an equilibrium for $\resolution \in [\resolution_0, \resolution_1]$. We hypothesize that if an equilibrium is robust over a broader range, it is more likely to accurately capture the ground truth.

\begin{figure}[t]
    \centering
    \includegraphics[width=0.7\textwidth]{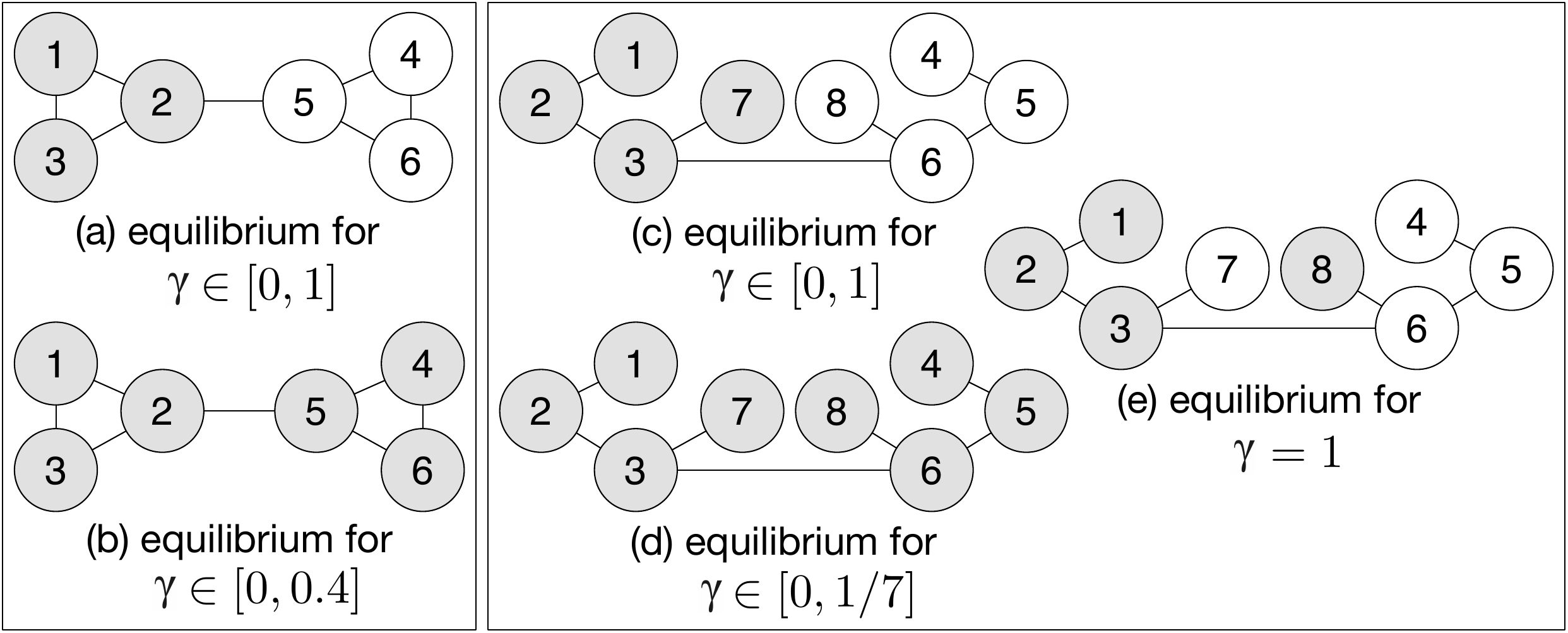}
    \vspace{-0.1in}
    \caption{Robustness of hedonic equilibria.} 
    \label{fig:toy}
\end{figure}

Next, we illustrate  through two examples the intuitive idea that  if an equilibrium is robust over a broader
range of values, it is more likely to accurately capture the ground truth.
Figure~\ref{fig:toy}(a) shows a 6-node network comprising two cliques of 3 nodes connected through a single edge. Communities $\clusterA$ and $\clusterB$ correspond to nodes $\{1,2,3\}$ and $\{4,5,6\}$, respectively. If $\resolution=0$, the hedonic value of each node equals $2$, which is the number of neighbors of each node in its community. If one of the nodes $1, 3, 4$ or $6$ deviates to the other community, its hedonic value decreases to $0$. If nodes $2$ or $5$ deviate, their hedonic value decreases to $1$. For $\resolution=0$, the configuration wherein all nodes are in the same community (grand coalition) is also an equilibrium (Fig.~\ref{fig:toy}(b)). However, the latter equilibrium is less robust than the former, as the grand coalition   holds as an equilibrium   for $\resolution \in [0,0.4]$
whereas the two-community symmetric equilibrium holds for $\resolution \in [0, 1]$.
It is important to note that these partitions are \textit{equilibria}, i.e., no single node has an incentive to unilaterally move, but are not necessarily the global optima of the partition potential function for the entire range of $\resolution$.

In another example (Figures~\ref{fig:toy}(c)-(e)), the equilibrium that minimizes the number of edges across communities, different from the grand coalition, is the most robust (Fig.~\ref{fig:toy}(c)). It holds for   $\resolution \in [0, 1]$, whereas the other equilibria hold for a strict subset of values of $\resolution$, as indicated in Figure~\ref{fig:toy}. In particular, the grand coalition holds for    $\resolution \in [0, 1/7]$ (Fig.~\ref{fig:toy}(d)).

The following theorem indicates that to show a given equilibrium holds for a range of values of $\resolution$, it suffices to consider the two extremes of the range.

\begin{theorem} \label{THEO:RANGE_EQ}
    Given a $\resolution$-equilibrium partition $\partition^\resolution$ that holds as equilibrium for $\resolution=\resolution_0$ and for $\resolution=\resolution_1$, with $\resolution_1 > \resolution_0$,  then $\partition^\resolution$ is  an $[\resolution_0, \resolution_1]\--$robust equilibrium, denoted as  $\partition^{[\resolution_0,  \resolution_1]}$.
\end{theorem}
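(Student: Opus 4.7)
The plan is to exploit the key structural observation that the potential gain for any single node move is an \emph{affine} (hence convex and concave) function of the resolution parameter $\resolution$. From Eq.~\eqref{eq:delta_node}, for any node $\anode$ considering a move from its current community $\clusterB$ to some alternative $\clusterA$, the gain is
\begin{equation*}
    \Delta \potentialNode_{\anode}^{\clusterB \clusterA}(\resolution) = (d_\anode^{\clusterA} - d_\anode^{\clusterB}) - \resolution\,(n_{\clusterA} - n_{\clusterB} + 1),
\end{equation*}
which is linear in $\resolution$ because the coefficients $d_\anode^\clusterA, d_\anode^\clusterB, n_\clusterA, n_\clusterB$ depend only on the partition $\partition^\resolution$, not on $\resolution$ itself.

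From there, I would argue as follows. Since $\partition^\resolution$ is an equilibrium at $\resolution_0$ and at $\resolution_1$, the local stability condition from Section~\ref{sec:hedonic-leiden} gives $\Delta \potentialNode_{\anode}^{\clusterB \clusterA}(\resolution_0) \le 0$ and $\Delta \potentialNode_{\anode}^{\clusterB \clusterA}(\resolution_1) \le 0$ for every node $\anode$ and every alternative community $\clusterA \neq \clusterB = \community_{\sigma_\anode}$. Fix an arbitrary $\resolution \in [\resolution_0, \resolution_1]$ and write it as a convex combination $\resolution = \lambda \resolution_0 + (1-\lambda)\resolution_1$ with $\lambda \in [0,1]$. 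By linearity of $\Delta \potentialNode_{\anode}^{\clusterB \clusterA}(\cdot)$,
\begin{equation*}
    \Delta \potentialNode_{\anode}^{\clusterB \clusterA}(\resolution) = \lambda\, \Delta \potentialNode_{\anode}^{\clusterB \clusterA}(\resolution_0) + (1-\lambda)\, \Delta \potentialNode_{\anode}^{\clusterB \clusterA}(\resolution_1) \le 0.
\end{equation*}
Because the node $\anode$ and the alternative community $\clusterA$ were arbitrary, no node has an incentive to deviate at resolution $\resolution$, so $\partition^\resolution$ is a $\resolution$-equilibrium for every $\resolution \in [\resolution_0, \resolution_1]$, which is precisely the definition of a $[\resolution_0,\resolution_1]\--$robust equilibrium.

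I do not foresee a genuine obstacle here; the main subtlety worth flagging is the observation that the coefficients $d_\anode^{\clusterA} - d_\anode^{\clusterB}$ and $n_{\clusterA} - n_{\clusterB} + 1$ are invariants of the partition alone and therefore do not themselves depend on the resolution parameter. This is what allows the clean linear-interpolation argument. Alternatively, one can state it as a threshold argument using the Familiarity Index $\resolution^\star = F_\anode(\clusterB,\clusterA)$ from Eq.~\eqref{eq:resolution_lower_bound} and Table~\ref{tab:node-stability}: each pairwise move has a single critical value $\resolution^\star$, and stability at both endpoints of $[\resolution_0,\resolution_1]$ implies that $\resolution^\star$ does not lie strictly inside that interval in a direction that would produce a beneficial deviation, hence stability holds throughout.
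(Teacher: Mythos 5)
Your proposal is correct and follows essentially the same route as the paper's own proof: both rely on the fact that $\Delta \potentialNode_{\anode}^{\clusterB\clusterA}(\resolution)$ from Eq.~\eqref{eq:delta_node} is linear in $\resolution$ with partition-dependent (resolution-independent) coefficients, so nonpositivity at the two endpoints $\resolution_0$ and $\resolution_1$ propagates to every $\resolution$ in between. You merely make the convex-combination step explicit where the paper simply invokes linearity, so there is nothing to add.
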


\begin{proof}

We consider a given node $\anode$ of interest, part of community $\clusterB$. Then, we consider a tagged community $\clusterA$, with $\clusterA \neq \clusterB$.
We show that node $\anode$ has no incentive to change from $\clusterB$ to   $\clusterA$, for any $\resolution \in [\resolution_0, \resolution_1]$, as long as it has no incentive to change for $\resolution=\resolution_0$ and $\resolution=\resolution_1$. Using the same terminology as in Theorem~\ref{THEO:EQUILIBRIUM} and leveraging~\eqref{eq:delta_node}, $\Delta \potentialNode_{\anode}^{\clusterB\clusterA}(\resolution)$ is a linear function of $\resolution$, so $\Delta \potentialNode_{\anode}^{\clusterB\clusterA}(\resolution_0) \leq 0$ and $\Delta \potentialNode_{\anode}^{\clusterB\clusterA}(\resolution_1) \leq 0$ imply $\Delta \potentialNode_{\anode}^{\clusterB\clusterA}(\resolution^*) \leq 0$ for $\resolution^* \in [\resolution_0, \resolution_1]$. This holds for any node in community $\clusterB$. By symmetry, the same arguments apply for nodes in community $\clusterA$. As communities  $\clusterA$ and $\clusterB$ were chosen arbitrarily, the argument holds for any such pair of communities. Thus, the proof is complete.
\end{proof}

\subsubsection{Fully-Robust Partitions}
\label{sec:fully-robust}

Recall that a partition $\partition^{[0,1]}$  is a partition that holds as an equilibrium for all $\resolution$, where $\resolution \in [0,1].$ We denote such partition as a fully-robust partition. Next, we further discuss the characterization of fully-robust partitions, and provide a simple criterion to find balanced fully-robust partitions.

\paragraph{\textbf{Characterization}}
Recall from Eq.~\eqref{eq:robustness} that a node $\anode$ is robust if its current community, $\sigma_\anode$, is the optimal choice compared to all other communities, meaning it simultaneously maximizes the number of neighbors ($d^{\sigma_\anode}_{\anode}$) and minimizes the number of non-neighbors ($\hat{d}^{\sigma_\anode}_{\anode}$). The indicator function $R(i, \partition)$ captures this condition, evaluating to 1 if the node is robust and 0 otherwise.

\begin{definition}[Fully-robust partition]\label{def:fully-robust}
A partition is fully-robust if  $R(i,\pi)=1$ for  every node $i$.  
\end{definition}
%
%
The equivalence between this definition and $[0,1]$-robustness follows directly from Theorem~\ref{THEO:RANGE_EQ} and the linearity of potential functions. Indeed, if a partition is robust for $\resolution=0$, it means that $d^{\nodecomm_{\anode}}_{\anode} \geq d^{\community_k}_{\anode}$ for all $k$,    for all nodes.  Similarly, if a partition is robust for $\resolution=1$, it means that $\hat{d}^{\nodecomm_{\anode}}_{\anode} \leq \hat{d}^{\community_k}_{\anode}$, for all $k$,    for all nodes.  Therefore,  a partition is $[0,1]$-robust if and only if it is fully-robust according to the above definition.

\paragraph{\textbf{Relation to Frustrated Potts Model}}
The frustrated Potts model encompasses a value function given by Eq.~\eqref{eq:pair_value}. \emph{Frustration} refers to the competition between ferromagnetic 
($A_{ij} = 1$, and $ 1-\resolution >0$) and antiferromagnetic ($A_{ij} = 0$, and $ -\resolution <0$) interactions in a heterogeneous system.  
In statistical physics,  some systems with competing interactions
cannot achieve a configuration that satisfies all local constraints simultaneously.  
Similarly, in the model considered in this work, a node faces a frustrated choice if a potential move requires trading off the number of neighbors against the number of non-neighbors.
A non-robust node might face either frustrated or unambiguous choices when considering a move. If all available moves that offer any improvement are frustrated, the node must make a trade-off.
Table~\ref{tab:node-stability} summarizes, in subcases (c) and (d), the conditions corresponding to  frustrated choices. A  non-robust node can exist in an equilibrium if all its potential (frustrated) moves are disincentivized by the current value of $\resolution$.

\paragraph{\textbf{Illustrative Examples}}
Figure~\ref{fig:toy} illustrates two examples of   fully-robust partitions that better capture communities  than alternative equilibria.
The concept of a fully robust partition is flexible, as its stability is determined by the set of available communities a node can move to. Figure~\ref{fig:toy-coalitions} illustrates this by showing multiple partitions that can be considered fully robust, depending on the constraints imposed on community formation. The most critical constraint is whether a node is permitted to \emph{isolate itself} into a new, singleton community, irrespectively of the current partition state.

\paragraph{\textbf{Node Isolation}}
If node isolation is allowed, any node in a community with at least one non-neighbor (a ``stranger'') will prefer to form its own community when the resolution parameter $\resolution=1$, since an isolated node has zero strangers. In such an unconstrained scenario, only partitions where every community is a clique could be fully robust. However, by limiting the available choices, for example, by setting a maximum number of communities, $K$, the notion of full robustness becomes more applicable  (see Assumption~\ref{ass:mainassumption}).
This constraint is what allows non-trivial partitions, like the macro-level divisions in Figure~\ref{fig:toy-coalitions}(b), to be stable. A node in this partition is fully robust because, given the fixed set of communities, it has no better option. Even the grand coalition (Figure~\ref{fig:toy-coalitions}(a)) can be trivially considered fully robust if $K=1$, as no alternative communities exist. Therefore, full robustness is not an absolute property but rather a state of equilibrium relative to a defined set of possible partitions.

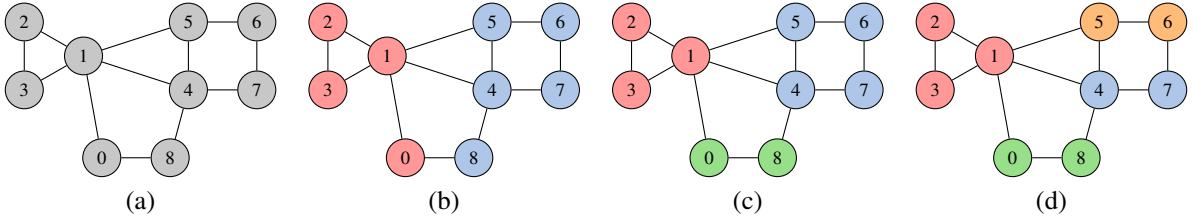
\begin{figure}[t!]
    \centering
    \begin{tabular}{cccc}
    \begin{tikzpicture}[scale=0.6, every node/.style={mynode}] 
        \pgfmathsetmacro{\ud}{1.0} 
        \pgfmathsetmacro{\dsq}{1.5*\ud} 
        \pgfmathsetmacro{\gap}{0.4*\ud} 

        \pgfmathsetmacro{\ylow}{-0.5*\dsq}
        \pgfmathsetmacro{\yhigh}{\ylow + \dsq}
        \pgfmathsetmacro{\ymid}{\ylow + 0.5*\dsq}
        \pgfmathsetmacro{\ybottom}{-1.5*\dsq}

        \pgfmathsetmacro{\xeight}{0}
        \pgfmathsetmacro{\xzero}{-\dsq}
        
        \pgfmathsetmacro{\xfour}{\xeight + \gap}
        \pgfmathsetmacro{\xfive}{\xfour}
        \pgfmathsetmacro{\xsix}{\xfour + \dsq}
        \pgfmathsetmacro{\xseven}{\xsix}

        \pgfmathsetmacro{\xone}{\xzero - \gap}
        \pgfmathsetmacro{\xtri}{\xone - sqrt(0.75)*\dsq}
        \pgfmathsetmacro{\xtwo}{\xtri}
        \pgfmathsetmacro{\xthree}{\xtri}

        \node (n0) at (\xzero, \ybottom) {0};
        \node (n8) at (\xeight, \ybottom) {8};
        \node (n1) at (\xone, \ymid) {1};
        \node (n2) at (\xtwo, \yhigh) {2};
        \node (n3) at (\xthree, \ylow) {3};
        \node (n4) at (\xfour, \ylow) {4};
        \node (n5) at (\xfive, \yhigh) {5};
        \node (n6) at (\xsix, \yhigh) {6};
        \node (n7) at (\xseven, \ylow) {7};

        \draw (n0) -- (n1); \draw (n0) -- (n8); \draw (n1) -- (n2); \draw (n1) -- (n3);
        \draw (n1) -- (n4); \draw (n1) -- (n5); \draw (n2) -- (n3); \draw (n4) -- (n5);
        \draw (n5) -- (n6); \draw (n6) -- (n7); \draw (n7) -- (n4); \draw (n4) -- (n8);
    \end{tikzpicture} &
    \begin{tikzpicture}[scale=0.6, every node/.style={mynode}]
        \pgfmathsetmacro{\ud}{1.0}
        \pgfmathsetmacro{\dsq}{1.5*\ud}
        \pgfmathsetmacro{\gap}{0.4*\ud}

        \pgfmathsetmacro{\ylow}{-0.5*\dsq}
        \pgfmathsetmacro{\yhigh}{\ylow + \dsq}
        \pgfmathsetmacro{\ymid}{\ylow + 0.5*\dsq}
        \pgfmathsetmacro{\ybottom}{-1.5*\dsq}

        \pgfmathsetmacro{\xeight}{0}
        \pgfmathsetmacro{\xzero}{-\dsq}
        
        \pgfmathsetmacro{\xfour}{\xeight + \gap}
        \pgfmathsetmacro{\xfive}{\xfour}
        \pgfmathsetmacro{\xsix}{\xfour + \dsq}
        \pgfmathsetmacro{\xseven}{\xsix}

        \pgfmathsetmacro{\xone}{\xzero - \gap}
        \pgfmathsetmacro{\xtri}{\xone - sqrt(0.75)*\dsq}
        \pgfmathsetmacro{\xtwo}{\xtri}
        \pgfmathsetmacro{\xthree}{\xtri}

        \node [rednode] (n0) at (\xzero, \ybottom) {0};
        \node [bluenode] (n8) at (\xeight, \ybottom) {8};
        \node [rednode] (n1) at (\xone, \ymid) {1};
        \node [rednode] (n2) at (\xtwo, \yhigh) {2};
        \node [rednode] (n3) at (\xthree, \ylow) {3};
        \node [bluenode] (n4) at (\xfour, \ylow) {4};
        \node [bluenode] (n5) at (\xfive, \yhigh) {5};
        \node [bluenode] (n6) at (\xsix, \yhigh) {6};
        \node [bluenode] (n7) at (\xseven, \ylow) {7};

        \draw (n0) -- (n1); \draw (n0) -- (n8); \draw (n1) -- (n2); \draw (n1) -- (n3);
        \draw (n1) -- (n4); \draw (n1) -- (n5); \draw (n2) -- (n3); \draw (n4) -- (n5);
        \draw (n5) -- (n6); \draw (n6) -- (n7); \draw (n7) -- (n4); \draw (n4) -- (n8);
    \end{tikzpicture} &
    \begin{tikzpicture}[scale=0.6, every node/.style={mynode}]
        \pgfmathsetmacro{\ud}{1.0}
        \pgfmathsetmacro{\dsq}{1.5*\ud}
        \pgfmathsetmacro{\gap}{0.4*\ud}

        \pgfmathsetmacro{\ylow}{-0.5*\dsq}
        \pgfmathsetmacro{\yhigh}{\ylow + \dsq}
        \pgfmathsetmacro{\ymid}{\ylow + 0.5*\dsq}
        \pgfmathsetmacro{\ybottom}{-1.5*\dsq}

        \pgfmathsetmacro{\xeight}{0}
        \pgfmathsetmacro{\xzero}{-\dsq}
        
        \pgfmathsetmacro{\xfour}{\xeight + \gap}
        \pgfmathsetmacro{\xfive}{\xfour}
        \pgfmathsetmacro{\xsix}{\xfour + \dsq}
        \pgfmathsetmacro{\xseven}{\xsix}

        \pgfmathsetmacro{\xone}{\xzero - \gap}
        \pgfmathsetmacro{\xtri}{\xone - sqrt(0.75)*\dsq}
        \pgfmathsetmacro{\xtwo}{\xtri}
        \pgfmathsetmacro{\xthree}{\xtri}

        \node [greennode] (n0) at (\xzero, \ybottom) {0};
        \node [greennode] (n8) at (\xeight, \ybottom) {8};
        \node [rednode] (n1) at (\xone, \ymid) {1};
        \node [rednode] (n2) at (\xtwo, \yhigh) {2};
        \node [rednode] (n3) at (\xthree, \ylow) {3};
        \node [bluenode] (n4) at (\xfour, \ylow) {4};
        \node [bluenode] (n5) at (\xfive, \yhigh) {5};
        \node [bluenode] (n6) at (\xsix, \yhigh) {6};
        \node [bluenode] (n7) at (\xseven, \ylow) {7};

        \draw (n0) -- (n1); \draw (n0) -- (n8); \draw (n1) -- (n2); \draw (n1) -- (n3);
        \draw (n1) -- (n4); \draw (n1) -- (n5); \draw (n2) -- (n3); \draw (n4) -- (n5);
        \draw (n5) -- (n6); \draw (n6) -- (n7); \draw (n7) -- (n4); \draw (n4) -- (n8);
    \end{tikzpicture} &
    \begin{tikzpicture}[scale=0.6, every node/.style={mynode}]
        \pgfmathsetmacro{\ud}{1.0}
        \pgfmathsetmacro{\dsq}{1.5*\ud}
        \pgfmathsetmacro{\gap}{0.4*\ud}

        \pgfmathsetmacro{\ylow}{-0.5*\dsq}
        \pgfmathsetmacro{\yhigh}{\ylow + \dsq}
        \pgfmathsetmacro{\ymid}{\ylow + 0.5*\dsq}
        \pgfmathsetmacro{\ybottom}{-1.5*\dsq}

        \pgfmathsetmacro{\xeight}{0}
        \pgfmathsetmacro{\xzero}{-\dsq}
        
        \pgfmathsetmacro{\xfour}{\xeight + \gap}
        \pgfmathsetmacro{\xfive}{\xfour}
        \pgfmathsetmacro{\xsix}{\xfour + \dsq}
        \pgfmathsetmacro{\xseven}{\xsix}

        \pgfmathsetmacro{\xone}{\xzero - \gap}
        \pgfmathsetmacro{\xtri}{\xone - sqrt(0.75)*\dsq}
        \pgfmathsetmacro{\xtwo}{\xtri}
        \pgfmathsetmacro{\xthree}{\xtri}

        \node [greennode] (n0) at (\xzero, \ybottom) {0};
        \node [greennode] (n8) at (\xeight, \ybottom) {8};
        \node [rednode] (n1) at (\xone, \ymid) {1};
        \node [rednode] (n2) at (\xtwo, \yhigh) {2};
        \node [rednode] (n3) at (\xthree, \ylow) {3};
        \node [bluenode] (n4) at (\xfour, \ylow) {4};
        \node [orangenode] (n5) at (\xfive, \yhigh) {5};
        \node [orangenode] (n6) at (\xsix, \yhigh) {6};
        \node [bluenode] (n7) at (\xseven, \ylow) {7};

        \draw (n0) -- (n1); \draw (n0) -- (n8); \draw (n1) -- (n2); \draw (n1) -- (n3);
        \draw (n1) -- (n4); \draw (n1) -- (n5); \draw (n2) -- (n3); \draw (n4) -- (n5);
        \draw (n5) -- (n6); \draw (n6) -- (n7); \draw (n7) -- (n4); \draw (n4) -- (n8);
    \end{tikzpicture} \\
    (a) & (b) & (c) & (d) \\
    \end{tabular}

    \caption{From a macro view of two large coalitions to a more granular micro-level segmentation, each figure shows a fully-robust partition according to Definition~\ref{def:fully-robust}: (a) original (unlabeled) network; (b) two-communities; (c) third communities emerges; (d) micro-level communities.
    Note that these partitions are considered fully-robust only under the constraint that nodes cannot unilaterally isolate themselves. If a node could form a new community, it would prefer to become a singleton to eliminate all ``strangers," which is the optimal state when the resolution $\resolution=1$. This restriction on isolation is what allows non-clique-based communities to be stable across all $\resolution \in [0,1]$ (see Assumption~\ref{ass:mainassumption}).
    }
    \label{fig:toy-coalitions}
\end{figure}

\paragraph{\textbf{Criterion to Find Fully-Robust Partitions}}
Next, we provide a simple criterion to find fully-robust partitions.

\begin{theorem}
    \label{THEO:ROBUST_EQ}
    An equilibrium partition $\partition^{\resolution}$, which 1) holds for $\resolution=0$,  and  2)  such that  all  communities have the same number of nodes, is an equilibrium for any $\resolution \in [0,1]$.
\end{theorem}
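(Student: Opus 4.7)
The plan is to invoke Theorem~\ref{THEO:RANGE_EQ} to reduce the problem from verifying equilibrium across the whole interval $[0,1]$ to verifying it only at the two extremes $\resolution=0$ and $\resolution=1$. The case $\resolution=0$ is given by hypothesis~1, so the entire task reduces to showing that the equal-community-size hypothesis~2, together with the $\resolution=0$ equilibrium, implies that the partition is also a $1$-equilibrium.

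For the $\resolution=1$ step, I would fix an arbitrary node $i$ currently in community $\clusterB$ and an arbitrary alternative community $\clusterA \neq \clusterB$, and evaluate the potential gain of the move using the linear expression in Eq.~\eqref{eq:delta_node}, namely
\begin{equation*}
\Delta \potentialNode_{\anode}^{\clusterB\clusterA}(\resolution) = (d_{\anode}^{\clusterA} - d_{\anode}^{\clusterB}) - \resolution (n_{\clusterA} - n_{\clusterB} + 1).
\end{equation*}
Hypothesis~1 gives $\Delta \potentialNode_{\anode}^{\clusterB\clusterA}(0) = d_{\anode}^{\clusterA} - d_{\anode}^{\clusterB} \leq 0$. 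Hypothesis~2, combined with the bookkeeping convention that $n_\clusterB$ counts $i$ but $n_\clusterA$ does not, yields $n_\clusterA - n_\clusterB + 1 = 1$. Substituting both facts at $\resolution=1$ gives
\begin{equation*}
\Delta \potentialNode_{\anode}^{\clusterB\clusterA}(1) = (d_{\anode}^{\clusterA} - d_{\anode}^{\clusterB}) - 1 \leq -1 < 0,
\end{equation*}
so $i$ has no incentive to deviate. Since $i$, $\clusterB$ and $\clusterA$ were arbitrary, the partition is a $1$-equilibrium. Invoking Theorem~\ref{THEO:RANGE_EQ} with $\resolution_0=0$ and $\resolution_1=1$ then extends equilibrium to all $\resolution \in [0,1]$, completing the proof.

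I do not anticipate a serious obstacle in this argument; the only point deserving care is the bookkeeping for community sizes (the $+1$ correction coming from the fact that $i$ is counted in its origin community but not in the destination). Once that convention is handled correctly, the equal-size assumption cancels the resolution-dependent term in $\Delta \potentialNode_{\anode}^{\clusterB\clusterA}(1)$ down to a constant $-1$, and the $\resolution=0$ equilibrium inequality only strengthens the sign, so the result follows immediately.
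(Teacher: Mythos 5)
Your proposal is correct and follows essentially the same route as the paper's proof: reduce to the endpoints via Theorem~\ref{THEO:RANGE_EQ}, then use the equal-size hypothesis to show that the $\resolution=0$ condition $d_{\anode}^{\clusterA} \leq d_{\anode}^{\clusterB}$ forces the $\resolution=1$ condition (the paper phrases this as $\cdegriB \leq \cdegriA$ using $\cdegriB = n_\clusterB - \degriB - 1$ and $\cdegriA = n_\clusterA - \degriA$, which is algebraically identical to your computation $\Delta \potentialNode_{\anode}^{\clusterB\clusterA}(1) = (d_{\anode}^{\clusterA} - d_{\anode}^{\clusterB}) - 1 \leq -1$). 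The bookkeeping convention you flag is exactly the one the paper uses, so the argument goes through as you describe.
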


The \emph{imbalance} of a partition is the difference between the sizes of its largest and smallest communities. In a \emph{balanced} partition, communities have the same size. The proof of the above theorem involves showing that  balanced equilibria for $\resolution=0$, e.g., Figs.~\ref{fig:toy}(a) and~\ref{fig:toy}(c), are also equilibria for $\resolution=1$. Then, the result follows from Theorem~\ref{THEO:RANGE_EQ}.

\begin{proof}

We consider a given tagged node $\anode$, part of community $\clusterB$ in the equilibrium partition $\partition^{\resolution}$, for $\resolution=0$.  Using the same terminology as in Theorem~\ref{THEO:RANGE_EQ} and leveraging Eq.~\eqref{eq:delta_node}, the proof proceeds as follows. 

We consider a tagged community $\clusterA$, with $\clusterA \neq \clusterB$.
We show that node $\anode$ has no incentive to change from $\clusterB$ to   $\clusterA$, for  $\resolution=1$.  
Let $\potentialNode_{\anode}^{\resolution}$ and $\tilde{\potentialNode}_{\anode}^{\resolution}$ be the hedonic potential value of the tagged node before and after a simulated change from community $\clusterB$ to $\clusterA$.
It follows from Eqs.~\eqref{eq:potential_node} and~\eqref{eq:delta_node} that

\begin{itemize}
    \item for $\resolution=0$, $\partition^\resolution$ is an equilibrium if and only if 
 $\potentialNode_{\anode}^0 \ge \tilde{\potentialNode}_{\anode}^0$, i.e., $\degriB \ge \degriA$
    \item for $\resolution=1$, $\partition^\resolution$ is an equilibrium if and only if  $\potentialNode_{\anode}^1 \ge \tilde{\potentialNode}_{\anode}^1$, i.e., $\cdegriB \le \cdegriA$
\end{itemize}

When assuming communities of the same size, $n_\clusterA = n_\clusterB = {\nvertices}/{K}$, if the first condition $\degriB \ge \degriA$ for $\resolution = 0$ holds true, then the second condition $\cdegriB \le \cdegriA$ for $\resolution = 1$ will also hold because $\cdegriB = n_\clusterB - (\degriB + 1)$ and $\cdegriA = n_\clusterA - \degriA$. Therefore, if two communities have the same number of nodes and are part of  an equilibrium for $\resolution=0$, they are also part of  an equilibrium for $\resolution=1$.

As communities  $\clusterA$ and $\clusterB$ were chosen arbitrarily, the argument holds for any such pair of communities. Thus, the proof is complete.
\end{proof}


Theorem~\ref{THEO:ROBUST_EQ} highlights an asymmetry in the impact of the resolution parameter: equilibrium conditions for \(\resolution = 0\) can imply robustness across the entire parameter range, whereas analyzing the robustness of a partition known to be an equilibrium for \(\resolution = 1\) appears more intricate. As illustrated in Figure~\ref{fig:toy}(e), the  analysis  of $\resolution=1$ must account for singleton communities and dangling nodes—such as node 7 in Figure~\ref{fig:toy}(e)—which, despite being connected in the original graph, have a degree of zero within their assigned communities. 
A deeper investigation into this matter is left for future work.


\subsubsection{Summary}
\label{sec:summary_robustness}

In this section, we have analyzed the robustness of community structures by examining their stability across different values of the resolution parameter, $\resolution$. Our analysis bridges the gap between the micro-level incentives of individual nodes and the macro-level stability of the entire partition.

First, we formalized the decision-making process for individual nodes. In Section~\ref{sec:resolution_threshold}, we demonstrated that for any potential move, there exists a critical resolution threshold, $\resolution^\star$, which is equivalent to the \emph{Familiarity Index} (Eq.~\eqref{eq:familiarity_index}).
A node is \textbf{robust} if its preference for its current community is unambiguous and optimal, meaning it is stable for any value of $\resolution$. If a non-robust node can only improve its utility by making a trade-off, that potential move is a  frustrated choice, and the decision hinges on whether the global $\resolution$ is above or below the local threshold $\resolution^\star$.

Next, we extended this concept from individual nodes to entire partitions. In Section~\ref{sec:partial-robust}, we introduced the idea of a {$[\resolution_0, \resolution_1]\--$robust equilibrium}—a partition that remains stable for all $\resolution$ within a given interval. Theorem~\ref{THEO:RANGE_EQ} provides a powerful simplification, proving that to verify robustness across an interval, one only needs to check for stability at its endpoints, $\resolution_0$ and $\resolution_1$. This result stems from the linear relationship between a node's potential gain and the resolution parameter.

Finally, in Section~\ref{sec:fully-robust}, we examined the strongest form of stability: {fully-robust partitions}, which are equilibria for all $\resolution \in [0,1]$. We established that such partitions are precisely those in which every node is robust in the strictest sense—simultaneously maximizing its internal neighbors and minimizing its internal non-neighbors. This condition connects our game-theoretic framework to the physical concept of an unfrustrated system. To aid in the identification of these highly stable structures, Theorem~\ref{THEO:ROBUST_EQ} offers a practical criterion: any balanced partition (i.e., one with equal-sized communities) that is an equilibrium for $\resolution = 0$ is guaranteed to be fully-robust.

\section{Empirical Analysis}
\label{sec:empirical_analysis}

This section presents our empirical analysis. In Section~\ref{sec:setup}, we describe the experimental setup, including the synthetic networks, evaluation metrics, and community detection methods used. In Section~\ref{sec:gt_robustness}, we analyze the inherent robustness of ground-truth partitions. Finally, in Section~\ref{sec:community_tracking}, we evaluate the performance of different algorithms in a community tracking scenario.

\subsection{Experimental Setup}
\label{sec:setup}



Our experimental design is detailed in this section. Section~\ref{sec:synthetic_networks} explains how we generate synthetic networks with known ground truth. Section~\ref{sec:metrics} defines the metrics used for evaluation and  Section~\ref{sec:methods} outlines the community detection algorithms  considered in our community tracking experiment. 


\subsubsection{Setup}
\label{sec:synthetic_networks}

\paragraph{\textbf{Probabilistic model}}
We consider a special instance  of the  Stochastic Block Model (SBM)~\cite{HOLLAND1983SBM}, namely the   Planted Partition Model (PPM)~\cite{abbe2015exact}, where nodes within the same community connect with probability $p$ and those in different communities with probability $q$ (with $p>q$ in the assortative case). Motivated by  Theorem~\ref{THEO:ROBUST_EQ},  we further focus on equal-sized communities, which correspond to the Symmetric Assortative PPM (SAPPM).  SAPPM produces graphs with $K$ equal-size communities of $N$ nodes each,  leaving us with four parameters, $(K,N,p,q)$. 
 

\paragraph{\textbf{Difficulty parameter}}
To quantify the challenge of community detection, we define the difficulty parameter \(\difficulty\) as \(\difficulty = q/p\). This parameter captures the relative strength of inter-community connections: higher values of \(\difficulty\) indicate greater difficulty in distinguishing communities. In the extreme case where \(p = q\), community structure becomes indistinguishable, making detection impossible~\cite{abbe2015exact}.


\paragraph{\textbf{Parameters}}
%
%
%
%
Except otherwise noted,   we let $p$ vary from 0.01 to 0.10 in increments of 0.01, and let ${\difficulty}  \in  \{ 0.1, 0.2, 0.3, \\ 0.4, 0.5, 0.55, 0.6, 0.65, 0.7, 0.75\}$. 
In addition, we account for two up to six communities, $2 \leq K \leq 6$.
We empirically observed that those values are adequate to highlight our key insights.


\paragraph{\textbf{Samples for confidence intervals}}
We generated multiple network samples for each parameter set to ensure statistical significance.%
\footnote{Synthetic networks were generated using the Stochastic Block Model (SBM) function from the \texttt{NetworkX} library, chosen specifically for its \texttt{seed} parameter which guarantees the reproducibility of our experiments. All generated graphs were immediately converted to \texttt{igraph} objects for analysis, maintaining consistency with the rest of our methodology. \url{https://networkx.org/documentation/stable/reference/generated/networkx.generators.community.stochastic_block_model}.}
In particular, for each pair of parameters $(p,\difficulty)$ we generated 100 network samples.
This  sufficed for the purpose of producing  95\% confidence intervals of length less than 0.01. 

\subsubsection{Evaluation Metrics}
\label{sec:metrics}

We assess performance across three dimensions:  
\emph{(i) \textbf{Efficiency}}: Measured as execution time in seconds. 
\emph{(ii) \textbf{Robustness}}: Defined per Eq.~\eqref{eq:robustness} as the fraction of robust nodes.   As discussed in  Section~\ref{sec:robustness}, robustness  can be equivalently interpreted as  the  fraction of nodes that satisfy strict max–min criteria or as the fraction of nodes that are stable across all $\resolution \in [0,1]$.
\emph{(iii) \textbf{Accuracy}}: Characterizes the agreement between the partitions produced by the different algorithms and the ground truth. Given a partition $\partition$, we compare it against the ground-truth partition $\partition^{\star}$ using the Adjusted Rand Index (ARI)~\cite{hubert1985comparing}. The ARI measures the similarity between two partitions, correcting for the agreement that would be expected by chance. It ranges from 1 to -1. A score of 1 indicates perfect agreement, while a score near 0 suggests that the similarity is no better than random. Negative scores indicate less agreement than expected by chance.%
\footnote{We calculated the Adjusted Rand Index using the \texttt{igraph.compare\_communities} function setting the parameter \texttt{method=`adjusted\_rand'}.
\url{https://python.igraph.org/en/stable/api/index.html\#compare_communities}.}


Let   $n_k$ be the number of nodes in community $\community_k$ under partition $\pi$ and let $n^\star_{\ell}$ be the number of nodes in community $\community^\star_{\ell}$ under the reference partition $\pi^\star$.
Let $K$ and $K^\star$ be the number of communities in $\pi$ and $\pi^\star$, respectively. 
The expected number of node pairs that would be placed in the same community in both partitions \(\pi\) and \(\pi^\star\) if the assignments were random, but preserving the number and sizes of communities in each partition, is given by:
\[
\mathbb{E}(\graph, \partition, \partition^{\star}) = \left( {\sum_{k=1}^K \binom{n_k}{2} \sum_{\ell=1}^{K^{\star}} \binom{n_{\ell}^{\star}}{2} }\right) \Bigg/ { \binom{\nvertices}{2}}.
\]
 The above quantity serves as a baseline for chance agreement. The ARI metric is given by:
\begin{equation}
\accuracy(\graph, \partition, \partition^{\star}) =
\left(
\sum_{k=1}^{K} \sum_{\ell=1}^{K^\star} \binom{n_{k\ell}}{2} - \mathbb{E}(\graph, \partition, \partition^{\star})
\right) \Bigg/  \left(
\frac{1}{2} \left(\sum_{k=1}^K  \binom{n_k}{2} + \sum_{\ell=1}^{K^\star} \binom{n_{\ell}^{\star}}{2}\right) - \mathbb{E}(\graph, \partition, \partition^{\star})
\right) 
\label{eq:acc}
\end{equation}
where $n_{kl} = |\community_k \cap \community_{\ell}^\star|$. This metric compares the number of node pairs co-clustered in both partitions against the number expected by random chance.

For each of the above three metrics, we report an average over multiple runs per $(p, \difficulty)$ pair.

\subsubsection{Community Detection Methods}
\label{sec:methods}


We compare five community detection methods (for details see~\ref{sec:methodsappendix}):
\begin{itemize}

\item \textbf{Leiden (Full-Fledged)}: the standard Leiden algorithm~\cite{traag2019leiden} (see~\ref{sec:leiden}) optimizing CPM~\cite{traag2011cpm}.%
\footnote{From an implementation standpoint, this involved a minor bug fix to the reference implementation of the Leiden algorithm. \url{https://github.com/igraph/igraph/pull/2799}.}
We disable isolated singletons so that we never exceed $K$ communities (see Assumption~\ref{ass:mainassumption}). The resolution parameter $\resolution$ is set to the   edge density~\cite{avrachenkov2017cooperative}.

\item \textbf{Leiden (Phase~1)}: we apply only the local-move phase of Leiden,
indicating that it suffices for community tracking, specially for small values of $\noise$. As shown in Section~\ref{sec:find_equilibrium}, such method finds a hedonic equilibrium in pseudo-polynomial time.%
\footnote{The C implementation of our extension and its corresponding Python interface can be found at \url{https://github.com/lucaslopes/igraph/tree/physica-a}  and \url{https://github.com/lucaslopes/python-igraph/tree/physica-a}.}

\item \textbf{Spectral Clustering}: a classic eigenvector-based approach (see~\ref{sec:spectral}). It does \emph{not} accept any initial partition, so its outcome is invariant to $\noise$.

\item \textbf{One Pass}: a simple baseline. Each node that can move to a community with strictly more neighbors does so in a single iteration. Implemented in Python, it can still exploit   a priori information about an initial partition, though it does not refine beyond one pass.

\item \textbf{Mirror}: also referred to as  zero-order hold (ZOH) in the realm of signal processing and  control theory, this approach returns the input partition unchanged (see~\ref{sec:onepass}). If $\noise$ is small, ``doing nothing'' can   outperform more complex moves. If $\noise$ is large, Mirror yields a poor partition.

\end{itemize}

\subsection{Reverse Engineering the Robustness of Ground Truth Partitions}
\label{sec:gt_robustness}

\begin{figure}[t]
    \includegraphics[width=1\linewidth]{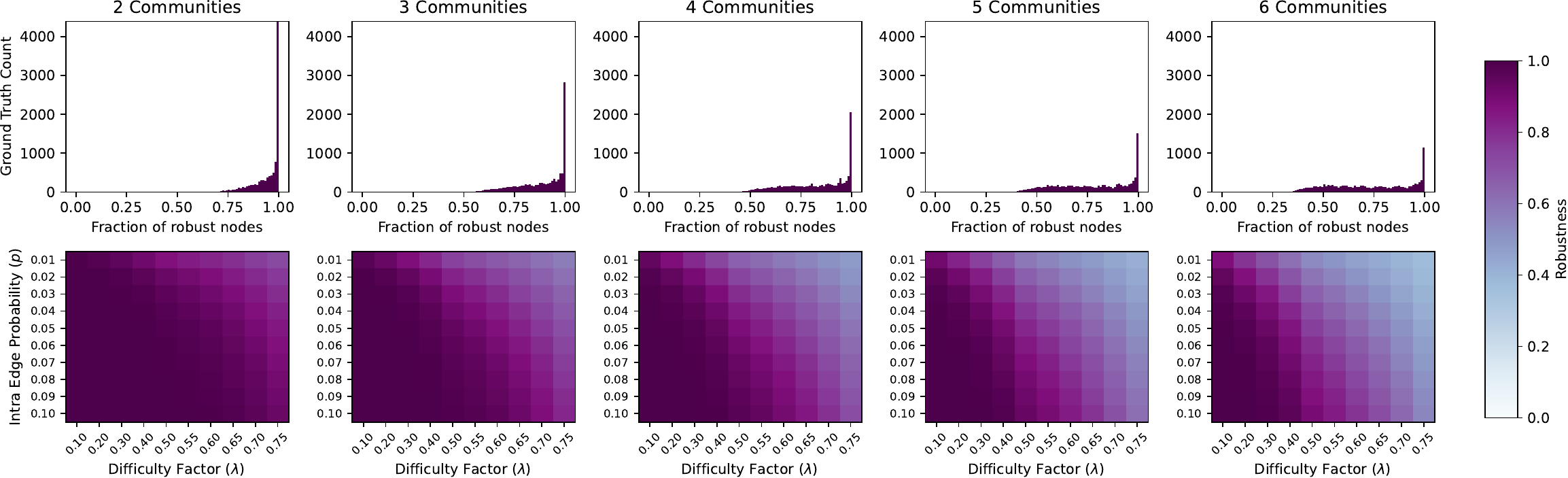}
    \caption{The top row displays histograms illustrating the distribution of the fraction of robust nodes across various parameter settings. The bottom row presents heatmaps, with each cell showing the average fraction of robust nodes for a given connection probability ($p$) and difficulty factor ($\difficulty$). This figure demonstrates how the robustness (defined in Eq.~\eqref{eq:robustness}) of the ground-truth partition changes with the difficulty factor. Each heatmap corresponds to a different number of communities ($K \in \{2,3,4,5,6\}$), keeping $n=1020$ fixed and $N=1020/K$. Darker colors indicate a higher fraction of fully-robust nodes.}
    \label{fig:gt_robust}
\end{figure}

Our goal in this section is to determine   whether the   robustness of a partition     correlates with its similarity to a ground truth reference partition. To this aim, we employ a reverse-engineering approach: we begin with known ground-truth partitions and calculate their robustness using the methodology in Section~\ref{sec:setup}.
The robustness of the ground truth is visualized using a heatmap (Figure~\ref{fig:gt_robust}), with $p$ and $\difficulty$ as its two dimensions. Each cell pair ($p$, $\difficulty$) represents networks generated with SAPPM using these parameters. The darkness of a cell  is proportional to its robustness, as given by Eq.~\eqref{eq:robustness}.
This analysis reveals several key insights, which are discussed in the sequel. 

\subsubsection{Robustness Decreases as  $K$ Grows}
\label{sec:robustness_decreases_k_grows}

As the number of communities ($K$) increases, the robustness of the ground truth partition tends to decrease. This is expected since the chance that the community of a given node is   the community where this node has   the highest number of neighbors and  fewest non-neighbors diminishes as $K$ increases. To illustrate that point, consider two extreme cases.  When $K=1$ we have one single feasible solution to the problem, the grand coalition, which in this case is always fully robust, as nodes have no alternative communities to transition to.  When $K=\nvertices$, we have multiple feasible solutions, including a set of singletons.  The latter, however,    is  never   fully robust for connected graphs, as  under the criterion of maximizing neighbors, nodes can   benefit by transitioning to a community with  additional nodes. 
  
\subsubsection{Effect of Parameters $p$ and $\difficulty$}
\label{sec:effect_params}

The robustness is higher for lower values of $\difficulty$ and higher values of $p$.  Indeed, we observe a gradual decline of robustness as $\lambda$ grows.  For $K=2$, the robustness remains high for a large range of $p$ and $\difficulty$. As $K$ increases, the robustness heatmap starts to show a gradual decline where higher values of $\difficulty$ and lower values of $p$ result in lower robustness.
This  occurs   because lower intra-community connection probabilities ($p$) and higher inter-community connection probabilities ($q = p \difficulty$) make communities less distinct and harder to detect. 
 
\subsubsection{Robust Partitions and Ground Truth}
\label{sec:robustness_of_gt}

Although not all the ground truth partitions  correspond to an equilibrium (see Definition~\ref{def:solution}), we observe that a significant fraction of those partitions does correspond to fully robust equilibria. These are the cells marked with dark blue in Figure~\ref{fig:gt_robust}, with robustness equal one.  In those cases, nodes have no incentive to transition across communities for any value of $\resolution$ (case 1(b) in Table~\ref{tab:node-stability}).  For the cases where a fraction of nodes has incentive to    deviate, such incentive holds for any $\resolution$ (case 1(a) in Table~\ref{tab:node-stability}, noting that $\resolution^{\star}=\Delta d \ge 1$ as the denominator in Eq.~\eqref{eq:resolution_lower_bound} is one when $n_{\clusterA}=n_{\clusterB}$).
 
\subsubsection{Summary}
\label{sec:summary_gt_robustness}

In this section we evaluated  the robustness of  ground-truth partitions. Our results provide   evidence that the ground truth partition in synthetic networks generated by SAPPM tends to be highly robust, motivating further studies on the use of the proposed  robustness criterion for partition selection purposes. 
In the following section, we further compare the considered hedonic game approach against state-of-the-art methods.



\subsection{Performance Evaluation in a Community Tracking Scenario}
\label{sec:community_tracking}

Having analyzed the inherent robustness of ideal, ground-truth partitions in Section~\ref{sec:gt_robustness}, we now shift our focus to evaluating the performance of the partitions produced by practical algorithms. To do this, we use a  community tracking scenario, a common task in the analysis of dynamic networks~\cite{greene2010tracking,avrachenkov2023recovering}.
In this setup, an existing, once-accurate partition has become outdated due to network evolution. The goal is to efficiently refine this ``stale" partition to match the network's current state. By systematically introducing noise into a known ground-truth partition, we can simulate this staleness and create a controlled benchmark to assess how different algorithms perform in terms of efficiency, the robustness of the partitions they find, and their accuracy in recovering the true community structure.%
\footnote{All experiments were conducted on an Intel\textsuperscript{\textregistered} Xeon\textsuperscript{\textregistered} E5-2670 processor. Code and implementation details at \url{https://github.com/lucaslopes/hedonic-game}.}


We use the network generator from Section~\ref{sec:synthetic_networks} to create synthetic networks with known ground-truth partitions. To simulate outdated information, we perturb the ground-truth partition, $\partition^{\star}$, by applying a noise parameter $\noise \in \{0.1, 0.25, 0.5, 0.75, 1.0\}$. This process randomly permutes a fraction $\noise$ of nodes across communities while keeping community sizes the same. When $\noise = 0$, the initial partition is identical to the ground truth. When $\noise = 1$, the partition is completely shuffled, meaning only the number of communities, $K$, is retained as prior information.

We analyze the impact of noise (Section~\ref{sec:impact_noise}) and the number of communities (Section~\ref{sec:impact_n_communities}) on these three metrics. We then examine the interplay between these dimensions (Section~\ref{sec:metrics_interplay}) and conclude with a summary of our findings (Section~\ref{sec:summary_tracking}).

\subsubsection{Impact of Noise on Efficiency, Robustness, and Accuracy}
\label{sec:impact_noise}

\begin{figure}[t]
    \centering
    \includegraphics[width=\linewidth]{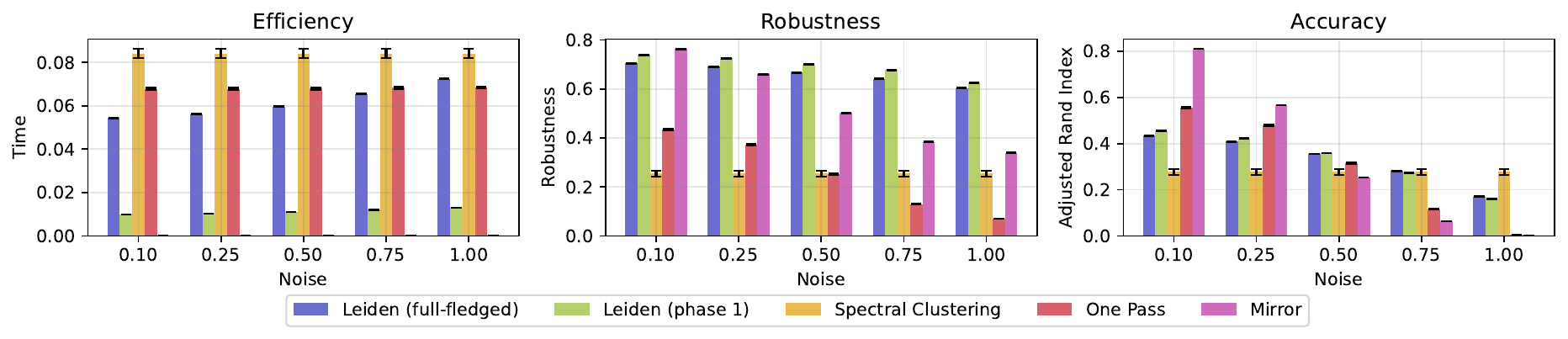}
    \caption{\textbf{Effect of noise on efficiency, robustness, and accuracy.} 
    Each barplot shows results for five noise levels ($x$-axis) and five methods (color-coded). \emph{Left:} Efficiency (runtime in seconds; lower is better). \emph{Center:} Robustness (fraction of nodes satisfying strict max-min criteria). \emph{Right:} Accuracy (Adjusted Rand index~\cite{hubert1985comparing} comparing final partition to the ground truth). The Mirror and  \emph{One Pass}  methods degrade significantly at higher noise, while both Leiden variants remain robust and accurate. Spectral clustering is insensitive to noise but achieves lower robustness.}
    \label{fig:noise}
\end{figure}

Figure~\ref{fig:noise} shows how each method behaves as a function of the noise, $\noise$. 

\paragraph{\textbf{Efficiency}}
The \emph{Mirror} baseline is trivially fastest, performing no updates. Next is \emph{Leiden (Phase~1)}, which converges in negligible time even as $\noise$ grows, in agreement with pseudo-polynomial convergence   (Theorem~\ref{THEO:EQUILIBRIUM}). 
In contrast, \emph{Leiden (Full-Fledged)} runs longer, especially for larger $\noise$, because its refinement and aggregation phases must correct more errors.  
 Additionally, \emph{Leiden (Phase~1)} is inherently distributed, making it well-suited for implementation in decentralized systems, whereas Phases 2 and 3 of \emph{Leiden (Full-Fledged)} may pose challenges for distributed implementation.
\emph{Spectral Clustering} is the slowest overall, as it ignores the initial partition and computes a leading eigenvector with $O(\nvertices^3)$ complexity.


As the name suggests, the \emph{One Pass}  method processes each of the $\nvertices$ nodes once. For this reason, its running time  corresponds to the first $\nvertices$ iterations of Leiden (Phase 1). 
%
%
Clearly,  \emph{One Pass}  is asymptotically faster than Leiden (Phase~1), and its slow convergence speed   in Figure~\ref{fig:noise}   serves to appreciate how the optimized implementation of Leiden (in C) outperforms the non-optimized implementation  of  \emph{One Pass}  (in Python). 

\paragraph{\textbf{Robustness}}
As $\noise$ grows, the \emph{Mirror} and \emph{One Pass} partitions become less robust: a random or partially incorrect initial partition has many nodes that could improve by switching communities. \emph{Spectral Clustering} yields consistently low robustness because it does not incorporate prior knowledge and does not account for a  robustness-related objective function. By contrast, \emph{Leiden (Phase~1)} and \emph{Leiden (Full-Fledged)} maintain   robustness above 0.6 even at $\noise=1$, as the CPM objective function   intrinsically captures  the goal of maximizing a  robustness-related metric. 


\paragraph{\textbf{Accuracy}}
When $\noise<0.5$, \emph{Mirror} and \emph{One Pass} often outperform more complex methods, simply by leaving a nearly correct partition intact or making a single pass of improvements. However, for $\noise \ge 0.5$  these naive methods degrade rapidly, being outperformed by  \emph{Leiden (Phase~1)} and \emph{Leiden (Full-Fledged)}. \emph{Spectral} also becomes more competitive at higher noise, but it lags behind Leiden in overall accuracy, indicating that leveraging partial ground-truth information is advantageous.

\subsubsection{Impact of the Number of Communities}
\label{sec:impact_n_communities}

\begin{figure}[t]
    \centering
    \includegraphics[width=\linewidth]{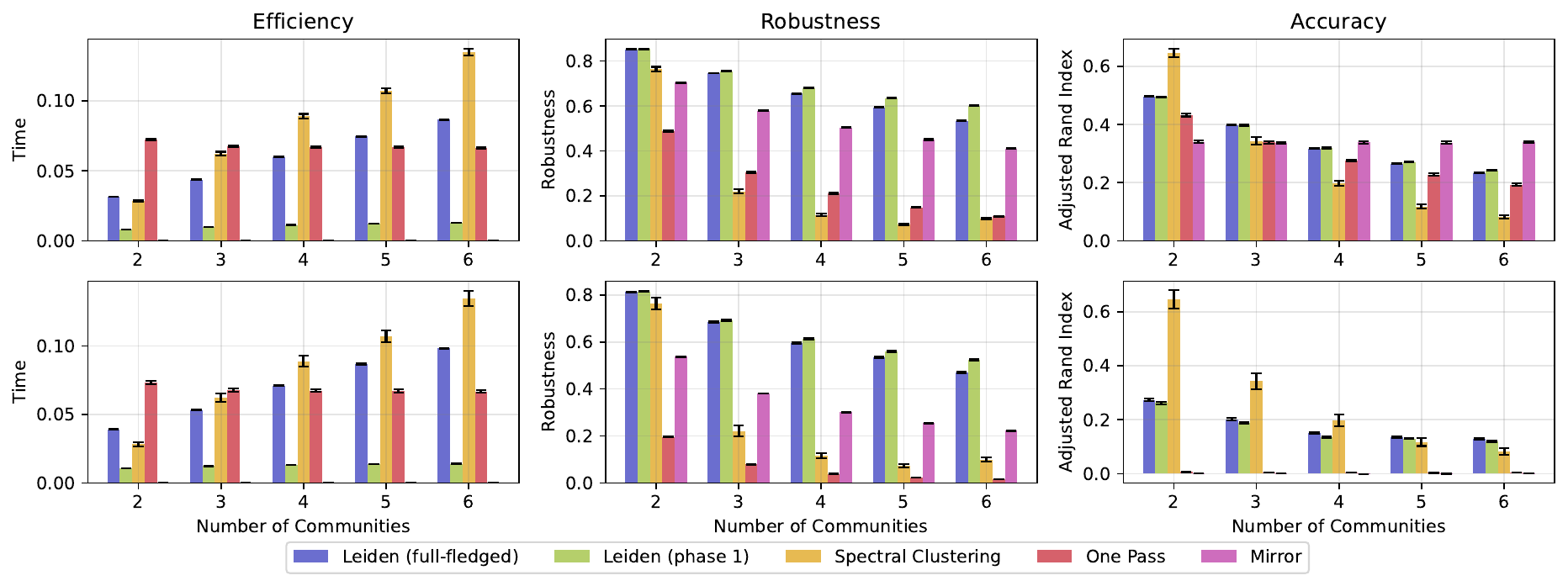}
    \caption{\textbf{Effect of the number of communities.} 
    Top row: average results over all noise levels. Bottom row: results restricted to $\noise=1$. \emph{Left column:} Efficiency. \emph{Center:} Robustness. \emph{Right:} Accuracy. Methods exploiting the initial partition (Leiden, One Pass, Mirror) benefit when $\noise<1$; under fully scrambled conditions ($\noise=1$), methods that can systematically reassign nodes (Leiden, Spectral) prevail for higher $K$.}
    \label{fig:n_communities}
\end{figure}

Figure~\ref{fig:n_communities} examines how performance scales with the number of communities $K$. The top row shows averages over all $\noise$, while the bottom row reports results for  $\noise=1$.

\paragraph{\textbf{Efficiency}}
The convergence time of 
\emph{Spectral Clustering} and \emph{Leiden (Full-Fledged)} both grow roughly linearly as $K$ increases, reflecting the more complex search space. By contrast, \emph{Leiden (Phase~1)} remains almost flat in runtime, converging quickly regardless of $K$. The \emph{One Pass} approach shows a slight decrease in runtime with larger $K$, presumably because it moves fewer nodes in a single iteration.

\paragraph{\textbf{Robustness}}
All methods lose some robustness as $K$ grows, since finer partitions can create more potential gains from rearranging nodes. \emph{Leiden (Phase~1)} retains the highest robustness, slightly above \emph{Leiden (Full-Fledged)}, whereas \emph{Spectral} and \emph{One Pass} degrade steeply. Notably, \emph{Mirror} shows intermediate robustness, representing the shuffled input. When $\noise=1$, \emph{One Pass} and \emph{Mirror}  can reach near-zero robustness at $K=6$.

\paragraph{\textbf{Accuracy}}
\emph{Spectral} is most accurate at $K=2$ (often exceeding $0.6$ for the ARI accuracy) but it accuracy drops for $K>2$. It is known that the accuracy of \emph{Spectral Clustering} decreases as the number of clusters increases \cite{lei2015consistency}.  The accuracy of \emph{Leiden (Phase~1)} and \emph{Leiden (Full-Fledged)} also  drops as $K$ grows, but such drop is more significant for higher values of  $K$. In particular,  \emph{Leiden (Phase~1)} and \emph{Leiden (Full-Fledged)} outperform  \emph{Spectral} for   $K\geq3$ when leveraging information about the initial partition. They also outperform \emph{Spectral} for  $K\geq5$ when $\noise=1$.  
The accuracy of 
\emph{Mirror} remains roughly stable with respect to $K$ when the noise is moderate or low (impacting the results reported in the  top row of Figure~\ref{fig:n_communities}). Under fully scrambled conditions ($\noise=1$), \emph{Mirror} and \emph{One Pass} end up with similarly poor accuracy.

\subsubsection{Accuracy Versus Efficiency and Robustness}
\label{sec:metrics_interplay}

\begin{figure}[t]
    \centering
    \includegraphics[width=\linewidth]{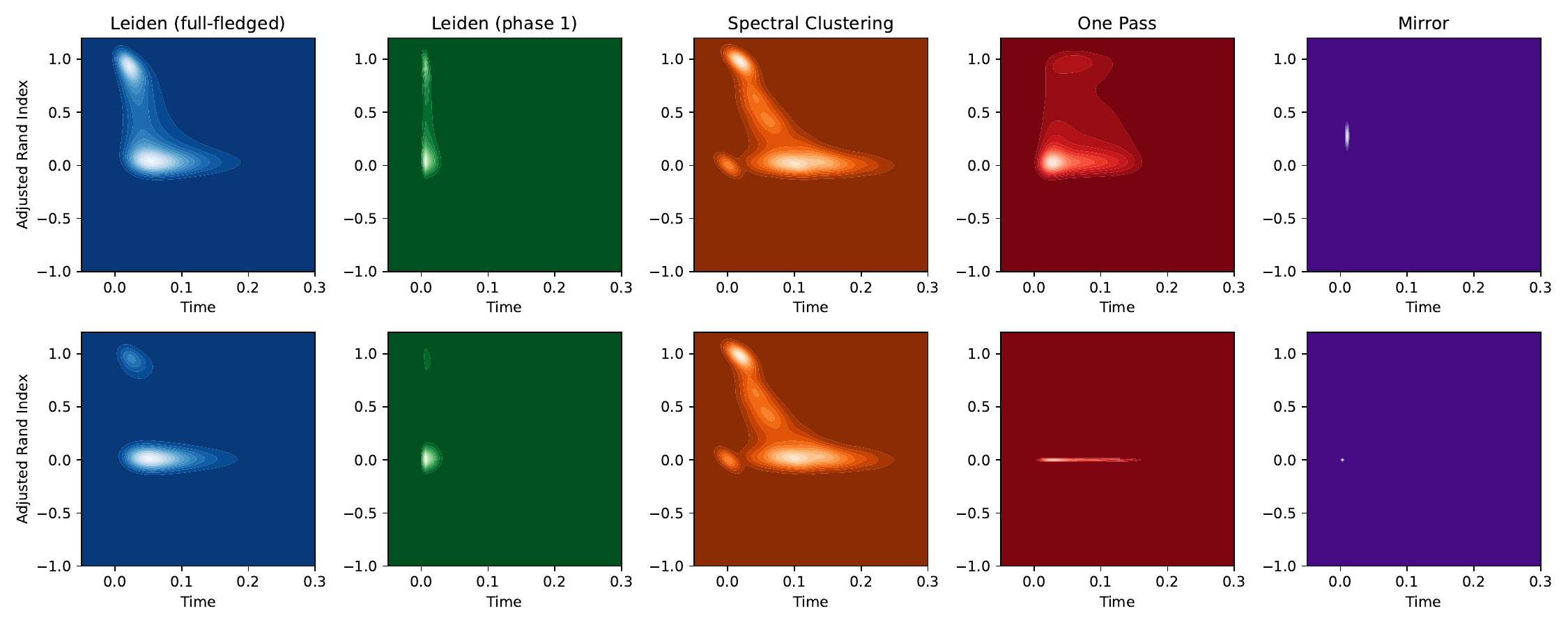}
    \caption{\textbf{Accuracy vs.\ efficiency.} Each panel is a density contour for one method, with the top row averaging all $\noise$ and the bottom row restricted to $\noise=1$. The $x$-axis is runtime in seconds, and the $y$-axis is the Adjusted Rand index (ARI)~\cite{hubert1985comparing}. A method is ideal if it is \emph{both} highly accurate (top) and efficient (left).}
    \label{fig:acc_efficiency}
\end{figure}

\begin{figure}[t]
    \centering
    \includegraphics[width=\linewidth]{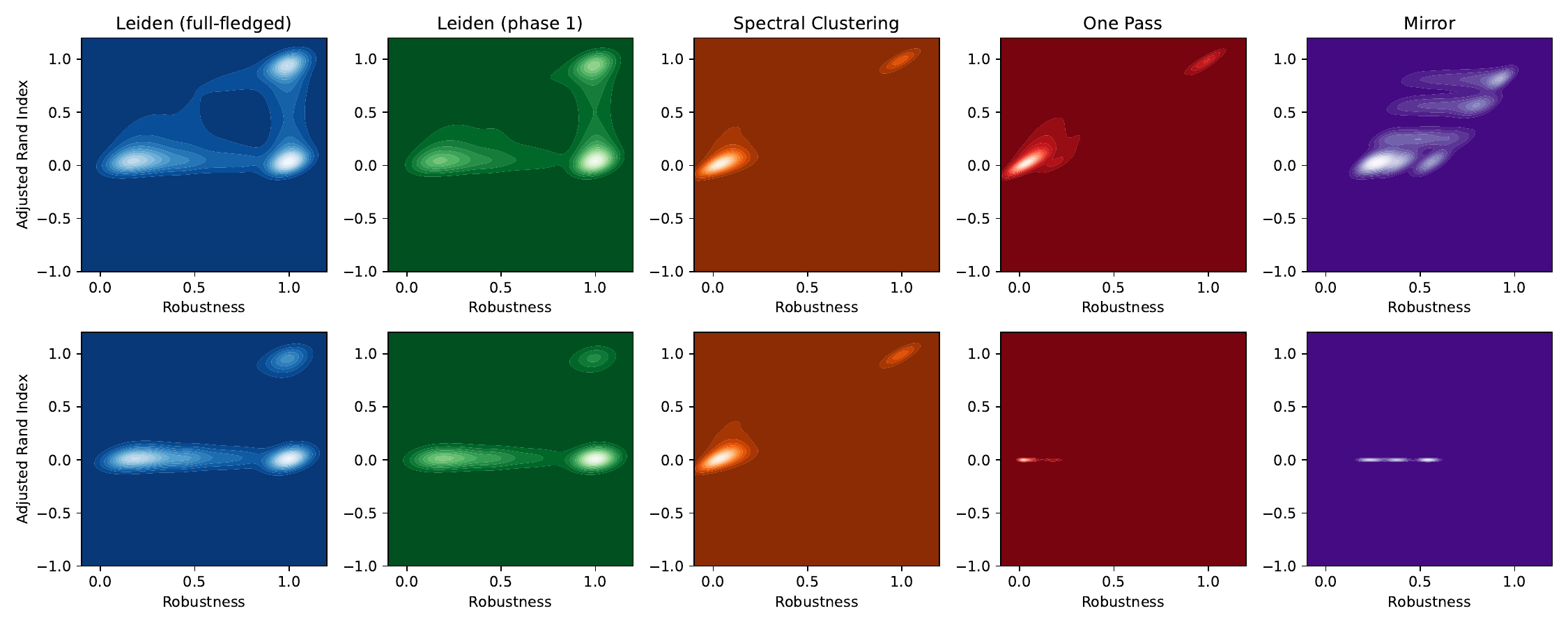}
    \caption{\textbf{Accuracy vs.\ robustness.} Each panel is a density contour for one method, with the top row averaging all $\noise$ and the bottom row restricted to $\noise=1$. The $x$-axis is the fraction of robust nodes (Eq.~\eqref{eq:robustness}), and the $y$-axis is the Adjusted Rand index (ARI)~\cite{hubert1985comparing}. A method is ideal if it is both highly robust (right) and accurate (top).}
    \label{fig:acc_robustness}
\end{figure}

To visualize the \emph{joint} interplay between metrics, Figures~\ref{fig:acc_efficiency} and~\ref{fig:acc_robustness} show 2D contour plots. The top row in each figure aggregates all noise levels, and the bottom row focuses on $\noise=1$. Each column corresponds to a specific method.

\paragraph{\textbf{Leiden (Phase~1) vs.\ Leiden (Full-Fledged)}}
Both present a ``two-cluster'' pattern in the  accuracy-efficiency space (Figure~\ref{fig:acc_efficiency}): one cluster has high accuracy  and low convergence time,  and the other correspond to   less accurate solutions and higher convergence times. \emph{Leiden (Phase~1)} is notably more concentrated at low runtimes, reflecting its single-phase nature. In the accuracy-robustness space (Figure~\ref{fig:acc_robustness}), they both form ``double humps'' with high densities at robust, accurate partitions, plus some spread at moderate accuracy.

\paragraph{\textbf{Spectral Clustering}}
Because it ignores initial conditions, its performance does not shift with $\noise$ (identical results in top and bottom rows in Figures~\ref{fig:acc_efficiency} and~\ref{fig:acc_robustness}). We see a band of points around low  accuracy and widely varying runtimes (depending on $K$). 
In the accuracy-robustness view, \emph{Spectral} yields a cluster of points  with low robustness  and accuracy (ARI $\leq0.2$) and another cluster where both  robustness and accuracy are  higher (ARI $\geq0.6$). The latter accounts, for instance, to scenarios where $K=2$, as indicated in Figure~\ref{fig:n_communities}. 

\paragraph{\textbf{One Pass}}
The \emph{One Pass}  approach tends to produce partitions of middling accuracy (ARI $\approx 0$). Under $\noise=1$, it can be slower if the network is difficult to partition (e.g., large $K$ with weak community structure). In the accuracy-robustness space, \emph{One Pass} can form small ``peaks'' of higher accuracy and robustness when $\noise<0.5$. Otherwise, ARI is roughly zero.

\paragraph{\textbf{Mirror}}
Its runtime is effectively zero, so Figure~\ref{fig:acc_efficiency} shows vertical stripes when the runtime is close to zero. Accuracy then depends solely on $\noise$. In Figure~\ref{fig:acc_robustness}, \emph{Mirror}  yields a broad distribution in the robustness-accuracy space: if the input partition is close to ground truth, we see top-right peaks; if heavily shuffled (bottom row), accuracy is near zero  and robustness varies between   0.1 and 0.6.

\subsubsection{Summary of Numerical Findings}
\label{sec:summary_tracking}

The reported  experiments illustrate the practical benefits of hedonic-game-based methods for \emph{community tracking}, where an outdated but partially correct partition is available. While simpler baselines, such as  \emph{Mirror} and \emph{OnePass}, can suffice for low noise, they produce partitions with low accuracy  at higher noise. \emph{Spectral} is a good candidate for $K=2$ but is not competitive for the task of community tracking  otherwise, as it ignores  valuable prior information. By contrast, \emph{Leiden (Phase~1)} and \emph{Leiden (Full-Fledged)} consistently yield robust and accurate solutions in short runtime, confirming the theoretical insights of Section~\ref{sec:analytical_contributions}.  In dynamic or real-world settings, starting from an imperfect partition and running a fast local improvement can be highly effective, especially if noise is moderate so that prior knowledge about  the partition is only partially outdated.

\section{Conclusion}
\label{sec:conclusion}

Community detection is a fundamental component in modern data science. In this work, we contributed to the community detection literature by establishing that the Constant Potts Model (CPM) aligns with a hedonic potential function in coalitional games. Using the Leiden algorithm, we identified CPM local maximizers that correspond to equilibria of the hedonic game. Additionally, we provided theoretical conditions under which the Leiden algorithm converges in pseudo-polynomial time.

We analyzed two stability criteria: one enforcing strict maximization of neighbors and minimization of non-neighbors within communities, and another applying a relaxed utility function based on a weighted sum of the  number of neighbors and non-neighbors, with the resolution parameter $\resolution$ acting as the weight. We introduced the concept of robust nodes, i.e., nodes that have no incentive to switch communities for any value of the resolution parameter.

Our experiments demonstrated that the fraction of robust nodes in an equilibrium can serve as an indicator for extracting high-quality solutions from the output of the Leiden algorithm, which maximizes the relaxed $\resolution$-weighted utility function. This insight enables a principled selection among multiple equilibria, improving the overall accuracy of community detection.

Looking ahead, our research opens multiple directions for further exploration. While the algorithms studied in this work primarily identify non-overlapping communities, extending them to detect overlapping communities remains an important avenue for future research~\cite{chen2010game}. This could be achieved by leveraging multiple initializations of the hedonic game. Additionally, we plan to explore how different strategies for scheduling node movements between communities impact final results, particularly considering the effects of non-greedy transitions on detection quality.

Furthermore, we anticipate leveraging tools from evolutionary game theory to enhance solution quality~\cite{durand2018distributed,kleer2019topological,amiet2019pure}. Strengthening the connection between community detection and game-theoretic approaches presents a promising path for refining equilibrium selection and improving the robustness of community structures in dynamic environments.

\begin{appendix}

\section{Table of Notation}
\label{sec:notation}

Table~\ref{tab:notation} provides a comprehensive list of the symbols and terms used throughout this work, along with their definitions.

\begin{table}[h!]
\centering
\caption{Table of Notation}
\begin{tabular}{|c|l|}
\hline
\textbf{Symbol} & \textbf{Definition} \\ \hline
$\graph = (\vertices, \edges)$ & Network (graph) with $\nvertices=|\vertices|$ vertices and $\medges=|\edges|$ edges. \\ \hline
$\partition$ & Partition of a network into disjoint communities. \\ \hline
$K$ & Number of communities. \\ \hline
$\community_k$ & The $k$-th community of a partition. \\ \hline
$\clusterA$ & An arbitrary community to where the node considers moving to. \\ \hline
$\clusterB$ & An arbitrary community from where the node considers leaving from. \\ \hline
$n_k$ & Number of nodes within community $\community_k$. \\ \hline
$m_k$ & Number of edges within community $\community_k$. \\ \hline
$\nodecomm_\anode$ & Index of the community assigned to node $\anode$. \\ \hline
$d^{k}_\anode$ & Number of neighbors of node $\anode$ in  community $k$. \\ \hline
$\hat{d}^{k}_\anode$ & Number of non-neighbors of node $\anode$ in   community $k$. \\ \hline
$v_{\anode\bnode}$ & The \textit{hedonic} value for a pair of nodes $\anode$ and $\bnode$. \\ \hline
$R(i,\partition)$ & Robustness of node $\anode$ for a giving partition $\partition$. \\ \hline
$\resolution$ & Resolution parameter that controls communities granularity. \\ \hline
$\noise$ & Noise parameter simulating outdated ground-truth knowledge in community tracking tasks. \\ \hline
$\delta(c)$ & Indicator function, returning 1 if the condition $c$ holds, and 0 otherwise. \\ \hline 
$\potentialNode_{\anode}^{\resolution}(\community_k)$ & Node potential: contribution of node $\anode$ to its community  (superscript dropped  \\
& $\quad$ when clear from context). \\ \hline
$\potentialCoalition^{\resolution}(\community_k)$ & Community potential: sum of the contributions of all nodes in a community. \\ \hline
$\potentialPartition^{\resolution}(\pi)$ & Partition potential: sum of the community potentials across the entire partition. \\ \hline
\end{tabular}
\label{tab:notation}
\end{table}

\section{The Metagraph of Partitions}
\label{sec:metagraph}

To visualize the entire decision landscape available to agents, we construct a \textit{metagraph} that maps the universe of possible network partitions into a comprehensive graph structure (Figure~\ref{fig:metagraph}). In this metagraph, each vertex represents a unique partition of the underlying network $\graph$, and edges represent rational, single-node transitions between them. The construction follows a systematic, three-step process: generating the metanodes, identifying the edges, and typing the edges based on agent incentives.

\subsection{Metanodes: The Universe of Partitions and Layout}
The metanodes of the metagraph represent all possible unique partitions of the $\nvertices$ nodes in the original graph $\graph$.
The metanodes of the metagraph represent all $B_\nvertices$ possible unique partitions of the $\nvertices$ nodes in the original graph, where $B_\nvertices$ is the corresponding Bell number. We generate this complete set by systematically enumerating every possible grouping of the nodes.

For analytical clarity and structured visualization, two specific partitions are designated as anchors for the layout algorithm:
\begin{itemize}
    \item \textbf{The Grand Coalition}: The partition consisting of a single community that includes all $\nvertices$ nodes. This is represented by the blue node in Figure~\ref{fig:metagraph}.
    \item \textbf{The Singleton Partition}: The partition where each of the $\nvertices$ nodes forms its own individual community. This corresponds to the red node in the figure.
\end{itemize}
The positions of all other partitions (intermediate vertices) are determined by a custom layout algorithm that organizes them into layers based on their \textbf{move distance} from these two anchors. The move distance between two partitions is the minimum number of single-node moves required to transform one into the other. This organizes the metagraph logically, often revealing an evolutionary path from full integration (the grand coalition) to complete fragmentation (the singletons).

\subsection{Edges: Single-Node Transitions}
An edge exists between two metanodes in the metagraph (representing partitions $\partition_1$ and $\partition_2$) if and only if one can be transformed into the other by moving a single node from one community to another. This defines a neighborhood relationship on the space of partitions.
These connections are identified by systematically comparing all pairs of partitions and creating an edge if they differ by the relocation of exactly one node.

\subsection{Edge Types: Non-Frustrated vs. Frustrated Choices}
The character of each edge in the metagraph is determined by the nature of the choice available to the moving node. A move is evaluated based on the trade-off between gaining intra-community neighbors (``friends'') and avoiding intra-community non-neighbors (``strangers''). This trade-off is precisely quantified by the Familiarity Index, which serves as a decision threshold:
\begin{equation}
    \resolution^\star = \frac{\Delta d}{\Delta d + \Delta \hat{d}} \label{eq:familiarity_index_appendix}
\end{equation}
where $\Delta d$ represents the change in the number of friends and $\Delta \hat{d}$ is the change in the number of strangers for the moving node. The nature of the choice is defined as follows:

\begin{itemize}
    \item \textbf{Unidirectional Edge (Non-Frustrated Choice)}: A directed edge is drawn if the choice is unambiguous, which occurs when $\resolution^\star \notin (0, 1)$. This corresponds to a clear-cut move where a node can improve one metric (e.g., gain friends) without worsening the other (e.g., by not gaining any strangers). With no conflict, the preference is clear, resulting in a single-colored (purple) directed edge.

    \item \textbf{Bidirectional Edge (Frustrated Choice)}: A bidirectional edge is drawn if the choice is \textit{frustrated}, which occurs when $\resolution^\star \in (0, 1)$. This signifies a direct trade-off: the node must choose between a community with more friends but also more strangers, or one with fewer of both. This conflict is visualized with a two-colored edge, where the blue path points toward the partition offering more friends and the red path points toward the one offering fewer strangers. The value of $\resolution^\star$ represents the critical value of the resolution parameter $\resolution$ at which an agent is indifferent.
\end{itemize}

By constructing the metagraph in this manner, we create a complete map of the local decision landscape. It visualizes every possible single-step move, clearly distinguishing between straightforward improvements (non-frustrated choices) and the complex, trade-off-dependent decisions (frustrated choices) that are central to the community detection problem.

\section{Community Detection Algorithms}
\label{sec:methodsappendix}

For the sake of completeness, in this appendix we report various community detection algorithms. 
Some of the  algorithms, such as Spectral Clustering and the Louvain Method, do not accept an initial partitioning as an argument. In contrast, other methods, including the Leiden Algorithm,  
can leverage an initial configuration for community detection.
%
%
Regarding practical implementation aspects, the Spectral, Louvain, and Leiden algorithms have highly optimized implementations  in C, with the Python \texttt{igraph} interface serving as a wrapper.
%
%

One of the key parameters considered in our work is the resolution parameter $\resolution$.   \cite{bogomolnaia2002stability, traag2011cpm, traag2019leiden, avrachenkov2017cooperative} highlight that several community detection algorithms optimize an objective function with a resolution parameter.  Louvain and Leiden  are two examples of such algorithms, as further detailed in the sequel. 

\subsection{Spectral Clustering}
\label{sec:spectral}

Spectral clustering methods are known to be asymptotically optimal for a broad class of networks~\cite{von2007tutorial,abbe2015exact, chan1997optimality}, 
and the origins of spectral clustering are well-documented in the literature~\cite{cheeger1970lower,  shi2000normalized, ng2001spectral}. The technique gained widespread recognition in the machine learning community through the works of Shi and Malik~\cite{shi2000normalized} and Ng, Jordan, and Weiss~\cite{ng2001spectral}.

While spectral methods produce high-quality communities, their applicability to large-scale problems is limited by their computational complexity of $O(\nvertices^3)$. These methods allow for a predefined number of clusters but lack the flexibility to incorporate an arbitrary initial partition. Furthermore, spectral clustering does not include a resolution parameter, which limits its adaptability to different levels of granularity.%
\footnote{In this work, we use the \texttt{community\_leading\_eigenvector()} method from the \texttt{igraph} library to perform spectral clustering: \url{https://python.igraph.org/en/0.10.8/api/igraph.Graph.html\#community_leading_eigenvector}.}


\subsection{Louvain Algorithm}
\label{sec:method_louvain}

The Louvain algorithm belongs to the class of algorithms that maximize a modularity function. The seminal innovation of the Louvain method~\cite{blondel2024louvain}, named after the university where it was developed, lies in its \emph{aggregation phase}, which compresses the network by merging each detected community into a single supernode. In this phase, the edge weight between any two supernodes is recalculated as the sum of the weights of all edges connecting the corresponding communities in the original graph. This multilevel framework enhances scalability by iteratively reducing the graph’s complexity while also revealing hierarchical community structures at different scales. However, this approach may sometimes obscure internal connectivity flaws by merging nodes into communities that are not entirely cohesive.

The Louvain method  extracts non-overlapping communities from large networks and operates in two main phases: (1) moving nodes between communities to optimize modularity and (2) aggregating the graph by merging detected communities into single nodes. After each iteration, the newly formed communities become nodes, and the process repeats until convergence. Due to this layered approach, the method is also referred to as a multi-level method.

The time complexity of the Louvain algorithm is an open subject. While it is   cited as ``essentially linear in the number of links of the graph''~\cite{lancichinetti2009community}, to the best of our knowledge this complexity was empirically observed rather than theoretically proven.

  As a modularity-based approach, the Louvain method is widely used in practice due to its efficiency and empirical performance. 
Similar to the Spectral method, the Louvain method does not accept an initial partition as an input parameter. However, unlike Spectral clustering, the Louvain method includes a resolution parameter, allowing control over the granularity of the detected communities.

\subsection{Leiden Algorithm}
\label{sec:leiden}

The Leiden algorithm~\cite{traag2019leiden}, named after its originating university, improves upon the Louvain algorithm~\cite{blondel2024louvain} by addressing its most significant shortcoming: the tendency to produce arbitrarily badly connected and even disconnected communities (see Section~\ref{sec:resolution_limit}). By introducing a sophisticated refinement phase, Leiden not only guarantees that all communities are well-connected but also identifies higher-quality partitions more efficiently.

The full algorithm iterates through a three-phase process: (1) local moving of nodes, (2) refinement of the partition, and (3) aggregation of the network. This structure allows Leiden to explore the solution space more effectively while providing explicit guarantees about the quality of the resulting communities.

\paragraph{\textbf{Phase 1: Local Moving of Nodes}}
This phase is conceptually similar to Louvain's local optimization but employs a more efficient "fast local move" procedure. Instead of repeatedly iterating over all nodes, the Leiden algorithm maintains a queue of nodes to visit. Initially, all nodes are placed in the queue in a random order. When a node is moved to a new community, only its neighbors (who are not in the node's new community) are added to the queue if they are not already present. This targeted approach avoids redundant checks on nodes whose neighborhoods have not changed, making the local moving phase significantly faster than Louvain.

\paragraph{\textbf{Phase 2: Refinement of the Partition}}
This is the central innovation of the Leiden algorithm, designed to ensure that communities are internally well-connected. After the local moving phase yields a partition $\partition$, the refinement phase aims to find a refined partition, $\partition_{\text{refined}}$, where communities in $\partition$ may be split into multiple, more cohesive subcommunities. The refinement operates on each community from $\partition$ individually, merging nodes locally only within that community's boundaries. To better explore the partition space, the algorithm does not necessarily make the greediest move; instead, it may randomly select any merge that results in a non-negative quality improvement.

\paragraph{\textbf{Phase 3: Aggregation of the Network}}
In the final phase, the network is aggregated for the next level of optimization. Leiden uses the \textit{refined partition} $\partition_{\text{refined}}$ to construct the nodes of the new aggregate network. However, the initial community assignments for this new network are based on the original, \textit{non-refined partition} $\partition$. This design allows the algorithm to break apart poorly formed communities while preserving the larger-scale structure for the next iteration.

\paragraph{\textbf{Implementation and Scope in This Work}}
In our experiments, we use the \texttt{community\_leiden()} method from the \texttt{igraph} library, optimizing the Constant Potts Model (CPM) quality function. We found that setting the resolution parameter $\resolution$ to the graph's edge density provided accurate results.

While the full Leiden algorithm includes all three phases, in this work, we focus specifically on the local-move dynamics of Phase 1 to analyze its convergence properties as a hedonic game. We therefore treat Phases 2 and 3 as components of the complete algorithm but omit them from our direct analysis.

\subsection{One-Pass Improvement}
\label{sec:onepass}

The one-pass improvement strategy is a simple yet highly efficient approach that serves as a baseline for comparison. The algorithm functions as follows: given an initial partition, it identifies all nodes that would increase their number of neighbors within a community by moving to the community where they have the highest number of such neighbors. Once the nodes to be relocated and their target communities are determined, the algorithm moves all marked nodes in a single step.
This method, referred to as one-pass improvement~\cite{abbe2015exact}, has a time complexity of $O(n)$. Like state-of-the-art methods such as Leiden and hedonic games, it accepts an initial partition, making it suitable for community tracking. However, similar to the classical spectral method, it lacks a resolution parameter, limiting its flexibility in tuning community granularity.

\end{appendix}

\textbf{Acknowledgments. }   This work was partially supported by CNPq, CAPES and FAPERJ through grants JCNE E-26/201376/2021 and CNE E-26/204.268/2024.

\bibliographystyle{elsarticle-num} 
\bibliography{main.bib}

\begin{thebibliography}{10}
\expandafter\ifx\csname url\endcsname\relax
  \def\url#1{\texttt{#1}}\fi
\expandafter\ifx\csname urlprefix\endcsname\relax\def\urlprefix{URL }\fi
\expandafter\ifx\csname href\endcsname\relax
  \def\href#1#2{#2} \def\path#1{#1}\fi

\bibitem{shen2013community}
H.-W. Shen, Community structure: An introduction, in: Community Structure of Complex Networks, Springer, 2013, pp. 1--17.

\bibitem{boccaletti2006complex}
S.~Boccaletti, V.~Latora, Y.~Moreno, M.~Chavez, D.-U. Hwang, Complex networks: Structure and dynamics, Physics reports 424~(4-5) (2006) 175--308.

\bibitem{brush1967ising}
S.~G. Brush, History of the lenz-ising model, Reviews of modern physics 39~(4) (1967) 883.

\bibitem{traag2011cpm}
V.~A. Traag, P.~Van~Dooren, Y.~Nesterov, Narrow scope for resolution-limit-free community detection, Physical Review E 84~(1) (2011) 016114.

\bibitem{gehrlein2006condorcet}
W.~V. Gehrlein, Condorcet’s paradox, Springer, 2006.
\newblock \href {https://doi.org/10.1007/3-540-33799-7} {\path{doi:10.1007/3-540-33799-7}}.

\bibitem{traag2019leiden}
V.~A. Traag, L.~Waltman, N.~J. van Eck, {From Louvain to Leiden: guaranteeing well-connected communities}, Scientific reports 9~(1) (2019) 1--12.

\bibitem{avrachenkov2017cooperative}
K.~Avrachenkov, A.~Kondratev, V.~Mazalov, D.~Rubanov, Network partitioning algorithms as cooperative games, Comp. Social Networks 5 (2018).
\newblock \href {https://doi.org/10.1186/s40649-018-0059-5} {\path{doi:10.1186/s40649-018-0059-5}}.

\bibitem{brandt2016handbook}
F.~Brandt, V.~Conitzer, U.~Endriss, J.~Lang, A.~D. Procaccia, Handbook of computational social choice, Cambridge University Press, 2016.

\bibitem{dreze1980hedonic}
J.~H. Dreze, J.~Greenberg, Hedonic coalitions: Optimality and stability, Econometrica: Journal of the Econometric Society (1980) 987--1003.

\bibitem{hubert1985comparing}
L.~Hubert, P.~Arabie, Comparing partitions, Journal of classification 2~(1) (1985) 193--218.

\bibitem{dorogovtsev2003evolution}
S.~N. Dorogovtsev, J.~F. Mendes, Evolution of networks: From biological nets to the Internet and WWW, Oxford university press, 2003.

\bibitem{girvan2002community}
M.~Girvan, M.~E. Newman, Community structure in social and biological networks, Proceedings of the national academy of sciences 99~(12) (2002) 7821--7826.

\bibitem{papadimitriou2007complexity}
C.~H. Papadimitriou, {The complexity of finding Nash equilibria}, Algorithmic game theory 2 (2007) 30.

\bibitem{massoulie2014community}
L.~Massouli{\'e}, Community detection thresholds and the weak {R}amanujan property, in: ACM Symposium on Theory of Computing (STOC), 2014, pp. 694--703.

\bibitem{young2018sbm}
J.-G. Young, G.~St-Onge, P.~Desrosiers, L.~J.~Dubé, Universality of the stochastic block model, Physical Review E 98 (09 2018).

\bibitem{xu2020optimal}
M.~Xu, V.~Jog, P.-L. Loh, et~al., Optimal rates for community estimation in the weighted stochastic block model, The Annals of Statistics 48~(1) (2020) 183--204.

\bibitem{blondel2024louvain}
V.~Blondel, J.-L. Guillaume, R.~Lambiotte, \href{https://dx.doi.org/10.1088/1742-5468/ad6139}{Fast unfolding of communities in large networks: 15 years later}, Journal of Statistical Mechanics: Theory and Experiment 2024~(10) (2024) 10R001.
\newblock \href {https://doi.org/10.1088/1742-5468/ad6139} {\path{doi:10.1088/1742-5468/ad6139}}.
\newline\urlprefix\url{https://dx.doi.org/10.1088/1742-5468/ad6139}

\bibitem{abbe2015exact}
E.~Abbe, A.~S. Bandeira, G.~Hall, Exact recovery in the stochastic block model, IEEE Transactions on Information Theory 62~(1) (2015) 471--487.

\bibitem{lancichinetti2012consensus}
A.~Lancichinetti, S.~Fortunato, Consensus clustering in complex networks, Scientific reports 2 (2012) 336.

\bibitem{hajek2019community}
B.~Hajek, S.~Sankagiri, Community recovery in a preferential attachment graph, IEEE Transactions on Information Theory 65~(11) (2019) 6853--6874.

\bibitem{hajek2018recovering}
B.~Hajek, et~al., Recovering a hidden community beyond the {K}esten--{S}tigum threshold in ${O}(|{E}|\log|{V}|)$ time, Journal of Applied Probability 55~(2) (2018) 325--352.

\bibitem{saoud2019networks}
B.~Saoud, Networks clustering with bee colony, Artificial Intelligence Review 52~(2) (2019) 1297--1309.

\bibitem{saoud2023nature}
B.~Saoud, A nature-inspired algorithm to find community structure in complex networks, Journal of applied research and technology 21~(3) (2023) 376--383.

\bibitem{saoud2024community}
B.~Saoud, Community structure detection in networks based on tabu search, Journal of Control and Decision 11~(2) (2024) 222--232.

\bibitem{mazalov2018comparing}
V.~V. Mazalov, Comparing game-theoretic and maximum likelihood approaches for network partitioning, in: Trans. Comp. Intelligence, Springer, 2018, pp. 37--46.

\bibitem{prokhorenkova2019community}
L.~Prokhorenkova, A.~Tikhonov, Community detection through likelihood optimization, in: The Web Conference, 2019.

\bibitem{RBER2006}
J.~Reichardt, S.~Bornholdt, \href{https://link.aps.org/doi/10.1103/PhysRevE.74.016110}{Statistical mechanics of community detection}, Phys. Rev. E 74 (2006) 016110.
\newblock \href {https://doi.org/10.1103/PhysRevE.74.016110} {\path{doi:10.1103/PhysRevE.74.016110}}.
\newline\urlprefix\url{https://link.aps.org/doi/10.1103/PhysRevE.74.016110}

\bibitem{newman2004modularity}
M.~E. Newman, M.~Girvan, Finding and evaluating community structure in networks, Physical review E 69~(2) (2004) 026113.

\bibitem{gonzalez2010introductory}
J.~Gonz{\'a}lez-D{\i}az, I.~Garc{\i}a-Jurado, M.~G. Fiestras-Janeiro, An introductory course on mathematical game theory, Graduate studies in mathematics 115 (2010).

\bibitem{perez1994cooperative}
J.~D. P{\'e}rez-Castrillo, Cooperative outcomes through noncooperative games, Games and Economic Behavior 7~(3) (1994) 428--440.

\bibitem{banerjee2001core}
S.~Banerjee, H.~Konishi, T.~S{\"o}nmez, Core in a simple coalition formation game, Social Choice and Welfare 18~(1) (2001) 135--153.

\bibitem{cechlarova2001stability}
K.~Cechl{\'a}rov{\'a}, A.~Romero-Medina, et~al., Stability in coalition formation games, International Journal of Game Theory 29~(4) (2001) 487--494.

\bibitem{bogomolnaia2002stability}
A.~Bogomolnaia, M.~O. Jackson, The stability of hedonic coalition structures, Games and Economic Behavior 38~(2) (2002) 201--230.

\bibitem{gairing2010computing}
M.~Gairing, R.~Savani, Computing stable outcomes in hedonic games, in: Int. Symposium on Algorithmic Game Theory, Springer, 2010, pp. 174--185.

\bibitem{sheikholeslami2016egonet}
F.~Sheikholeslami, et~al., Egonet tensor decomposition for community identification, in: GlobalSIP, IEEE, 2016, pp. 341--345.

\bibitem{monderer1996potential}
D.~Monderer, L.~S. Shapley, Potential games, Games and economic behavior 14~(1) (1996) 124--143.

\bibitem{HOLLAND1983SBM}
P.~W. Holland, K.~B. Laskey, S.~Leinhardt, \href{https://www.sciencedirect.com/science/article/pii/0378873383900217}{Stochastic blockmodels: First steps}, Social Networks 5~(2) (1983) 109--137.
\newblock \href {https://doi.org/https://doi.org/10.1016/0378-8733(83)90021-7} {\path{doi:https://doi.org/10.1016/0378-8733(83)90021-7}}.
\newline\urlprefix\url{https://www.sciencedirect.com/science/article/pii/0378873383900217}

\bibitem{greene2010tracking}
D.~Greene, D.~Doyle, P.~Cunningham, Tracking the evolution of communities in dynamic social networks, in: Intl. Conference on Advances in Social Networks Analysis and Mining (ASONAM), IEEE, 2010.

\bibitem{avrachenkov2023recovering}
K.~Avrachenkov, M.~Dreveton, L.~Leskel{\"a}, Recovering static and time-varying communities using persistent edges, IEEE Transactions on Network Science and Engineering 11~(2) (2023) 2087--2099.

\bibitem{lei2015consistency}
J.~Lei, A.~Rinaldo, Consistency of spectral clustering in stochastic block models, The Annals of Statistics (2015) 215--237.

\bibitem{chen2010game}
W.~Chen, Z.~Liu, X.~Sun, Y.~Wang, A game-theoretic framework to identify overlapping communities in social networks, Data Mining and Knowledge Discovery 21~(2) (2010) 224--240.

\bibitem{durand2018distributed}
S.~Durand, F.~Garin, B.~Gaujal, Distributed best response algorithms for potential games, in: 2018 European Control Conference (ECC), IEEE, 2018, pp. 2170--2175.

\bibitem{kleer2019topological}
P.~Kleer, G.~Sch{\"a}fer, Topological price of anarchy bounds for clustering games on networks, in: International Conference on Web and Internet Economics, Springer, 2019, pp. 241--255.

\bibitem{amiet2019pure}
B.~Amiet, A.~Collevecchio, M.~Scarsini, {Pure Nash equilibria and best-response dynamics in random games}, arXiv preprint arXiv:1905.10758 (2019).

\bibitem{von2007tutorial}
U.~Von~Luxburg, A tutorial on spectral clustering, Statistics and computing 17~(4) (2007) 395--416.

\bibitem{chan1997optimality}
T.~F. Chan, P.~Ciarlet~Jr, W.~Szeto, On the optimality of the median cut spectral bisection graph partitioning method, SIAM Journal on Scientific Computing 18~(3) (1997) 943--948.

\bibitem{cheeger1970lower}
J.~Cheeger, {A lower bound for the smallest eigenvalue of the Laplacian}, Problems in analysis 625~(195-199) (1970) 110.

\bibitem{shi2000normalized}
J.~Shi, J.~Malik, Normalized cuts and image segmentation, IEEE Transactions on pattern analysis and machine intelligence 22~(8) (2000) 888--905.

\bibitem{ng2001spectral}
A.~Ng, M.~Jordan, Y.~Weiss, On spectral clustering: Analysis and an algorithm, Advances in neural information processing systems 14 (2001).

\bibitem{lancichinetti2009community}
A.~Lancichinetti, S.~Fortunato, Community detection algorithms: a comparative analysis, Physical Review E—Statistical, Nonlinear, and Soft Matter Physics 80~(5) (2009) 056117.

\end{thebibliography}

\end{document}